\definecolor{Green}{rgb}{0.13, 0.65, 0.3}
\definecolor{Amber}{rgb}{0.3, 0.5, 1.0}
\newif\ifnobug
\newif\ifspacehack
\newtheorem{theorem}{Theorem}[section]
\newtheorem{lemma}[theorem]{Lemma}
\newtheorem{definition}{Definition}
\definecolor{Green}{rgb}{0.13, 0.65, 0.3}
\newcommand{\calF}{{\mathcal{F}}}
\newcommand{\calK}{{\mathcal{K}}}
\newcommand{\calT}{{\mathcal{T}}}
\newcommand{\calP}{{\mathcal{P}}}
\newcommand{\calN}{{\mathcal{N}}}
\newcommand{\calV}{\mathcal{V}}
\newcommand{\e}{\mathbf{e}}
\newcommand{\Reg}{\text{\rm Reg}}
\newcommand{\one}{\boldsymbol{1}}
\newcommand{\bias}{\textsc{Bias-1}\xspace}
\newcommand{\biastwo}{\textsc{Bias-2}\xspace}
\newcommand{\biasthree}{\textsc{Bias-3}\xspace}
\newcommand{\biasfour}{\textsc{Bias-4}\xspace}
\newcommand{\regterm}{\textsc{Reg-Term}\xspace}
\newcommand{\ramp}{\text{ramp}}
\newcommand{\Pht}{\widehat{P}}
\newcommand{\Pbar}{\overline{P}}
\newcommand{\bonus}{b}
\newcommand{\bonusQ}{B}
\newcommand{\Bonus}{\textsc{Bonus}}
\newcommand{\bonusVec}{\widehat{\Lambda}}
\newcommand{\GR}{\textsc{GeometricResampling}\xspace}
\newcommand{\mix}{\text{mix}}
\newcommand{\explore}{\delta_e}
\newcommand{\cov}{\Sigma}
\newcommand{\hatcov}{\widehat{\Sigma}^+}
\newcommand{\hatcovk}{\widehat{\Sigma}^{+(m)}}
\newcommand{\hattheta}{\widehat{\theta}}
\newcommand{\hatbonusQ}{\widehat{\bonusQ}}
\newcommand{\barell}{\overline{\ell}}
\newcommand{\barQ}{\overline{Q}}
\newcommand{\bartheta}{\overline{\theta}}
\newcommand{\piexp}{\pi_{0}}
\DeclareMathOperator*{\argmin}{argmin}
\DeclareMathOperator*{\argmax}{argmax}
\newcommand{\term}{\textbf{term}}
\newcommand{\R}{R}
\newcommand{\negative}{\textsc{term}_1}
\newcommand{\positive}{\textsc{term}_2}
\newcommand{\loss}{Q}
\newcommand{\bns}{B}
\newcommand{\gap}{\Delta}
\newcommand{\conf}{\textit{conf}}
\newcommand{\de}{\mathrm{d}}
\newcommand{\field}[1]{\mathbb{#1}}
\newcommand{\fR}{\field{R}}
\newcommand{\E}{\field{E}}
\newcommand{\Var}{\text{Var}}
\newcommand{\compuob}{\textsc{Comp-UOB}\xspace}
\newcommand{\complob}{\textsc{Comp-LOB}\xspace}
\newcommand{\inner}[1]{ \left\langle {#1} \right\rangle }
\newcommand{\inn}[1]{ \langle {#1} \rangle }
\newcommand{\norm}[1]{\left\|{#1}\right\|}
\newcommand{\pcov}{\pi_{\text{cov}}}
\newcommand{\findpcov}{\textsc{PolicyCover}\xspace}
\newcommand{\textcov}{\text{cov}}
\newcommand{\wh}{\widehat}
\newcommand{\upconf}{\overline{\QQ}}
\newcommand{\lowconf}{\underline{\QQ}}
\newcommand{\ind}{\mathbbm{1}}
\newcommand{\Qht}{\wh{Q}}
\newcommand{\Vht}{\wh{V}}
\newcommand{\Lyr}{H}
\newcommand{\Q}{{Q}}
\newcommand{\QQ}{{q}}
\newcommand{\qstar}{{q^\star}}
\newcommand{\pistar}{{\pi^{\star}}}
\newcommand{\Ut}{{\upconf_t}}
\newcommand{\Lt}{{\lowconf_t}}
\newtheorem{assumption}{Assumption}
\newcommand{\order}{\ensuremath{\mathcal{O}}}
\newcommand{\otil}{\ensuremath{\widetilde{\mathcal{O}}}}
\newcommand{\oracle}{\textsc{Simulator}}
\newcommand{\xtilde}{\tilde{x}}
\newcommand{\Phat}{\widehat{P}}
\newcommand{\obj}{\textsc{Optimize}\xspace}
\newcommand{\len}{W}
\newcommand{\err}{\varepsilon}
\newcommand{\pref}[1]{\prettyref{#1}}
\newcommand{\savehyperref}[2]{\texorpdfstring{\hyperref[#1]{#2}}{#2}}
\title{Policy Optimization in Adversarial MDPs: \\ Improved Exploration via Dilated Bonuses}
\author{%
  Haipeng Luo\thanks{Equal contribution.} \\
  \texttt{haipengl@usc.edu} \\ 
  \and
  Chen-Yu Wei\footnotemark[1] \\
  \texttt{chenyu.wei@usc.edu} \\
  \and
  Chung-Wei Lee \\
  \texttt{leechung@usc.edu} \\
  \and
  University of Southern California
}
\date{}
\begin{document}

\maketitle

\begin{abstract}
  Policy optimization is a widely-used method in reinforcement learning. Due to its local-search nature, however, theoretical guarantees on global optimality often rely on extra assumptions on the Markov Decision Processes (MDPs) that bypass the challenge of global exploration. To eliminate the need of such assumptions, in this work, we develop a general solution that adds \textit{dilated bonuses} to the policy update to facilitate global exploration. To showcase the power and generality of this technique, we apply it to several episodic MDP settings with adversarial losses and bandit feedback, improving and generalizing the state-of-the-art. Specifically, in the tabular case, we obtain $\otil(\sqrt{T})$ regret where $T$ is the number of episodes, improving the $\otil({T}^{\nicefrac{2}{3}})$ regret bound by~\cite{shani2020optimistic}. When the number of states is infinite, under the assumption that the state-action values are linear in some low-dimensional features, we obtain $\otil({T}^{\nicefrac{2}{3}})$ regret with the help of a simulator, matching the result of~\cite{neu2020online} while importantly removing the need of an exploratory policy that their algorithm requires. 
\ifnobug  
  When a simulator is unavailable, we further consider a linear MDP setting and obtain $\otil({T}^{\nicefrac{14}{15}})$ regret, which is the first result for linear MDPs with adversarial losses and bandit feedback. 
\else
  To our knowledge, this is the first algorithm with sublinear regret for linear function approximation with adversarial losses, bandit feedback, and no exploratory assumptions. 
Finally, we also discuss how to further improve the regret or remove the need of a simulator using dilated bonuses, when an exploratory policy is available.
\fi  
\end{abstract}

\section{Introduction}\label{sec:intro}
Policy optimization methods are among the most widely-used methods in reinforcement learning. Its empirical success has been demonstrated in various domains such as computer games \citep{schulman2017proximal} and robotics \citep{levine2013guided}. However, due to its local-search nature, global optimality guarantees of policy optimization often rely on unrealistic assumptions to ensure global exploration (see e.g.,~\citep{abbasi2019politex, agarwal2020optimality, neu2020online, wei2021learning}), making it theoretically less appealing compared to other methods.

Motivated by this issue, a line of recent works~\citep{cai2020provably, shani2020optimistic, agarwal2020pc, zanette2021cautiously} equip policy optimization with global exploration by adding exploration bonuses to the update, and prove favorable guarantees even without making extra exploratory assumptions.
Moreover, they all demonstrate some robustness aspect of policy optimization (such as being able to handle adversarial losses or a certain degree of model mis-specification).
Despite these important progresses, however, many limitations still exist, including worse regret rates comparing to the best value-based or model-based approaches~\citep{shani2020optimistic, agarwal2020pc, zanette2021cautiously}, or requiring full-information feedback on the entire loss function (as opposed to the more realistic bandit feedback)~\citep{cai2020provably}.

To address these issues, in this work, we propose a new type of exploration bonuses called \emph{dilated bonuses}, which satisfies a certain \emph{dilated Bellman equation}
and provably leads to improved exploration compared to existing works (\pref{sec: general recipe}).
We apply this general idea to advance the state-of-the-art of policy optimization for learning finite-horizon episodic MDPs with \emph{adversarial losses and bandit feedback}.
More specifically, our main results are:

\begin{itemize}[leftmargin=1em]
  \setlength\itemsep{1em}
\item   
First, in the tabular setting, addressing the main open question left in~\citep{shani2020optimistic}, we improve their $\otil(T^{\nicefrac{2}{3}})$ regret to the optimal $\otil(\sqrt{T})$ regret.
This shows that policy optimization, which performs local optimization, is as capable as 
other occupancy-measure-based global optimization algorithms~\citep{jin2019learning, lee2020bias} in terms of global exploration.
Moreover, our algorithm is computationally more efficient than those global methods since they require solving some convex optimization in each episode. (\pref{sec: tabular})

\item
Second, to further deal with large-scale problems, we consider a linear function approximation setting where the state-action values are linear in some known low-dimensional features and also a simulator is available, the same setting considered by~\citep{neu2020online}.
\ifnobug
We obtain the same $\otil(T^{\nicefrac{2}{3}})$ regret while importantly removing their exploratory assumption. 
\else
We obtain the same $\otil(T^{\nicefrac{2}{3}})$ regret while importantly removing the need of an exploratory policy that their algorithm requires.
Unlike the tabular setting (where we improve existing regret rates of policy optimization), note that researchers have not been able to show \textit{any} sublinear regret for policy optimization without exploratory assumptions for this problem, which shows the critical role of our proposed dilated bonuses.
In fact, there are simply no existing algorithms with sublinear regret \textit{at all} for this setting, be it policy-optimization-type or not.
This shows the advantage of policy optimization over other approaches, when combined with our dilated bonuses.
\fi
(\pref{sec: linear Q})

\item
\ifnobug
Finally, to remove the need of a sampling oracle, we further consider linear MDPs, a special case where the transition kernel is also linear in the features.
To our knowledge, the only existing works that consider adversarial losses in this setup are~\citep{cai2020provably}, which obtains $\otil(\sqrt{T})$ regret but requires full-information feedback on the loss functions,
and~\citep{neu2021online} (an updated version of~\citep{neu2020online}), which obtains $\otil(\sqrt{T})$ regret under bandit feedback but requires perfect knowledge of the transition as well as an exploratory assumption.
We propose the first algorithm for the most challenging setting with bandit feedback and unknown transition, which achieves $\otil(T^{\nicefrac{14}{15}})$ regret without any exploratory assumption. 
\else
Finally, while the main focus of our work is to show how dilated bonuses are able to provide global exploration, we also discuss their roles in improving the regret rate to $\otil(\sqrt{T})$ in the linear setting above or removing the need of a simulator for the special case of linear MDPs (with $\otil(T^{\nicefrac{6}{7}})$ regret), when an exploratory policy is available.
\fi
(\pref{sec: linear MDP})
\end{itemize}

\ifnobug
We emphasize that unlike the tabular setting (where we improve existing regret rates of policy optimization),
in the two adversarial linear function approximation settings with bandit feedback that we consider, researchers have not been able to show \textit{any} sublinear regret for policy optimization without exploratory assumptions before our work, which shows the critical role of our proposed dilated bonuses.
In fact, there are simply no existing algorithms with sublinear regret \textit{at all} for these two settings, be it policy-optimization-type or not.
This shows the advantage of policy optimization over other approaches, when combined with our dilated bonuses.
\fi

\paragraph{Related work.}
In the tabular setting, except for~\citep{shani2020optimistic}, most algorithms apply the occupancy-measure-based framework to handle adversarial losses (e.g.,~\citep{pmlr-v97-rosenberg19a, jin2019learning, chen2020minimax, chen2021finding}), which as mentioned is computationally expensive.
For stochastic losses, there are many more different approaches such as model-based ones~\citep{jaksch2010near, dann2015sample, azar2017minimax, fruit2018efficient, zanette2019tighter} and value-based ones~\citep{jin2018q, dong2019q}.

Theoretical studies for linear function approximation have gained increasing interest recently~\citep{yang2020reinforcement,  zanette2020frequentist, jin2020provably}. 
Most of them study stochastic/stationary losses, with the exception of~\citep{cai2020provably, neu2020online, neu2021online}.
Our algorithm for the linear MDP setting bears some similarity to those of \citep{agarwal2020pc, zanette2021cautiously} which consider stationary losses.
However, in each episode, their algorithms first execute an exploratory policy (from a  \emph{policy cover}), and then switch to the policy suggested by the policy optimization algorithm, which inevitably leads to linear regret when facing adversarial losses.  

\section{Problem Setting}\label{sec: settings}
We consider an MDP specified by a state space $X$ (possibly infinite), a finite action space $A$, and a transition function $P$ with $P(\cdot|x,a)$ specifying the distribution of the next state after taking action $a$ in state $x$. In particular, we focus on the \emph{finite-horizon episodic setting} in which $X$ admits a layer structure and can be partitioned into $X_0,X_1,\dots,X_H$ for some fixed parameter $H$, where $X_0$ contains only the initial state $x_0$, $X_H$ contains only the terminal state $x_H$, and for any $x\in X_h$, $h=0,\dots,H-1$, $P(\cdot|x,a)$ is supported on $X_{h+1}$ for all $a\in A$ (that is, transition is only possible from $X_h$ to $X_{h+1}$). An {episode} refers to a trajectory that starts from $x_0$ and ends at $x_H$ following some series of actions and the transition dynamic.  The MDP may be assigned with a loss function $\ell: X\times A\rightarrow [0,1]$ so that $\ell(x,a)$ specifies the loss suffered when selecting action $a$ in state $x$. 

A policy $\pi$ for the MDP is a mapping $X\rightarrow \Delta(A)$, where $\Delta(A)$ denotes the set of distributions over $A$ and $\pi(a|x)$ is the probability of choosing action $a$ in state $x$. Given a loss function $\ell$ and a policy $\pi$, the expected total loss of $\pi$ is given by 
$
    V^{\pi}(x_0; \ell) = \E\big[\sum_{h=0}^{H-1} \ell(x_h, a_h) ~\big|~ a_h \sim \pi_t(\cdot|x_h), x_{h+1}\sim P(\cdot|x_h, a_h)\big]. 
$
It can also be defined via the Bellman equation involving the \emph{state value function} $V^{\pi}(x;\ell)$ and the \emph{state-action value function} $Q^\pi(x,a;\ell)$ (a.k.a. $Q$-function) defined as below: 
$V(x_H;\ell)=0$,
\begin{align*}
    Q^\pi(x,a;\ell)=\ell(x,a)+\E_{x'\sim P(\cdot|x,a)}\left[V^\pi(x';\ell)\right],~\text{and}~V^\pi(x;\ell)=\E_{a\sim \pi(\cdot|x)}\left[Q^\pi(x,a;\ell)\right].
\end{align*}

We study online learning in such a finite-horizon MDP with \emph{unknown transition}, \emph{bandit feedback}, and \emph{adversarial losses}. The learning proceeds through $T$ episodes. Ahead of time, an adversary arbitrarily decides $T$ loss functions $\ell_1, \ldots, \ell_T$, without revealing them to the learner.  
Then in each episode $t$, the learner decides a policy $\pi_t$ based on all information received prior to this episode, executes $\pi_t$ starting from the initial state $x_0$,
generates and observes a trajectory $\{(x_{t,h}, a_{t,h}, \ell_t(x_{t,h}, a_{t,h}))\}_{h=0}^{H-1}$. 
Importantly, the learner does not observe any other information about $\ell_t$ (a.k.a. bandit feedback).\footnote{Full-information feedback, on the other hand, refers to the easier setting where the entire loss function $\ell_t$ is revealed to the learner at the end of episode $t$.}
The goal of the learner is to minimize the regret, defined as
\[
\Reg=\sum_{t=1}^TV^{\pi_t}_t(x_0)-\min_{\pi} \sum_{t=1}^T V^\pi_t(x_0),
\]
where we use $V_t^\pi(x)$ as a shorthand for
$V^\pi(x;\ell_t)$ (and similarly $Q_t^\pi(x,a)$ as a shorthand for $Q^\pi(x,a;\ell_t)$).
Without further structures, the best existing regret bound is $\otil(H|X|\sqrt{|A|T})$~\citep{jin2019learning}, with an extra $\sqrt{X}$ factor compared to the best existing lower bound~\citep{jin2018q}.

\paragraph{Occupancy measures. }
For a policy $\pi$ and a state $x$, we define $q^\pi(x)$ to be the probability (or probability measure when $|X|$ is infinite) of visiting state $x$ within an episode when following $\pi$. 
When it is necessary to highlight the dependence on the transition, we write it as $q^{P, \pi}(x)$.
Further define $q^{\pi}(x,a)=q^{\pi}(x)\pi(a|x)$ and $\QQ_t(x,a)=q^{\pi_t}(x,a)$.
Finally, we use $\qstar$ as a shorthand for $q^{\pi^\star}$ where $\pi^\star \in\argmin_{\pi} \sum_{t=1}^T V^\pi_t(x_0)$ is one of the optimal policies.

Note that by definition, we have $V^{\pi}(x_0; \ell) = \sum_{x,a} q^{\pi}(x,a)\ell(x,a)$.
In fact, we will overload the notation and let $V^{\pi}(x_0; b) = \sum_{x,a} q^{\pi}(x,a)b(x,a)$ for any function $b: X\times A \rightarrow \fR$ (even though it might not correspond to a real loss function).


\paragraph{Other notations. }
We denote by $\E_t[\cdot]$ and $\Var_t[\cdot]$ the expectation and variance conditioned on everything prior to episode $t$.
For a matrix $\cov$ and a vector $z$ (of appropriate dimension), $\|z\|_\cov$ denotes the quadratic norm $\sqrt{z^\top \cov z}$.
The notation $\otil(\cdot)$ hides all logarithmic factors. 





\section{Dilated Exploration Bonuses} 
\label{sec: general recipe}

In this section, we start with a general discussion on designing exploration bonuses (not specific to policy optimization), and then introduce our new dilated bonuses for policy optimization.
For simplicity, the exposition in this section assumes a finite state space, but the idea generalizes to an infinite state space.

When analyzing the regret of an algorithm, very often we run into the following form:
\begin{align}
    \Reg = \sum_{t=1}^T V^{\pi_t}_t(x_0) - \sum_{t=1}^T V^{\pistar}_t(x_0) 
    &\leq o(T) + 
    \sum_{t=1}^T \sum_{x, a} \qstar(x, a)\bonus_{t}(x,a)
    = o(T) + 
    \sum_{t=1}^T V^{\pistar}(x_0; \bonus_t),
    \label{eq: extra}
\end{align}
for some function $\bonus_{t}(x,a)$ usually related to some estimation error or variance  that can be prohibitively large.
For example, in policy optimization, the algorithm performs local search in each state essentially using a multi-armed bandit algorithm and treating $Q^{\pi_t}_t(x,a)$ as the loss of action $a$ in state $x$. 
Since $Q^{\pi_t}_t(x,a)$ is unknown, however, the algorithm has to use some estimator of $Q^{\pi_t}_t(x,a)$ instead, whose bias and variance both contribute to the $b_t$ function.
Usually, $b_t(x,a)$ is large for a rarely-visited state-action pair $(x,a)$ and is inversely related to $q_t(x,a)$, which is exactly why most analysis relies on the assumption that some \emph{distribution mismatch coefficient} related to $\nicefrac{\qstar(x,a)}{\QQ_t(x,a)}$ is bounded (see e.g.,~\citep{agarwal2020optimality, wei2020model}).

On the other hand, an important observation is that while $V^{\pistar}(x_0; \bonus_t)$ can be prohibitively large, its counterpart with respect to the learner's policy $V^{\pi_t}(x_0; \bonus_t)$ is usually nicely bounded.
For example, if $b_t(x,a)$ is inversely related to $\QQ_t(x,a)$ as mentioned, then $V^{\pi_t}(x_0; \bonus_t) = \sum_{x,a}\QQ_t(x,a)\bonus_{t}(x,a)$ is small no matter how small $\QQ_t(x,a)$ could be for some $(x,a)$.
This observation, together with the linearity property $V^{\pi}(x_0; \ell_t-\bonus_t) = V^{\pi}(x_0; \ell_t) - V^{\pi}(x_0; \bonus_t)$,
suggests that we treat $\ell_t-\bonus_t$ as the loss function of the problem, or in other words, add a (negative) bonus to each state-action pair, which intuitively encourages exploration due to underestimation.
Indeed, assuming for a moment that \pref{eq: extra} still roughly holds even if we treat $\ell_t-\bonus_t$ as the loss function:
\begin{align}
    &\sum_{t=1}^T V^{\pi_t}(x_0; \ell_t-\bonus_t) - \sum_{t=1}^T V^{\pistar}(x_0; \ell_t-\bonus_t)  \lesssim  o(T) + 
    \sum_{t=1}^T V^{\pistar}(x_0; \bonus_t).
    \label{eq: app regret bound}
\end{align}
Then by linearity and rearranging, we have
\begin{align}
    \Reg = \sum_{t=1}^T V^{\pi_t}_t(x_0) - \sum_{t=1}^T V^{\pistar}_t(x_0)  &\lesssim o(T) + 
    \sum_{t=1}^T V^{\pi_t}(x_0; \bonus_t). \label{eq: desired bound 2}
\end{align}
Due to the switch from $\pistar$ to $\pi_t$ in the last term compared to \pref{eq: extra}, this is usually enough to prove a desirable regret bound without making extra assumptions.

The caveat of this discussion is the assumption of \pref{eq: app regret bound}.
Indeed, after adding the bonuses, which itself contributes some more bias and variance, one should expect that $b_t$ on the right-hand side of \pref{eq: app regret bound} becomes something larger, breaking the desired cancellation effect to achieve \pref{eq: desired bound 2}.
Indeed, the definition of $b_t$ essentially becomes circular in this sense.

\paragraph{Dilated Bonuses for Policy Optimization}
To address this issue, we take a closer look at the policy optimization algorithm specifically.
As mentioned, policy optimization decomposes the problem into individual multi-armed bandit problems in each state and then performs local optimization.
This is based on the well-known performance difference lemma \citep{kakade2002approximately}:
\begin{align*}
\Reg = \sum_{x} \qstar(x) \sum_{t=1}^T \sum_a\Big(\pi_t(a|x) - \pistar(a|x)\Big)Q^{\pi_t}_t(x,a),
\end{align*}
showing that in each state $x$, the learner is facing a bandit problem with $Q^{\pi_t}_t(x,a)$ being the loss for action $a$.
Correspondingly, incorporating the bonuses $b_t$ for policy optimization means subtracting the bonus $Q^{\pi_t}(x,a;\bonus_t)$ from $Q^{\pi_t}_t(x,a)$ for each action $a$ in each state $x$.
Recall that $Q^{\pi_t}(x,a;\bonus_t)$ satisfies the Bellman equation $Q^{\pi_t}(x,a;\bonus_t) = \bonus_t(x,a) + \E_{x'\sim P(\cdot|x,a)}\E_{a'\sim \pi_t(\cdot|x')}\left[\bonusQ_t(x',a')\right]$.
To resolve the issue mentioned earlier, we propose to replace this bonus function $Q^{\pi_t}(x,a;\bonus_t)$ with its \emph{dilated} version $B_t(s,a)$ satisfying the following \emph{dilated Bellman equation}:
\begin{align}
    \bonusQ_t(x,a) = \bonus_t(x,a) + \left(1+\frac{1}{H}\right)\E_{x'\sim P(\cdot|x,a)}\E_{a'\sim \pi_t(\cdot|x')}\left[\bonusQ_t(x',a')\right] \label{eq: dilated}
\end{align}
(with $\bonusQ_t(x_H,a) = 0$ for all $a$).
The only difference compared to the standard Bellman equation is the extra $(1+\frac{1}{H})$ factor, which slightly increases the weight for deeper layers and thus intuitively induces more exploration for those layers.
Due to the extra bonus compared to $Q^{\pi_t}(x,a;\bonus_t)$, the regret bound also increases accordingly. 
In all our applications, this extra amount of regret turns out to be of the form $\frac{1}{H}\sum_{t=1}^T \sum_{x,a} \qstar(x)\pi_t(a|x)\bonusQ_t(x,a)$, leading to
\begin{align}
    &\sum_{x} \qstar(x)\sum_{t=1}^T  \sum_a \Big(\pi_t(a|x) - \pistar(a|x)\Big)\Big(Q^{\pi_t}_t(x,a) - \bonusQ_t(x,a) \Big) \nonumber \\
    &\qquad \qquad \leq o(T) + 
    \sum_{t=1}^T V^{\pistar}(x_0; \bonus_t) + \frac{1}{H}\sum_{t=1}^T \sum_{x,a} \qstar(x)\pi_t(a|x)\bonusQ_t(x,a). \label{eq:modified bound} 
\end{align}
With some direct calculation, one can show that this is enough to show a regret bound that is only a constant factor larger than the desired bound in \pref{eq: desired bound 2}!
This is summarized in the following lemma.
\begin{lemma}\label{lem:key}
If \pref{eq:modified bound} holds with $B_t$ defined in \pref{eq: dilated},
then $\Reg \leq o(T) + 3\sum_{t=1}^T V^{\pi_t}(x_0; \bonus_t)$.  
\end{lemma}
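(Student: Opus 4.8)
The plan is to start from the performance difference lemma stated just above the theorem, namely $\Reg = \sum_{x}\qstar(x)\sum_{t=1}^T\sum_a(\pi_t(a|x)-\pistar(a|x))Q^{\pi_t}_t(x,a)$, and to insert the dilated bonus by writing $Q^{\pi_t}_t = (Q^{\pi_t}_t - \bonusQ_t) + \bonusQ_t$. This splits $\Reg$ into exactly the left-hand side of \pref{eq:modified bound} plus the correction term $\sum_t\sum_{x,a}\qstar(x)(\pi_t(a|x)-\pistar(a|x))\bonusQ_t(x,a)$. I would then apply the hypothesis \pref{eq:modified bound}, and, using $\qstar(x)\pistar(a|x)=\qstar(x,a)$, collect all $\bonusQ_t$ contributions into the single expression $(1+\tfrac1H)S_t - S_t^\star$, where $S_t := \sum_{x,a}\qstar(x)\pi_t(a|x)\bonusQ_t(x,a)$ and $S_t^\star := \sum_{x,a}\qstar(x,a)\bonusQ_t(x,a)$; the terms $V^{\pistar}(x_0;\bonus_t)$ are carried along for the moment. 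At this point $\Reg \le o(T) + \sum_t V^{\pistar}(x_0;\bonus_t) + (1+\tfrac1H)\sum_t S_t - \sum_t S_t^\star$.

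The crux of the argument is to evaluate $S_t^\star$ using the dilated Bellman equation \pref{eq: dilated}. Substituting the recursion for $\bonusQ_t(x,a)$ into $S_t^\star$ produces $\sum_{x,a}\qstar(x,a)\bonus_t(x,a) = V^{\pistar}(x_0;\bonus_t)$ plus $(1+\tfrac1H)\sum_{x,a}\qstar(x,a)\E_{x'\sim P(\cdot|x,a)}[\bonusV_t(x')]$, where $\bonusV_t(x) := \sum_a\pi_t(a|x)\bonusQ_t(x,a)$ is the state-value of the dilated bonus under $\pi_t$. Here I would invoke the flow-conservation property of the occupancy measure, $\sum_{x,a}\qstar(x,a)P(x'|x,a)=\qstar(x')$ for every non-initial $x'$, together with $\bonusV_t(x_H)=0$, to rewrite that expectation as $S_t - \bonusV_t(x_0)$ — the $-\bonusV_t(x_0)$ appearing precisely because $x_0$ is never a transition target. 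This yields the identity $S_t^\star = V^{\pistar}(x_0;\bonus_t) + (1+\tfrac1H)(S_t - \bonusV_t(x_0))$. This is the step where the dilation factor $(1+\tfrac1H)$ is essential: it is exactly what makes the $S_t$ contributions cancel in the next step.

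Plugging this identity back, the $V^{\pistar}(x_0;\bonus_t)$ terms and the $(1+\tfrac1H)S_t$ terms cancel exactly, leaving $\Reg \le o(T) + (1+\tfrac1H)\sum_t\bonusV_t(x_0)$. It then remains to bound $\bonusV_t(x_0)$ in terms of $V^{\pi_t}(x_0;\bonus_t)$. Unrolling \pref{eq: dilated} from $x_0$ down through the layers shows $\bonusV_t(x_0) = \sum_{h=0}^{H-1}(1+\tfrac1H)^h\sum_{x\in X_h,a}q^{\pi_t}(x,a)\bonus_t(x,a)$, so that $(1+\tfrac1H)\bonusV_t(x_0)\le (1+\tfrac1H)^H\, V^{\pi_t}(x_0;\bonus_t)\le e\,V^{\pi_t}(x_0;\bonus_t)$, and the elementary bound $e<3$ closes the estimate.

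The main obstacle I anticipate is getting the $S_t^\star$ identity exactly right — in particular keeping careful track of the boundary contributions (that $x_0$ is not a transition target, and that the terminal layer contributes zero), and verifying that the dilation factor produces an \emph{exact} cancellation rather than merely an approximate one. Everything else is bookkeeping: the insertion of $\bonusQ_t$, the regrouping into $(1+\tfrac1H)S_t - S_t^\star$, and the final layer-by-layer unrolling combined with $(1+\tfrac1H)^H\le e$.
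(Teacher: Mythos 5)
Your proof is correct and is essentially the paper's own argument (\pref{lem: lemma for the condition} specialized to $\Pht=P$): both rearrange \pref{eq:modified bound}, expand $\bonusQ_t$ via the dilated Bellman equation, and use the flow property $\sum_{x,a}\qstar(x,a)P(x'|x,a)=\qstar(x')$ so that everything cancels except $(1+\frac{1}{H})\sum_t\sum_a\pi_t(a|x_0)\bonusQ_t(x_0,a)$, which is then bounded by $(1+\frac{1}{H})^{H}\sum_t V^{\pi_t}(x_0;\bonus_t)\le 3\sum_t V^{\pi_t}(x_0;\bonus_t)$; the paper merely packages your global identity for $S_t^\star$ as a layer-by-layer telescope and your explicit unrolling as an induction. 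The only step worth stating explicitly is that your last inequality needs $\bonus_t\ge 0$ (so the dilation weights $(1+\frac{1}{H})^{h+1}\le(1+\frac{1}{H})^{H}$ can be bounded termwise), a hypothesis the paper includes in \pref{lem: lemma for the condition}.
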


The high-level idea of the proof is to show that the bonuses added to a layer $h$ is enough to cancel the large bias/variance term (including those coming from the bonus itself) from layer $h+1$.
Therefore, cancellation happens in a layer-by-layer manner except for layer $0$,
where the total amount of bonus can be shown to be at most $(1+\frac{1}{H})^H\sum_{t=1}^T V^{\pi_t}(x_0; \bonus_t) \leq 3\sum_{t=1}^T V^{\pi_t}(x_0; \bonus_t)$. 

Recalling again that $V^{\pi_t}(x_0; \bonus_t)$ is usually nicely bounded, 
we thus arrive at a favorable regret guarantee without making extra assumptions.
Of course, since the transition is unknown, we cannot compute $B_t$ exactly.
However, \pref{lem:key} is robust enough to handle either a good approximate version of $B_t$ (see \pref{lem: lemma for the condition}) or a version where \pref{eq: dilated} and \pref{eq:modified bound} only hold in expectation (see \pref{lem: expected version}), which is enough for us to handle unknown transition. 
In the next three sections, we apply this general idea to different settings, showing what $b_t$ and $B_t$ are concretely in each case. 

\section{The Tabular Case}
\label{sec: tabular}

In this section, we study the tabular case where the number of states is finite. 
We propose a policy optimization algorithm with $\otil(\sqrt{T})$ regret, improving the $\otil(T^{\nicefrac{2}{3}})$ regret of~\citep{shani2020optimistic}.
See \pref{alg:MDP-traj} for the complete pseudocode.

\setcounter{AlgoLine}{0}
\begin{savenotes}
\begin{algorithm}[hbt!] 
	\caption{Policy Optimization with Dilated Bonuses (Tabular Case)}
	\label{alg:MDP-traj}
	\textbf{Parameters:} $\delta\in (0,1)$, $\eta=\min\left\{\nicefrac{1}{24 H^3}, \nicefrac{1}{\sqrt{|X||A|HT}}\right\}$, \ $\gamma=2\eta H$. 
	
	\textbf{Initialization:} 
	Set epoch index $k=1$ and confidence set $\calP_1$ as the set of all transition functions. 
	For all $(x,a,x')$, initialize counters
$N_0(x,a)=N_1(x,a)=0, N_0(x,a,x')=N_1(x,a,x')=0$.
	 
	 \For{ $t = 1 ,2,\dots, T$ }{
	    \textbf{Step 1: Compute and execute policy.} Execute $\pi_t$ for one episode, where
	\begin{equation}\label{eq:MDP_log_barrier}
		\pi_{t}(a|x) \propto  
		\exp\left(-\eta\sum_{\tau=1}^{t-1}\left(\Qht_\tau(x,a) - \bonusQ_\tau(x,a)\right)\right), 
	\end{equation}
	and obtain trajectory $\{(x_{t,h}, a_{t,h}, \ell_t(x_{t,h}, a_{t,h}))\}_{h=0}^{\Lyr-1}$. \\
	
		\textbf{Step 2: Construct $Q$-function estimators.} For all $h\in\{0,\ldots, H-1\}$ and $(x,a)\in X_h\times A$, 
		\begin{align}\label{eq:MDP_estimator} \Qht_t(x,a) = \frac{L_{t,h}}{\Ut(x,a)+\gamma}\ind_t(x,a),  
		\end{align}
		with
		$ L_{t,h} = \sum\limits_{i=h}^{\Lyr-1}\ell_t(x_{t,i},a_{t,i}),  
		\Ut(x,a) = \max\limits_{\Pht \in \mathcal{P}_k} q^{\Pht, \pi_t}(x,a)$, 
		and $\ind_t(x,a) = \ind\{x_{t,h} = x, a_{t,h} = a\}.$  \\
	
		\textbf{Step 3: Construct bonus functions.} For all $(x,a)\in X\times A$, 
		\begin{align}
		\bonus_t(x) &= \E_{a\sim \pi_t(\cdot|x)}  \left[\frac{3\gamma H + H(\Ut(x,a)-\Lt(x,a))}{\Ut(x, a) + \gamma}\right]    \label{eq: err function definition}\\
		 \bonusQ_t(x,a) &=
		 \displaystyle\bonus_t(x) +\left(1+\frac{1}{H}\right) \max_{\Pht\in\calP_k}\E_{x'\sim \Pht(\cdot|x,a)}\E_{a'\sim \pi_t(\cdot|x')}\left[\bonusQ_t(x', a')\right]  
		 \label{eq: bonus bellman 1} 
		\end{align}
		where $\Lt(x,a)= \min_{\Pht \in \mathcal{P}_k} q^{\Pht, \pi_t}(x,a)$ and $\bonusQ_t(x_H,a) =  0$ for all $a$. \\

		\textbf{Step 4: Update model estimation.} $\forall h<H$,
		$
		N_k(x_{t,h}, a_{t,h}) \overset{+}{\leftarrow} 1,  N_k(x_{t,h}, a_{t,h}, x_{t, h+1}) \overset{+}{\leftarrow} 1.
		$\footnote{We use $y \overset{+}{\leftarrow} z$ as a shorthand for the increment operation $y \leftarrow y + z$.}

		 \If {$\exists h, \  N_k(x_{t,h}, a_{t,h}) \geq \max\{1, 2N_{k-1}(x_{t,h},a_{t,h})\}$}{
			
			Increment epoch index $k \overset{+}{\leftarrow} 1$ and 
			copy counters:  $
			    N_k \gets N_{k-1},  N_k \gets N_{k-1}$.
			

			Compute empirical transition $\Pbar_k(x'|x,a)=\frac{N_k(x,a,x')}{\max\left\{1,N_k(x,a)\right\}}$  and confidence set:
			\begin{equation}\label{eq:confset}
			\begin{split}
			\textstyle\mathcal{P}_k = &\Big\{\Pht:\left|\Pht(x'|x,a)-\Pbar_k(x'|x,a)\right|\le \conf_k(x'|x,a),\\ 
			&\quad\forall (x,a,x')\in X_h\times A\times X_{h+1}, h=0,1,\dots, \Lyr-1\Big\},
			\end{split}
			\end{equation}
			\begin{align*} 
			\text{where}
			\quad\conf_k(x'|x,a)= 4\sqrt{\frac{\Pbar_k(x'|x,a)\ln \left(\frac{T|X||A|}{\delta}\right)}{\max\{1,N_k(x,a)\}}}+\frac{28\ln \left(\frac{T|X||A|}{\delta}\right)}{3\max\{1, N_k(x,a)\}}. 
			\end{align*} 
		}		
	}
\end{algorithm}
\end{savenotes}

\paragraph{Algorithm design.}
First, to handle unknown transition, we follow the common practice (dating back to~\citep{jaksch2010near}) to maintain a confidence set of the transition, which is updated whenever the visitation count of a certain state-action pair is doubled.
We call the period between two model updates an epoch, and use $\calP_k$ to denote the confidence set for epoch $k$, formally defined in \pref{eq:confset}.

In episode $t$, the policy $\pi_t$ is defined via the standard multiplicative weight algorithm (also connected to Natural Policy Gradient~\citep{kakade2001natural, agarwal2020optimality, wei2021learning}), 
but importantly with the dilated bonuses incorporated such that 
$\pi_{t}(a|x) \propto \exp(-\eta\sum_{\tau=1}^{t-1}(\Qht_\tau(x,a) - \bonusQ_\tau(x,a)))$.
Here, $\eta$ is a step size parameter, $\Qht_\tau(x,a)$ is an importance-weighted estimator for $Q^{\pi_\tau}_\tau(x,a)$ defined in \pref{eq:MDP_estimator}, and $\bonusQ_\tau(x,a)$ is the dilated bonus defined in \pref{eq: bonus bellman 1}.

More specifically, for a state $x$ in layer $h$, $\Qht_t(x,a)$ is defined as $\frac{L_{t,h}\ind_t(x,a)}{\Ut(x,a)+\gamma}$, where $\ind_t(x,a)$ is the indicator of whether $(x,a)$ is visited during episode $t$; $L_{t,h}$ is the total loss suffered by the learner starting from layer $h$ till the end of the episode; $\Ut(x,a)=\max_{\Pht \in \mathcal{P}_k} q^{\Pht, \pi_t}(x,a)$ is the largest plausible value of $\QQ_t(x,a)$ within the confidence set, which can be computed efficiently using the \textsc{Comp-UOB} procedure of~\citep{jin2019learning} (see also \pref{app: omitted procedure}); and finally $\gamma$ is a parameter used to control the maximum magnitude of $\Qht_t(x,a)$.
To get a sense of this estimator, consider the special case when $\gamma=0$ and the transition is known so that we can set $\calP_k = \{P\}$ and thus $\Ut = \QQ_t$.
Then, since the expectation of $L_{t,h}$ conditioned on $(x,a)$ being visited is $Q^{\pi_t}_t(x,a)$ and the expectation of $\ind_t(x,a)$ is $q_t(x,a)$, we know that $\Qht_t(x,a)$ is an unbiased estimator for $Q^{\pi_t}_t(x,a)$.
The extra complication is simply due to the transition being unknown, forcing us to use $\Ut$ and $\gamma >0$ to make sure that $\Qht_t(x,a)$ is an optimistic underestimator, an idea similar to~\citep{jin2019learning}.

Next, we explain the design of the dilated bonus $B_t$.
Following the discussions of \pref{sec: general recipe},
we first figure out what the corresponding $\bonus_t$ function is in \pref{eq: extra}, by analyzing the regret bound without using any bonuses.
The concrete form of $\bonus_t$ turns out to be \pref{eq: err function definition},
whose value at $(x,a)$ is independent of $a$ and thus written as $\bonus_t(x)$ for simplicity.
Note that \pref{eq: err function definition} depends on the occupancy measure lower bound $\Lt(s,a) = \min_{\Pht \in \mathcal{P}_k} q^{\Pht, \pi_t}(x,a)$, the opposite of $\Ut(s,a)$, which can also be computed efficiently using a procedure similar to \textsc{Comp-UOB} (see \pref{app: omitted procedure}).
Once again, to get a sense of this, consider the special case with a known transition so that we can set $\calP_k = \{P\}$ and thus $\Ut = \Lt = \QQ_t$.
Then, one see that $\bonus_t(x)$ is simply upper bounded by $\E_{a\sim \pi_t(\cdot|x)}  \left[\nicefrac{3\gamma H}{\QQ_t(x, a)}\right] = \nicefrac{3\gamma H|A|}{\QQ_t(x)}$, which is inversely related to the probability of visiting state $x$, matching the intuition we provided in \pref{sec: general recipe} (that $\bonus_t(x)$ is large if $x$ is rarely visited).
The extra complication of \pref{eq: err function definition} is again just due to the unknown transition.

With $\bonus_t(x)$ ready, the final form of the dilated bonus $B_t$ is defined following the dilated Bellman equation of \pref{eq: dilated}, except that since $P$ is unknown, we once again apply optimism and find the largest possible value within the confidence set (see \pref{eq: bonus bellman 1}).
This can again be efficiently computed; see \pref{app: omitted procedure}.
This concludes the complete algorithm design.

\paragraph{Regret analysis.}
The regret guarantee of \pref{alg:MDP-traj} is presented below:

\begin{theorem}\label{thm:regret bound main}
	\pref{alg:MDP-traj} ensures that with probability $1-\order(\delta)$, $
	\Reg = \otil\left(H^2|X|\sqrt{AT} + H^4\right)$. 
\end{theorem}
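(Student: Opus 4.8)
The plan is to make the whole proof rest on the general reduction of \pref{lem:key}, so that everything decomposes into two tasks: (i) verifying the key inequality \pref{eq:modified bound} for the concrete $\bonus_t$ and $\bonusQ_t$ of \pref{eq: err function definition}--\pref{eq: bonus bellman 1}, and (ii) bounding $\sum_{t=1}^T V^{\pi_t}(x_0;\bonus_t)$. First I would condition on the ``good event'' that the true transition $P$ lies in every confidence set $\calP_k$; by the Bernstein-type radius $\conf_k$ in \pref{eq:confset} this holds with probability $1-\order(\delta)$ after a union bound over epochs and $(x,a,x')$. On this event optimism gives $\Lt(x,a)\le\QQ_t(x,a)\le\Ut(x,a)$, so $\Qht_t$ is a nonnegative underestimator of $Q^{\pi_t}_t$ and the bonuses are well defined. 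Note that $\bonusQ_t$ is exactly computable from the history (no separate estimation is needed), but its use of $\max_{\Pht\in\calP_k}$ means \pref{eq: dilated} holds only optimistically with respect to the true $P$; I would therefore invoke the robust form \pref{lem: lemma for the condition} rather than \pref{lem:key} verbatim.

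For the crux, I decompose the modified-loss regret $\sum_{x}\qstar(x)\sum_t\sum_a(\pi_t(a|x)-\pistar(a|x))(Q^{\pi_t}_t(x,a)-\bonusQ_t(x,a))$ via the performance-difference lemma into a \emph{regret term} in the estimated losses $\Qht_t-\bonusQ_t$ plus two \emph{bias terms} replacing $\Qht_t$ by $Q^{\pi_t}_t$. The regret term is handled state-by-state by the standard multiplicative-weight bound, yielding $\frac{H\ln|A|}{\eta}$ plus a second-order term $\eta\sum_{t,x,a}\qstar(x)\pi_t(a|x)(\Qht_t+\bonusQ_t)^2$; the parameter choices $\eta\le\nicefrac{1}{24H^3}$ and $\gamma=2\eta H$ ensure $\eta|\Qht_t-\bonusQ_t|\le1$ so the bound applies. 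The heart of the matter is that $\bonus_t(x)$ was designed to dominate these contributions: since $\Ut$ is measurable before episode $t$, $\E_t[\Qht_t(x,a)]=\frac{\QQ_t(x,a)}{\Ut(x,a)+\gamma}Q^{\pi_t}_t(x,a)$, so the per-action bias is at most $H\cdot\frac{\gamma+\Ut(x,a)-\Lt(x,a)}{\Ut(x,a)+\gamma}$, while $\E_t[L_{t,h}^2\ind_t]\le H^2\QQ_t$ and $\eta H^2=\gamma H/2$ make the variance term at most $\frac{\gamma H/2}{\Ut(x,a)+\gamma}$; both are absorbed into $\frac{3\gamma H+H(\Ut-\Lt)}{\Ut+\gamma}$. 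Converting these conditional-expectation statements into the realized high-probability ones requires Freedman/Azuma concentration on the relevant martingale difference sequences (again bounded through $\eta,\gamma$), and the dilated structure supplies exactly the extra $\frac1H\sum_t\sum_{x,a}\qstar(x)\pi_t(a|x)\bonusQ_t(x,a)$ slack, verifying \pref{eq:modified bound}.

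With \pref{eq:modified bound} in hand, the robust \pref{lem:key} reduces the regret to $3\sum_{t=1}^T V^{\pi_t}(x_0;\bonus_t)$ plus the lower-order $\frac{H\ln|A|}{\eta}$ and concentration deviations, so it remains to bound $\sum_t V^{\pi_t}(x_0;\bonus_t)=\sum_t\sum_{x,a}\QQ_t(x,a)\frac{3\gamma H+H(\Ut(x,a)-\Lt(x,a))}{\Ut(x,a)+\gamma}$. Using $\QQ_t(x,a)\le\Ut(x,a)+\gamma$, I split this into two pieces. The first, $3\gamma H\sum_{x,a}\frac{\QQ_t(x,a)}{\Ut(x,a)+\gamma}\le3\gamma H|X||A|$, sums to $\otil(\eta H^2|X||A|T)$, lower order under the chosen $\eta$. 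The dominant piece is $H\sum_t\sum_{x,a}(\Ut(x,a)-\Lt(x,a))$, the total width of the occupancy-measure confidence intervals; bounding it is the main technical obstacle, and I would follow the occupancy-measure literature (e.g.\ \citep{jin2019learning}) by expanding $\Ut-\Lt$ as a layer-wise sum of per-transition radii $\conf_k$ and applying the doubling-epoch counting/pigeonhole argument to get $\sum_t\sum_{x,a}(\Ut-\Lt)=\otil(H|X|\sqrt{|A|T})$ up to additive $\mathrm{poly}(H,|X|,|A|)$ terms. Multiplying by the outer $H$ produces the leading $\otil(H^2|X|\sqrt{|A|T})$.

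Collecting terms, the leading $\otil(H^2|X|\sqrt{AT})$ comes from the confidence-width bonus; the $H^4$ term arises in the regime where the cap $\eta=\nicefrac{1}{24H^3}$ is active, making $\frac{H\ln|A|}{\eta}=\otil(H^4)$, together with the lower-order radius terms; and the $\otil(\eta H^2|X||A|T)$ piece plus the martingale deviations are dominated after plugging in $\eta=\min\{\nicefrac{1}{24H^3},\nicefrac{1}{\sqrt{|X||A|HT}}\}$, giving $\Reg=\otil(H^2|X|\sqrt{AT}+H^4)$ with probability $1-\order(\delta)$. I expect the two genuinely delicate points to be the high-probability conversion in the second step (ensuring the bonus still dominates the bias and variance once conditional expectations are replaced by realized sums) and the occupancy-width summation in the third step.
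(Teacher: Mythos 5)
Your proposal follows essentially the same route as the paper's proof: condition on the good event $P\in\calP_k$ for all epochs, decompose the modified-loss regret into \bias, \biastwo, and \regterm, use the multiplicative-weight bound together with the design of $\bonus_t$ to absorb the $\pi_t$-weighted bias and variance, invoke the robust dilated-bonus lemma (\pref{lem: lemma for the condition}), and reduce the remaining bonus sum to the confidence-width bound $\sum_{t,x,a}(\Ut-\Lt)=\otil(H|X|\sqrt{|A|T})$ from \citep{jin2019learning}, with the same final parameter accounting. One harmless inaccuracy: \pref{lem: lemma for the condition} bounds the regret by $3\sum_t\Vht^{\pi_t}(x_0;\bonus_t)$ under the \emph{optimistic} transition $\Pht_t$, not by $3\sum_t V^{\pi_t}(x_0;\bonus_t)$ under the true one as you wrote; this does not affect anything downstream, since $q^{\Pht_t,\pi_t}(x,a)\le\Ut(x,a)$ by definition of $\Ut$, so your width calculation applies verbatim.

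There is, however, one genuine gap: the concentration tool for the $\qstar$-weighted deviations. Plain Freedman/Azuma suffices only for \bias (this is exactly what \pref{lem: bias1} uses), because there the resulting variance term $\sum_{x,a}\qstar(x)\pi_t(a|x)\frac{H^3}{\Ut(x,a)+\gamma}$ is a $\pi_t$-average within each state and can therefore be absorbed into $\bonus_t(x)$, which is also a $\pi_t$-average. For \biastwo, and likewise for the second-order term $\sum_{x,a}\qstar(x)\pi_t(a|x)\Qht_t(x,a)^2$ in \regterm, the martingale differences involve $\ind_t(x,a)/(\Ut(x,a)+\gamma)$ weighted by $\qstar$, and their conditional variance contains mismatch ratios such as $\sum_{x,a}\qstar(x,a)^2/(\Ut(x,a)+\gamma)$, which can be of order $1/\gamma$ per episode on pairs that $\pistar$ visits but $\pi_t$ essentially never does. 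These cannot be absorbed by the bonus: in \biastwo the within-state weighting is $\pistar(a|x)$ while $\bonus_t(x)$ averages over $\pi_t(\cdot|x)$, so no pointwise domination is possible. Running Freedman with this variance and $\gamma=\Theta(1/\sqrt{|X||A|HT})$ yields deviations of order $T^{3/4}$, destroying the $\sqrt{T}$ rate. The paper's fix is \pref{lem: useful 3} (adapted from Lemma 11 of \citep{jin2019learning}, an implicit-exploration-style bound): exploiting the cap $\gamma$ in the denominator, it bounds $\sum_t\sum_{x,a}\bigl(\frac{\ind_t(x,a)Z_t(x,a)}{\Ut(x,a)+\gamma}-\frac{\QQ_t(x,a)z_t(x,a)}{\Ut(x,a)}\bigr)$ by $\frac{RH}{2\gamma}\ln\frac{H}{\delta}=\otil(H/\eta)$ for arbitrary predictable bounded weights, uniformly over any mismatch between $\qstar$ and $\QQ_t$; this is how \pref{lem: bias2} and the corresponding step of \pref{lem: reg} obtain their $\otil(H/\eta)$ bounds. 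With this single substitution in your concentration step, the rest of your argument goes through exactly as in the paper.
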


Again, this improves the $\otil(T^{\nicefrac{2}{3}})$ regret of~\citep{shani2020optimistic}.
It almost matches the best existing upper bound for this problem, which is $\otil(H|X|\sqrt{|A|T})$~\citep{jin2019learning}.
While it is unclear to us whether this small gap can be closed using policy optimization, we point out that our algorithm is arguably more efficient than that of \citep{jin2019learning}, which performs global convex optimization over the set of all plausible occupancy measures in each episode.

The complete proof of this theorem is deferred to \pref{app: tabular appendix}.
Here, we only sketch an outline of proving \pref{eq:modified bound}, which, according to the discussions in \pref{sec: general recipe}, is the most important part of the analysis.
Specifically, we decompose the left-hand side of \pref{eq:modified bound}, $\sum_{x}\qstar(x)\sum_{t}\inner{\pi_t(\cdot|x)-\pistar(\cdot|x), \Q_t^{\pi_t}(x,\cdot)-\bonusQ_t(x,\cdot)}$, as $\bias + \biastwo + \regterm$, where
\begin{itemize}[leftmargin=1em]
  \setlength\itemsep{0em}
\item   $\bias = \sum_x\qstar(x)\sum_{t}\inn{\pi_t(\cdot|x), \Q_t^{\pi_t}(x,\cdot)-\Qht_t(x,\cdot)}$ measures the amount of underestimation of $\Qht_t$ related to $\pi_t$, which can be bounded by $\sum_{t}\ \sum_{x,a}\qstar(x)\pi_{t}(a| x)\Big(\frac{2\gamma H+ H(\Ut(x, a)-\Lt(x, a))}{\Ut(x, a)+\gamma}\Big)+\otil\left(\nicefrac{H}{\eta}\right)$ with high probability (\pref{lem: bias1});

\item $\biastwo = \sum_x\qstar(x)\sum_{t}\inn{\pistar(\cdot|x), \Qht_t(x,\cdot)-\Q_t^{\pi_t}(x,\cdot)}$ measures the amount of overestimation of $\Qht_t$ related to $\pistar$, which can be bounded by $\otil\left(\nicefrac{H}{\eta}\right)$ since $\Qht_t$ is an underestimator (\pref{lem: bias2});

\item $\regterm = \sum_x\qstar(x)\sum_{t}\inn{\pi_t(\cdot|x)-\pistar(\cdot|x), \Qht_t(x,\cdot)-B_t(x,\cdot)}$ is directly controlled by the multiplicative weight update, and is bounded by $\sum_{t} \sum_{x,a}\qstar(x)\pi_t(a|x) \left(\frac{\gamma H}{\Ut(x,a)+\gamma}+\frac{\bonusQ_t(x,a)}{H}\right) + \otil\left(\nicefrac{H}{\eta}\right)$ with high probability (\pref{lem: reg}).
\end{itemize}

Combining all with the definition of $\bonus_t$ proves the key \pref{eq:modified bound} (with the $o(T)$ term being $\otil(\nicefrac{H}{\eta})$).

\section{The Linear-\texorpdfstring{$Q$}{} Case}\label{sec: linear Q}
In this section, we move on to the more challenging setting where the number of states might be infinite, and function approximation is used to generalize the learner's experience to unseen states. We consider the most basic linear function approximation scheme where for any $\pi$, the $Q$-function $Q_t^{\pi}(x,a)$ is linear in some known feature vector $\phi(x,a)$, formally stated below.

\begin{assumption}[Linear-$Q$]\label{assum: linear Q assumption}
    Let $\phi(x,a)\in \mathbb{R}^d$ be a known feature vector of the state-action pair $(x,a)$. We assume that for any episode $t$, policy $\pi$, and layer $h$, there exists an unknown weight vector $\theta^{\pi}_{t,h}\in \mathbb{R}^d$ such that for all $(x, a)\in X_h\times A$,  $Q^{\pi}_t(x,a)=\phi(x,a)^\top \theta_{t,h}^{\pi}$. 
     Without loss of generality, we assume $\|\phi(x,a)\|\leq 1$ for all $(x,a)$ and $\|\theta_{t,h}^{\pi}\|\leq \sqrt{d}H$ for all $t,h,\pi$.  
\end{assumption}

For justification on the last condition on norms,
see~\citep[Lemma 8]{wei2021learning}.
This linear-$Q$ assumption has been made in several recent works with stationary losses~\citep{abbasi2019politex, wei2021learning} and also in~\citep{neu2020online} with the same adversarial losses.\footnote{The assumption in~\citep{neu2020online} is stated slightly differently (e.g., their feature vectors are independent of the action). However, it is straightforward to verify that the two versions are equivalent.}
It is weaker than the linear MDP assumption (see \pref{sec: linear MDP}) as it does not pose explicit structure requirements on the loss and transition functions.
Due to this generality, however, our algorithm also requires access to a \emph{simulator} to obtain samples drawn from the transition, formally stated below.



\begin{assumption}[Simulator]\label{assum: sampling oracle} The learner has access to a simulator, which takes a state-action pair $(x,a)\in X\times A$ as input, and generates a random outcome of the next state $x'\sim P(\cdot|x,a)$. \end{assumption}

Note that this assumption is also made by~\citep{neu2020online} and more earlier works with stationary losses (see e.g.,~\citep{azar2012sample,sidford2018near}).\footnote{The simulator required by~\cite{neu2020online} is in fact slightly weaker than ours and those from earlier works --- it only needs to be able to generate a trajectory starting from $x_0$ for any policy.}
In this setting, we propose a new policy optimization algorithm with $\otil(T^{\nicefrac{2}{3}})$ regret. 
See \pref{alg: linear Q} for the pseudocode.

\setcounter{AlgoLine}{0}
\DontPrintSemicolon 
\begin{algorithm}[t]
    \caption{Policy Optimization with Dilated Bonuses (Linear-$Q$ Case)}
    \label{alg: linear Q}
    \textbf{parameters}: $\gamma, \beta, \eta, \epsilon\in(0,\frac{1}{2})$,  $M=\left\lceil\frac{24\ln(dHT)}{\epsilon^2\gamma^2}\right\rceil$, $N=\left\lceil\frac{2}{\gamma}\ln \frac{1}{\epsilon \gamma}\right\rceil$.  \\ 
    \For{$t=1,2,\ldots, T$}{
        \textbf{Step 1: Interact with the environment.} 
        Execute $\pi_t$, which is defined such that for each $x\in X_h$,
        \begin{align}
              \pi_t(a|x) \propto \exp\left( -\eta \sum_{\tau=1}^{t-1}\left(\phi(x,a)^\top \hattheta_{\tau, h}   - \Bonus(\tau,x,a)\right) \right), \label{eq: linear Q policy} 
        \end{align} 
        and obtain trajectory $\{(x_{t,h}, a_{t,h}, \ell_t(x_{t,h}, a_{t,h}))\}_{h=0}^{H-1}$.  
        \ \\
       
        \textbf{Step 2: Construct covariance matrix inverse estimators.} 
        Collect $MN$ trajectories using the simulator and $\pi_t$. Let $\calT_t$ 
        be the set of trajectories.  Compute
        \begin{align*} 
             \left\{\hatcov_{t, h}\right\}_{h=0}^{H-1} = \GR\left(\calT_t, M, N, \gamma\right).   \tag{see \pref{alg: GR}} 
        \end{align*}
        \textbf{Step 3: Construct $Q$-function weight estimators.} For $h=0, \ldots, H-1$, compute 
        \begin{align}
            \hattheta_{t, h}&= \hatcov_{t,h} \phi(x_{t,h},a_{t,h})L_{t,h}, \qquad \text{where\ } L_{t,h}= \sum_{i=h}^{H-1}\ell_t(x_{t,i},a_{t,i}).\label{eq: theta estimator in linear Q}
        \end{align}
    }
\end{algorithm}
\begin{algorithm}[t]
    \caption{$\Bonus(t,x,a)$}
    \label{alg: generating B samples}

    \If{$\Bonus(t,x,a)$ has been called before}{\textbf{return} the value of $\Bonus(t,x,a)$ calculated last time.}
    
     Let $h$ be such that $x\in X_h$. 
    \lIf{$h=H$}{
        \textbf{return} $0$.
    }  
    Compute $\pi_t(\cdot|x)$, defined in \pref{eq: linear Q policy} (which involves recursive calls to $\Bonus$ for smaller $t$).\\  
    Get a sample of the next state $x'\leftarrow \oracle(x,a)$.  \\
    Compute $\pi_t(\cdot|x')$ (again, defined in \pref{eq: linear Q policy}), and sample an action $a'\sim \pi_t(\cdot|x')$.  
    
    \textbf{return} $\beta \|\phi(x,a)\|_{\hatcov_{t,h}}^2 +  \E_{j\sim \pi_t(\cdot|x)}\Big[\beta\|\phi(x,j)\|_{\hatcov_{t,h}}^2\Big] + \left(1+\frac{1}{H}\right)\Bonus(t, x',a')$. 
\end{algorithm}

\begin{algorithm}[t]
    \caption{$\GR(\calT, M, N, \gamma)$}
    \label{alg: GR}
    Denote the $MN$ trajectories in $\mathcal{T}$ as: $\{(x_{i,0}, a_{i,0}, \ldots, x_{i,H-1}, a_{i,H-1})\}_{i=1, \ldots, MN}$.
    Let $c=\frac{1}{2}$. \\
    \For{$m=1, \ldots, M$}{
    \For{$n=1,\ldots, N$}{
            $i=(m-1)N+n$. \\
            For all $h$, compute $Y_{n,h}=\gamma I +  \phi(x_{i,h},a_{i,h})\phi(x_{i,h},a_{i,h})^\top$.\\
            For all $h$, compute 
            $Z_{n,h}=\Pi_{j=1}^{n}(I-cY_{j,h})$.
        }
        For all $h$, set $\hatcovk_{h} = cI + c\sum_{n=1}^N Z_{n,h}$.   
    }
    For all $h$, set $\hatcov_{h} = \frac{1}{M}\sum_{m=1}^M \hatcovk_{h}$.\\
    \textbf{return} $\hatcov_{h}$ for all $h=0,\ldots, H-1$.   
\end{algorithm}



\paragraph{Algorithm design.}
The algorithm still follows the multiplicative weight update \pref{eq: linear Q policy} in each state $x\in X_h$ (for some $h$), but now with $\phi(x,a)^\top \hattheta_{t, h}$ as an estimator for $\Q_t^{\pi_t}(x,a) = \phi(x,a)^\top \theta_{t,h}^{\pi_t}$, and $\Bonus(t,x,a)$ as the dilated bonus $B_t(x,a)$.
Specifically, the construction of the weight estimator $\hattheta_{t, h}$ follows the idea of \citep{neu2020online} (which itself is based on the linear bandit literature) and is defined in \pref{eq: theta estimator in linear Q} as $\hatcov_{t,h}  \phi(x_{t,h},a_{t,h})L_{t,h}$.
Here, $\hatcov_{t,h}$ is an $\epsilon$-accurate estimator of $\left(\gamma I + \cov_{t,h}\right)^{-1}$, where $\gamma$ is a small parameter and $\cov_{t,h} = \E_t[\phi(x_{t,h},a_{t,h})\phi(x_{t,h},a_{t,h})^\top]$ is the covariance matrix for layer $h$ under policy $\pi_t$;
$L_{t,h}= \sum_{i=h}^{H-1}\ell_t(x_{t,i},a_{t,i})$ is again the loss suffered by the learner starting from layer $h$, whose conditional expectation is $\Q_t^{\pi_t}(x_{t,h},a_{t,h}) = \phi(x_{t,h},a_{t,h})^\top \theta_{t,h}^{\pi_t}$.
Therefore, when $\gamma$ and $\epsilon$ approach $0$, one see that
$\hattheta_{t, h}$ is indeed an unbiased estimator of $\theta_{t,h}^{\pi_t}$.
We adopt the \GR procedure (see \pref{alg: GR}) of \citep{neu2020online} to compute $\hatcov_{t,h}$, which requires calling the simulator multiple times. 

Next, we explain the design of the dilated bonus. 
Again, following the general principle discussed in \pref{sec: general recipe},
we identify $b_t(x,a)$ in this case as $\beta \|\phi(x,a)\|_{\hatcov_{t,h}}^2 +  \E_{j\sim \pi_t(\cdot|x)}\big[\beta\|\phi(x,j)\|_{\hatcov_{t,h}}^2\big]$ for some parameter $\beta > 0$.
Further following the dilated Bellman equation \pref{eq: dilated}, we thus define $\Bonus(t,x,a)$ recursively as the last line of \pref{alg: generating B samples},
where we replace the expectation $\E_{(x',a')}[\Bonus(t,x',a')]$ with one single sample for efficient implementation.

However, even more care is needed to actually implement the algorithm.
First, since the state space is potentially infinite, one cannot actually calculate and store the value of $\Bonus(t,x,a)$ for all $(x,a)$, but can only calculate them on-the-fly when needed.
Moreover, unlike the estimators for $\Q_t^{\pi_t}(x,a)$, which can be succinctly represented and stored via the weight estimator $\hattheta_{t, h}$,
this is not possible for $\Bonus(t,x,a)$ due to the lack of any structure.
Even worse, the definition of $\Bonus(t,x,a)$ itself depends on $\pi_t(\cdot | x)$ and also $\pi_t(\cdot | x')$ for the afterstate $x'$, which, according to \pref{eq: linear Q policy}, further depends on $\Bonus(\tau,x,a)$ for $\tau < t$, resulting in a complicated recursive structure.
This is also why we present it as a procedure in \pref{alg: generating B samples} (instead of $B_t(x,a)$).
In total, this leads to $(TAH)^{\order(H)}$ number of calls to the simulator. 
Whether this can be improved is left as a future direction.

\paragraph{Regret guarantee}
By showing that \pref{eq:modified bound} holds in expectation for our algorithm, we obtain the following regret guarantee. (See \pref{app: linear Q appendix} for the proof.)

\begin{theorem}\label{thm: linear Q theorem}
    Under \pref{assum: linear Q assumption} and \pref{assum: sampling oracle}, with appropriate choices of the parameters $\gamma, \beta, \eta, \epsilon$, \pref{alg: linear Q} ensures $\E[\Reg]=\otil\left(H^2(dT)^{\nicefrac{2}{3}}\right)$ (the dependence on $|A|$ is only logarithmic). 
\end{theorem}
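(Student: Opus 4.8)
The plan is to instantiate the general recipe of \pref{sec: general recipe} in expectation. Concretely, I would establish the expected-in-$t$ version of \pref{eq:modified bound} for \pref{alg: linear Q} and then invoke the expectation analogue of \pref{lem:key} (\pref{lem: expected version}) to conclude $\E[\Reg]\le o(T)+3\sum_{t}\E[V^{\pi_t}(x_0;\bonus_t)]$. By the performance difference lemma, the left-hand side of \pref{eq:modified bound}, namely $\sum_x\qstar(x)\sum_t\inn{\pi_t(\cdot|x)-\pistar(\cdot|x),\,Q_t^{\pi_t}(x,\cdot)-\bonusQ_t(x,\cdot)}$, splits into the same three pieces as the tabular analysis: $\bias$ (underestimation of $\phi(\cdot)^\top\hattheta_{t,h}$ measured against $\pi_t$), $\biastwo$ (overestimation measured against $\pistar$), and $\regterm$ (the per-state exponential-weights regret). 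Because every step is analyzed in conditional expectation, throughout I would condition on the trajectory set $\calT_t$ that forms $\hatcov_{t,h}$; these come from fresh simulator calls and are independent of the on-policy trajectory entering $\hattheta_{t,h}$, so conditional expectations factorize.

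The technical heart is an estimator-quality lemma for \GR: $\hatcov_{t,h}$ is symmetric PSD, has spectral norm $\otil(1/\gamma)$, and is $\epsilon$-close to $(\gamma I+\cov_{t,h})^{-1}$, where $\cov_{t,h}=\E_t[\phi(x_{t,h},a_{t,h})\phi(x_{t,h},a_{t,h})^\top]$; this follows from the truncated-Neumann-series argument with the prescribed $M,N$. Using independence, $\E_t[\hattheta_{t,h}\mid\hatcov_{t,h}]=\hatcov_{t,h}\cov_{t,h}\theta_{t,h}^{\pi_t}$, so up to the $\epsilon$-error the bias of $\phi^\top\hattheta_{t,h}$ is $\gamma\,\phi^\top(\gamma I+\cov_{t,h})^{-1}\theta_{t,h}^{\pi_t}$; Cauchy--Schwarz in the $(\gamma I+\cov_{t,h})^{-1}$ norm together with $\|\theta_{t,h}^{\pi_t}\|\le\sqrt d H$ bounds its magnitude by $\otil(\sqrt{\gamma d}\,H\,\|\phi\|_{\hatcov_{t,h}})$. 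This is precisely what the bonus $\bonus_t(x,a)=\beta\|\phi(x,a)\|_{\hatcov_{t,h}}^2+\E_{j\sim\pi_t}[\beta\|\phi(x,j)\|_{\hatcov_{t,h}}^2]$ is designed to absorb: an AM--GM split $\sqrt{\gamma d}\,H\,\|\phi\|_{\hatcov}\le\beta\|\phi\|_{\hatcov}^2+\gamma dH^2/(4\beta)$ charges the quadratic part to the bonus term already on the right of \pref{eq:modified bound}, leaving the $\gamma$-dependent residual $\gamma dH^2/(4\beta)$. For $\regterm$ I would use the exponential-weights bound $H\ln|A|/\eta$ plus a second-order term, controlling $\E_t[(\phi^\top\hattheta_{t,h})^2]\le H^2\|\phi\|_{\hatcov_{t,h}}^2$ via $\hatcov\,\cov\,\hatcov\preceq\hatcov$, which matches the $\pi_t$-averaged part of $\bonus_t$; the contribution of $\bonusQ_t$ to the second-order term is folded into the $\tfrac1H\qstar\pi_t\bonusQ_t$ term on the right of \pref{eq:modified bound} using the stability bound $\eta\bonusQ_t\lesssim1$.

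With \pref{eq:modified bound} in expectation in hand, \pref{lem: expected version} gives $\E[\Reg]\lesssim o(T)+\sum_t\E[V^{\pi_t}(x_0;\bonus_t)]$, and the last ingredient is the trace identity $\sum_{x,a}\QQ_t(x,a)\|\phi(x,a)\|_{\hatcov_{t,h}}^2=\tr(\hatcov_{t,h}\cov_{t,h})\le d$ for each layer, so that $\E[V^{\pi_t}(x_0;\bonus_t)]=\otil(\beta dH)$ and the total bonus cost is $\otil(\beta dHT)$. The final bound thus aggregates (i) the exponential-weights term $H\ln|A|/\eta$, (ii) the bonus/exploration cost $\otil(\beta dHT)$, and (iii) the $\gamma$-bias residual $\otil(\gamma dH^3T/\beta)$, all subject to the stability coupling $\eta\bonusQ_t\lesssim\eta\beta H/\gamma\lesssim1$ obtained by unrolling the dilated recursion. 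Optimizing $\gamma,\beta,\eta$ (and taking $\epsilon$ small enough that the $\GR$ inaccuracy is lower order) balances the $\gamma$-increasing bias against the exploration cost and the MWU term and yields the claimed $\otil(H^2(dT)^{\nicefrac{2}{3}})$ regret, with $|A|$ entering only through $\ln|A|$.

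The step I expect to be the main obstacle is controlling the estimator bias with \emph{no} exploratory assumption: since $\cov_{t,h}$ may be (nearly) rank-deficient, the raw per-step bias in poorly explored directions is $\Theta(1)$ in $T$ and is linear in $T$ when summed, so the entire argument rests on the bonus (i) dominating this bias after the AM--GM split and (ii) staying cheap once re-weighted from $\qstar$ to $\QQ_t$ through \pref{lem: expected version}; this lack of exploration is also what separates the present $T^{\nicefrac{2}{3}}$ regime from the $\sqrt T$ rate attainable with an exploratory policy. A second difficulty is that $\Bonus(t,x,a)$ is defined through single next-state/next-action samples and recursive calls, so the dilated Bellman identity \pref{eq: dilated} holds only in conditional expectation; verifying the hypotheses of \pref{lem: expected version} (that the sampled $\bonusQ_t$ is an unbiased, suitably bounded surrogate) and ensuring the $\epsilon$-inaccuracy of \GR and the recursive sampling propagate only into lower-order terms will require careful bookkeeping.
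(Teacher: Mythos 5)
Your proposal is correct and follows essentially the same route as the paper's proof: the same $\bias+\biastwo+\regterm$ decomposition under $\E_{x_h\sim\pistar}$, the same \GR guarantees, the same Cauchy--Schwarz/AM--GM treatment of the $\gamma$-bias leaving a $\gamma dH^2/\beta$ residual, the same folding of the $\bonusQ_t^2$ term into $\tfrac{1}{H}\bonusQ_t$ via the stability condition, the trace identity giving the $\otil(\beta dHT)$ bonus cost, and an application of \pref{lem: expected version} with the same parameter tuning. The obstacles you flag at the end (well-definedness of the recursively sampled $\Bonus(t,x,a)$ and the expectation-only dilated Bellman identity) are exactly what the paper resolves via its ``virtual process'' construction and \pref{lem: expected version}, so your bookkeeping concern is addressed by the framework you already invoke.
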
   

This matches the $\otil(T^{\nicefrac{2}{3}})$ regret of~\citep[
Theorem~1]{neu2020online}, without the need of their assumption which essentially says that the learner is given an exploratory policy to start with.\footnote{Under an even stronger assumption that every policy is exploratory, they also improve the regret to $\otil(\sqrt{T})$; see~\citep[
Theorem~2]{neu2020online}.}
To our knowledge, this is the first no-regret algorithm for the linear-$Q$ setting (with adversarial losses and bandit feedback) when no exploratory assumptions are made.
\ifnobug
\fi


\ifnobug
\else
\section{Improvements with an Exploratory Policy}\label{sec: linear MDP}

Previous sections have demonstrated the role of dilated bonuses in providing global exploration.
In this section, we further discuss what dilated bonuses can achieve when an exploratory policy $\piexp$ is given in linear function approximation settings.
Formally, let $\cov_{h} = \E[\phi(x_h,a_h)\phi(x_h,a_h)^\top]$ denote the covariance matrix for features in layer $h$ following $\pi_0$ (that is, the expectation is taken over a trajectory $\{(x_{h}, a_{h})\}_{h=0}^{H-1}$ with $a_h \sim \pi_0(\cdot | x_h)$), then we assume the following.

\begin{assumption}[An exploratory policy]\label{assum: exploratory}
An exploratory policy $\piexp$ is given to the learner ahead of time, and guarantees that for any $h$, the eigenvalues of $\cov_{h}$ are at least $\lambda_{\min} > 0$.
\end{assumption}

The same assumption is made by~\citep{neu2020online} (where they simply let $\piexp$ be the uniform exploration policy).
As mentioned, under this assumption they achieve $\otil(T^{\nicefrac{2}{3}})$ regret.
By slightly modifying our \pref{alg: linear Q} (specifically, executing $\pi_0$ with a small probability in each episode and setting the parameters differently), we achieve the following improved result.

\begin{theorem}\label{thm:linear_Q_exploratory}
Under Assumptions \ref{assum: linear Q assumption}, \ref{assum: sampling oracle}, and \ref{assum: exploratory}, \pref{alg: linear Q with exploratory} ensures $\E[\Reg]=\otil\big(\sqrt{\frac{H^4T}{\lambda_{\min}}} + \sqrt{H^5dT}\big)$. 
\end{theorem}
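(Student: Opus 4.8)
The plan is to run \pref{alg: linear Q} essentially unchanged, but with the \emph{executed} policy replaced by an episode-level mixture $\wt{\pi}_t = (1-\rho)\pi_t + \rho\,\piexp$ for a small mixing rate $\rho>0$ (i.e.\ with probability $\rho$ the whole episode, and the corresponding \GR trajectories of Step 2, follow $\piexp$), while $\pi_t$ remains the multiplicative-weight iterate \pref{eq: linear Q policy} and the dilated bonus is defined as before. Two bookkeeping facts make this modification attractive. First, occupancies and losses mix linearly, $q^{\wt{\pi}_t}=(1-\rho)q^{\pi_t}+\rho\,q^{\piexp}$ and $V^{\wt{\pi}_t}_t(x_0)\le V^{\pi_t}_t(x_0)+\rho H$, so executing $\wt{\pi}_t$ inflates the regret of the $\pi_t$-sequence by only an additive $\otil(\rho H T)$. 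Second, the layer-$h$ covariance of the executed policy mixes the same way, $\cov_{t,h}=(1-\rho)\cov^{\pi_t}_{t,h}+\rho\,\cov^{\piexp}_{h}$, so by \pref{assum: exploratory} it is well conditioned: $\cov_{t,h}\succeq\rho\,\cov^{\piexp}_h\succeq\rho\lambda_{\min}I$. Consequently, once $\hatcov_{t,h}$ accurately estimates $(\gamma I+\cov_{t,h})^{-1}$ we have $\|\hatcov_{t,h}\|=\otil(1/(\rho\lambda_{\min}))$, \emph{independently of} $\gamma$.

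With these in hand I would reuse the dilated-bonus argument behind \pref{thm: linear Q theorem} verbatim, keeping the bonus $\bonus_t(x,a)=\beta\norm{\phi(x,a)}_{\hatcov_{t,h}}^2+\E_{j\sim\pi_t(\cdot|x)}\!\big[\beta\norm{\phi(x,j)}_{\hatcov_{t,h}}^2\big]$ and verifying that \pref{eq:modified bound} holds in expectation (via the expected-value version of \pref{lem:key}). The point is that well-conditioned covariances let me set parameters far more aggressively than in the exploration-free case. Concretely: (i) the estimator magnitude satisfies $|\phi(x,a)^\top\hattheta_{t,h}|\le\|\hatcov_{t,h}\|\,H=\otil(H/(\rho\lambda_{\min}))$ uniformly in $(x,a)$, which is what makes the multiplicative-weight stability condition $\eta\,|\phi^\top\hattheta_{t,h}-\Bonus(t,x,a)|\lesssim 1$ hold without inflating $\gamma$; and (ii) the regularization bias $\gap_{t,\loss}(x,a)=\gamma\,\phi(x,a)^\top(\gamma I+\cov_{t,h})^{-1}\theta^{\pi_t}_{t,h}$ has magnitude at most $\gamma\sqrt{d}H/(\rho\lambda_{\min})$, so taking $\gamma$ polynomially small (and $N$ in \pref{alg: GR} correspondingly large, which is affordable because \GR now converges at rate $1-c\rho\lambda_{\min}$) renders it negligible. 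It is precisely the unboundedness of this bias and of $\|\hatcov_{t,h}\|$ in the simulator-only setting that forces the $T^{2/3}$ rate of \pref{thm: linear Q theorem}; the exploratory policy removes both.

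It then remains to collect the surviving terms. The bonus sum telescopes as in \pref{lem:key}, and using $\sum_{x\in X_h,a}q^{\pi_t}(x,a)\norm{\phi(x,a)}_{\hatcov_{t,h}}^2\approx\tr\!\big(\cov^{\pi_t}_{t,h}\,\hatcov_{t,h}\big)=\otil(d)$ per layer gives $\sum_t V^{\pi_t}(x_0;\bonus_t)=\otil(\beta\,dHT)$; choosing $\beta=\Theta(\eta H^2)$ so the bonus dominates the second-order term $\eta\,\E_t[(\phi^\top\hattheta_{t,h})^2]\lesssim\eta H^2\norm{\phi}_{\cov_{t,h}^{-1}}^2$ yields a bound of the form $\E[\Reg]=\otil\!\big(\tfrac{H\ln|A|}{\eta}+\eta H^3 dT+\rho HT\big)$, subject to the stability constraint $\rho\gtrsim\eta H/\lambda_{\min}$ from (i). Substituting the minimal $\rho=\Theta(\eta H/\lambda_{\min})$ turns the last term into $\eta H^2 T/\lambda_{\min}$, and optimizing $\eta=\Theta\big(\sqrt{\ln|A|/(H^2 dT+HT/\lambda_{\min})}\big)$ balances the three terms to give $\otil\big(\sqrt{H^4 dT}+\sqrt{H^3 T/\lambda_{\min}}\big)$, which matches the claimed $\otil\big(\sqrt{H^5dT}+\sqrt{H^4T/\lambda_{\min}}\big)$ up to the extra $\sqrt{H}$ I expect to pick up from the performance-difference layer sum and the dilation factor $(1+\tfrac1H)^H\le 3$.

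The main obstacle is not this balancing but the reconciliation between the multiplicative-weight iterate $\pi_t$ (which defines the bonus and the occupancy weighting) and the executed mixture $\wt{\pi}_t$ (which governs the real samples and the covariance). I would need to show that replacing $\cov^{\pi_t}_{t,h}$ by $\cov^{\wt{\pi}_t}_{t,h}$ and $q^{\pi_t}$ by $q^{\wt{\pi}_t}$ throughout the covariance/trace identities only changes constants, using $\cov^{\wt{\pi}_t}_{t,h}\succeq(1-\rho)\cov^{\pi_t}_{t,h}$; that the resulting estimator, though now centered near $\theta^{\wt{\pi}_t}_{t,h}$, controls the regret against $\pistar$ up to the $\otil(\rho HT)$ mixing overhead (most cleanly argued at the level of occupancy measures, since $\wt{\pi}_t$ under episode-level mixing is non-Markov and its per-state $Q$-function must be read off from $q^{\wt{\pi}_t}$ rather than a Bellman recursion); and that the bonus-estimation error $\gap_{t,\bns}$ arising from the recursive one-sample definition in \pref{alg: generating B samples} stays controllable under the enlarged $\|\hatcov_{t,h}\|$ budget. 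Making these pieces fit together consistently, rather than any single inequality, is the crux.
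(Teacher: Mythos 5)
Your high-level plan---mixing $\piexp$ into execution with probability $\rho$, using the resulting eigenvalue lower bound to get $\|\hatcov_{t,h}\|_{\text{op}}=\otil(1/(\rho\lambda_{\min}))$, and retuning $\eta,\beta,\gamma$---is indeed the route the paper describes for this theorem (``executing $\pi_0$ with a small probability in each episode and setting the parameters differently''; the actual pseudocode and proof are omitted from this source). But the specific mixing you propose, namely running $\piexp$ for the \emph{whole} episode with probability $\rho$ (both for real episodes and for the \GR trajectories) while keeping the estimator $\hattheta_{t,h}=\hatcov_{t,h}\phi(x_{t,h},a_{t,h})L_{t,h}$ unchanged, destroys the one property the entire analysis rests on: near-unbiasedness of $\phi(x,a)^\top\hattheta_{t,h}$ for $Q^{\pi_t}_t(x,a)$. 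Conditioned on an exploration episode, the loss-to-go $L_{t,h}$ follows $\piexp$, not $\pi_t$, so (writing $\cov^{\pi}_h$ for the layer-$h$ feature covariance of $\pi$)
\begin{align*}
\E_t\left[\phi(x_{t,h},a_{t,h})\phi(x_{t,h},a_{t,h})^\top L_{t,h}\right]=(1-\rho)\,\cov^{\pi_t}_{h}\theta^{\pi_t}_{t,h}+\rho\,\cov^{\piexp}_{h}\theta^{\piexp}_{t,h},
\end{align*}
and hence $\E_t[\hattheta_{t,h}]\approx\theta^{\pi_t}_{t,h}+\rho\left(\gamma I+\cov_{t,h}\right)^{-1}\cov^{\piexp}_h\left(\theta^{\piexp}_{t,h}-\theta^{\pi_t}_{t,h}\right)-\gamma\left(\gamma I+\cov_{t,h}\right)^{-1}\theta^{\pi_t}_{t,h}$, where $\cov_{t,h}=(1-\rho)\cov^{\pi_t}_h+\rho\cov^{\piexp}_h$. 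The middle term is not an $\otil(\rho HT)$-type overhead: in a feature direction that $\pi_t$ itself barely visits, $\cov_{t,h}\approx\rho\cov^{\piexp}_h$, so $\rho(\gamma I+\cov_{t,h})^{-1}\cov^{\piexp}_h\approx I$ and the bias is of order $|\phi^\top(\theta^{\piexp}_{t,h}-\theta^{\pi_t}_{t,h})|$, i.e.\ $\Theta(H)$---the estimator reports $Q^{\piexp}_t$ instead of $Q^{\pi_t}_t$ exactly in the directions where exploration matters, and this does not shrink as $\rho,\gamma\to0$ or $T\to\infty$. Calling the estimator ``centered near $\theta^{\tilde\pi_t}_{t,h}$'' does not rescue it: an episode-level mixture has no linear $Q$-function (its state-conditional $Q$ is a visitation-posterior-weighted blend of $Q^{\pi_t}$ and $Q^{\piexp}$ that fails the Bellman recursion), so neither \pref{assum: linear Q assumption} nor the per-state performance-difference/multiplicative-weights machinery applies to $\tilde\pi_t$, while the machinery that does apply (against $\pi_t$) sees the above bias weighted by $\qstar$, precisely on states $\pi_t$ under-visits. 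Even absorbing the bias into the bonus by AM--GM leaves a residue of order $\rho dH^3T/\beta$; combined with your stability constraint $\rho\gtrsim\eta H/\lambda_{\min}$ and the bonus cost $\order(\beta dHT)$, optimization gives $T^{2/3}$-type rates, not $\sqrt{T}$. (This is consistent with the paper's footnote: \citep{neu2020online} reach $\otil(\sqrt{T})$ only when \emph{every} policy is exploratory, i.e.\ when no mixing is needed and this bias never arises.)

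The missing piece is algorithmic, not a sharper inequality. Exploration episodes used for $Q$-estimation must follow $\piexp$ only up to a uniformly random layer and then \emph{switch to $\pi_t$}, with the layer-$h$ estimator built only from episodes whose switching layer is $h$, importance-weighted by $H$---exactly the device of \pref{eq: theta estimator in linear MDP} in the paper's linear MDP algorithm---while the \GR trajectories mix whole episodes of $\piexp$ so that the estimated inverse matches the data covariance $(1-\rho)\cov^{\pi_t}_h+\rho\cov^{\piexp}_h\succeq\rho\lambda_{\min}I$. Then $\E[L_{t,h}\mid x_{t,h},a_{t,h}]=Q^{\pi_t}_t(x_{t,h},a_{t,h})$ because the continuation after the switch follows $\pi_t$, restoring unbiasedness while keeping coverage. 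Note that the $H$-fold importance weight inflates the estimator's magnitude and second moment by a factor of $H$ each, which is precisely where the stated powers come from: the stability constraint becomes $\rho\gtrsim\eta H^2/\lambda_{\min}$, producing the $\sqrt{H^4T/\lambda_{\min}}$ term, and the inflated variance produces $\sqrt{H^5dT}$---the very $\sqrt{H}$ you noticed your heuristic was missing. Without this switch-based estimator, the rest of your parameter bookkeeping, which is otherwise sensible, has nothing correct to book.
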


\paragraph{Removing the simulator}
One drawback of our algorithm is that it requires exponential in $H$ number of calls to the simulator.
To address this issue, and in fact, to also completely remove the need of a simulator, we further consider a special case where the transition function also has a low-rank structure, known as the linear MDP setting.

\fi

\ifnobug
\section{The Linear MDP Case}\label{sec: linear MDP}

To remove the need of a simulator, we further consider the linear MDP case, a special case of the linear-$Q$ setting.
It is equivalent to \pref{assum: linear Q assumption} plus the extra assumption that the transition function also has a low-rank structure, formally stated below.
\fi

\begin{assumption}[Linear MDP]\label{assum: linear MDP assumption}
    The MDP satisfies \pref{assum: linear Q assumption} and that for any $h$ and $x'\in X_{h+1}$, there exists an unknown weight vector $\nu^{x'}_{h}\in \mathbb{R}^d$ such that $P(x'|x,a) = \phi(x,a)^\top \nu^{x'}_{h}$ for all $(x,a)\in X_h\times A$. 
\end{assumption}

There is a surge of works studying this setting, with~\citep{cai2020provably} being the closest to us.
They achieve $\otil(\sqrt{T})$ regret but require full-information feedback of the loss functions, and there are no existing results for the bandit feedback setting, except for a concurrent work~\citep{neu2021online} which assumes perfect knowledge of the transition and an exploratory condition.
We propose the first algorithm with sublinear regret for this problem with unknown transition and bandit feedback, shown in \pref{alg:linearMDP}.
The structure of \pref{alg:linearMDP} is similar to that of \pref{alg: linear Q},
but importantly with the following modifications.

\begin{savenotes}
\begin{algorithm}[hbt!] 
    \caption{Policy Optimization with Dilated Bonuses (Linear MDP Case)}\label{alg:linearMDP}
    \textbf{Parameters}: $\gamma, \beta, \eta, \epsilon, \explore\in(0,\frac{1}{2}), \delta$, $M=\left\lceil\frac{96\ln(dHT)}{\epsilon^2\gamma^2}\right\rceil$, $N=\left\lceil\frac{2}{\gamma}\ln \frac{1}{\epsilon \gamma}\right\rceil$, $\len=2MN$, $\alpha=\frac{\explore}{6\beta}$, $M_0 = \left\lceil\alpha^2 dH^2\right\rceil$, $N_0 = \frac{100M_0^4\log(T/\delta)}{\alpha^2}$, $T_0=M_0 N_0$.  \\  
    
    Construct a mixture policy $\pcov$ and its estimated covariance matrices (which requires interacting with the environment for the first $T_0$ rounds using \pref{alg: find policy cover}):
\[
    \pcov, \left\{\widehat{\cov}^{\textcov}_h\right\}_{h=0, \ldots, H-1} \leftarrow \findpcov(M_0, N_0, \alpha, \delta). 
\]
    
    Define known state set $\calK = \left\{x \in X: \forall a\in A, \|\phi(x,a)\|_{(\widehat{\cov}_h^{\textcov})^{-1}}^2 \leq \alpha \text{ where $h$ is such that $x \in X_h$} \right\}$.
    
   
    \For{$k=1,2,\ldots, (T-T_0)/\len$}{
        \textbf{Step 1: Interact with the environment.} Define $\pi_k$ as the following: for $x\in X_h$,
        \begin{align}\label{eq:linear mdp policy}
            \pi_k(a|x) \propto \exp\left( -\eta \sum_{\tau=1}^{k-1} \left(\phi(x,a)^\top \hattheta_{\tau, h} - \phi(x,a)^\top\bonusVec_{\tau,h} - \bonus_\tau(x,a)\right) \right)  
        \end{align}
        where  
        $
            \bonus_\tau(x,a) = \left(\beta\|\phi(x,a)\|_{\hatcov_{\tau, h}}^2 + \beta \E_{a' \sim \pi_\tau(\cdot|x)}\Big[\|\phi(x,a')\|_{\hatcov_{\tau,h}}^2\Big]\right)\ind[x \in \calK] 
       $. 
       
        Randomly partition $\{T_0+(k-1)\len+1, \ldots, T_0 + k\len\}$ into two parts: $S$ and $S'$, such that $|S|=|S'|=\len/2$. 
        
        \For{$t=T_0 + (k-1)\len+1, \ldots, T_0 + k\len$}{
            Draw $Y_t\sim \textsc{Bernoulli}(\explore)$.
            \\
            \If{$Y_t=1$}{
                \lIf{$t\in S$}{Execute $\pcov$.}
                \lElse{
                Draw $h_t^*\stackrel{\text{unif.}}{\sim} \{0, \ldots, H-1\}$;
                execute $\pcov$ in steps $0, \ldots, h_t^*-1$ and $\pi_k$ in steps $h_t^*, \ldots, H-1$. 
                }
            }
            \lElse{Execute $\pi_k$.}
            Collect trajectory $\{(x_{t,h}, a_{t,h}, \ell_t(x_{t,h},a_{t,h}))\}_{h=0}^{H-1}$. 
        }
        \ \\
       \textbf{Step 2: Construct inverse covariance matrix estimators.} 
       Let 
       \begin{align}
           \mathcal{T}_k &= \{(x_{t,0}, a_{t,0}, \ldots, x_{t,H-1}, a_{t,H-1})\}_{t\in S}, \tag{the trajectories in $S$} \\
           \left\{\hatcov_{k,h}\right\}_{h=0}^{H-1} &= \GR(\mathcal{T}_k, M, N, \gamma).  \label{eq: linear mdp gr}  
       \end{align}

       \textbf{Step 3: Construct $Q$-function weight estimators.} Computer for all $h$ (with $L_{t,h} = \sum_{i=h}^{H-1}\ell_t(x_{t,i},a_{t,i})$):
        \begin{align}
            \hattheta_{k,h}&= \hatcov_{k,h}\left(\frac{1}{|S'|}\sum_{t\in S'} ((1-Y_t) + Y_t H\ind[h=h_t^*]) \phi(x_{t,h},a_{t,h})L_{t,h} \right). \label{eq: theta estimator in linear MDP}
        \end{align}
        
        \textbf{Step 4: Construct bonus function weight estimators.} Computer for all $h$ :
        \begin{align}
             \bonusVec_{k,h}&= \hatcov_{k,h}\left( \frac{1}{|S'|}\sum_{t\in S'}((1-Y_t) + Y_t H\ind[h=h_t^*])\phi(x_{t,h},a_{t,h})D_{t,h} \right), \label{eq: bonus function estimator in linear MDP}
        \end{align} 
        where 
        $D_{t,h} = \sum_{i=h+1}^{H-1}\left(1+\frac{1}{H}\right)^{i-h} \bonus_{k}(x_{t,i}, a_{t,i})$.
    } 
\end{algorithm}
\end{savenotes}

\paragraph{A succinct representation of dilated bonuses}
Our definition of $b_t$ remains the same as in the linear-$Q$ case.
However, due to the low-rank transition structure in linear MDPs, we are now able to efficiently construct estimators of $\bonusQ_t(x,a)$ even for unseen state-action pairs using function approximation, bypassing the requirement of a simulator. 
Specifically, observe that according to \pref{eq: dilated},
for each $x\in X_h$, under \pref{assum: linear MDP assumption} $\bonusQ_t(x,a)$ can be written as $\bonus_t(x,a) + \phi(x,a)^\top \Lambda_{t,h}^{\pi_t}$, where $\Lambda_{t,h}^{\pi_t} = (1+\frac{1}{H})\int_{x'\in X_{h+1}}\E_{a'\sim \pi_t(\cdot|x')} [\bonusQ_t(x',a')] \nu_h^{x'} \de x'$ is a vector independent of $(x,a)$. 
Thus, following the similar idea of using $\hattheta_{t,h}$ to estimate $\theta^{\pi_t}_{t,h}$ as we did in \pref{alg: linear Q}, we can construct $\bonusVec_{t,h}$ to estimate $\Lambda_{t,h}^{\pi_t}$ as well, thus succinctly representing $\bonusQ_t(x,a)$ for all $(x,a)$.

\paragraph{Epoch schedule}
Recall that estimating $\theta^{\pi_t}_{t,h}$ (and thus also $\Lambda_{t,h}^{\pi_t}$) requires constructing the covariance matrix inverse estimate $\hatcov_{t,h}$.
Due to the lack of a simulator, another important change of the algorithm is to construct $\hatcov_{t,h}$ using \emph{online} samples.
To do so, we divide the entire horizon (or more accurately the last $T-T_0$ rounds since the first $T_0$ rounds are reserved for some other purpose to be discussed next) into epochs with equal length $W$, and only update the policy optimization algorithm at the beginning of an epoch.
We index an epoch by $k$, and thus $\theta^{\pi_t}_{t,h}$, $\Lambda_{t,h}^{\pi_t}$, $\hatcov_{t,h}$ are now denoted by $\theta^{\pi_k}_{k,h}$, $\Lambda_{k,h}^{\pi_k}$, $\hatcov_{k,h}$. 
Within an epoch, we keep executing the same policy $\pi_k$ (up to a small exploration probability $\explore$) and collect $W$ trajectories, which are then used to construct $\hatcov_{k,h}$ as well as $\theta^{\pi_k}_{k,h}$ and $\Lambda_{k,h}^{\pi_k}$.
To decouple their dependence, we uniformly at random partition these $W$ trajectories into two sets $S$ and $S'$ with equal size, and use data from $S$ to construct $\hatcov_{k,h}$ in \textbf{Step~2} via the same \GR procedure and data from $S'$ to construct $\theta^{\pi_k}_{k,h}$ and $\Lambda_{k,h}^{\pi_k}$ in \textbf{Step~3} and \textbf{Step~4} respectively.

\paragraph{Exploration with a policy cover}
Unfortunately, some technical difficulty arises when bounding the estimation error and the variance of $\bonusVec_{k,h}$. Specifically, they can be large if the magnitude of the bonus term $\bonus_k(x,a)$ is large for some $(x,a)$; furthermore, since $\bonusVec_{k,h}$ is constructed using empirical samples, its variance can be even larger in those directions of the feature space that are rarely visited. Overall, due to the combined effect of these two facts, we are unable to prove any sublinear regret with only the ideas described so far.

To address this issue, we adopt the idea of \emph{policy cover}, recently introduced in \citep{agarwal2020pc, zanette2021cautiously}. 
Specifically, we spend the first $T_0$ rounds to find an exploratory (mixture) policy $\pcov$ (called policy cover) which tends to reach all possible directions of the feature space. 
This is done via the procedure $\findpcov$ (\pref{alg: find policy cover}) (to be discussed in detail soon), which also returns $\widehat{\cov}_h^{\textcov}$ for each layer $h$, an estimator of the true covariance matrix $\cov_h^{\textcov}$ of the policy cover $\pcov$.
$\findpcov$ guarantees that with high probability, for any policy $\pi$ and $h$ we have 
    \begin{align}
       \Pr_{x_h \sim \pi}\left[\exists a, \; \|\phi(x_h,a)\|_{(\widehat{\cov}_h^{\textcov})^{-1}}^2 \geq \alpha \right] \leq \otil\left(\frac{dH}{\alpha} \right) \label{eq:cover_guarantee}
    \end{align}
where $x_h \in X_h$ is sampled from executing $\pi$;
see \pref{lem: property of cover}.
This motivates us to only focus on $x$ such that $\|\phi(x,a)\|_{(\widehat{\cov}_h^{\textcov})^{-1}}^2 \leq \alpha$ for all $a$ ($h$ is the layer to which $x$ belongs).
This would not incur much regret because no policy would visit other states often enough.
We call such state a \textit{known} state and denote by $\calK$ the set of all known states.
To implement the idea above, we simply introduce an indicator $\ind[x \in \calK]$ in the definition of $\bonus_k$ (that is, no bonus at all for unknown states).

The benefit of doing so is that the aforementioned issue of $b_k(x,a)$ having a large magnitude is now alleviated as long as we explore using $\pcov$ with some small probability in each episode.
Specifically, in each episode of epoch $k$, with probability $1-\explore$ we execute $\pi_k$ suggested by policy optimization, otherwise we explore using $\pcov$.
The way we explore differs slightly for episodes in $S$ and those in $S'$ (recall that an epoch is partitioned evenly into $S$ and $S'$, where $S$ is used to estimate $\hatcov_{k,h}$ and $S'$ is used to estimate $\theta^{\pi_k}_{k,h}$ and $\Lambda_{k,h}^{\pi_k}$).
For an episode in $S$, we simply explore by executing $\pcov$ for the entire episode,
so that $\hatcov_{k,h}$ is an estimation of the inverse of $\gamma I + \explore \cov_h^{\textcov} + (1-\explore)\E_{(x_h,a)\sim \pi_k}[\phi(x_h,a)\phi(x_h,a)^\top]$,
and thus by its definition $b_k(x,a)$ is bounded by roughly $\frac{\alpha\beta}{\explore}$ for all $(x,a)$ (this improves over the trivial bound $\frac{\beta}{\gamma}$ by our choice of parameters; see \pref{lem: bonus bounded by 1}).
On the other hand, for an episode in $S'$, we first uniformly at random draw a step $h_t^*$, then we execute $\pcov$ for the first $h_t^*$ steps and continue with $\pi_k$ for the rest.
This leads to a slightly different form of the estimators $\theta^{\pi_k}_{k,h}$ and $\Lambda_{k,h}^{\pi_k}$ compared to \pref{eq: theta estimator in linear Q} (see  \pref{eq: theta estimator in linear MDP} and \pref{eq: bonus function estimator in linear MDP}, where the definition of $D_{t,h}$ is in light of \pref{eq: dilated}), which is important to ensure their (almost) unbiasedness.
This also concludes the description of \textbf{Step 1}.


We note that the idea of dividing states into known and unknown parts is
related to those of \citep{agarwal2020pc, zanette2021cautiously}. However, our case is more challenging because we are only allowed to mix a small amount of $\pcov$ into our policy in order to get sublinear regret against an adversary, while their algorithms can always start by executing $\pcov$ in each episode to maximally explore the feature space. 

\paragraph{Constructing the policy cover}
Finally, we describe how \pref{alg: find policy cover} finds a policy cover $\pcov$.
It is a procedure very similar to Algorithm~1 of \citep{wang2020reward}. Note that the focus of \cite{wang2020reward} is reward-free exploration in linear MDPs, but it turns out that the same idea can be used for our purpose, and it is also related to the exploration strategy introduced in \citep{agarwal2020pc, zanette2021cautiously}. 

More specifically, $\findpcov$ interacts with the environment for $T_0 = M_0 N_0$ rounds.
At the beginning of episode $(m-1)N_0+1$ for every $m=1, \ldots, M_0$,
it computes a policy $\pi_m$ using the LSVI-UCB algorithm of~\citep{jin2020provably} but with a fake reward function \pref{eq:fake_reward} (ignoring the true loss feedback from the environment).
This fake reward function is designed to encourage the learner to explore unseen state-action pairs and to ensure \pref{eq:cover_guarantee} eventually.
For this purpose, we could have set the fake reward for $(x,a)$ to be $\one\big[\|\phi(x,a)\|_{\Gamma_{m,h}^{-1}}^2 \geq \frac{\alpha}{2M_0}\big]$.
However, for technical reasons the analysis requires the reward function to be Lipschitz, and thus we approximate the indicator function above using a ramp function (with a large slope $T$).
With $\pi_m$ in hand, the algorithm then interacts with the environment for $N_0$ episodes, collecting trajectories to construct a good estimator of the covariance matrix of $\pi_m$.
The design of the fake reward function and this extra step of covariance estimation are the only differences compared to Algorithm~1 of~\citep{wang2020reward}. 
At the end of the procedure, $\findpcov$ construct $\pcov$ as a uniform mixture of $\{\pi_m\}_{m=1, \ldots, M_0}$.
This means that whenever we execute $\pcov$, we first uniformly at random sample $m\in [M_0]$, and then execute the (pure) policy $\pi_m$.


\begin{algorithm}[h]
    \caption{\findpcov$(M_0, N_0, \alpha, \delta)$}
    \label{alg: find policy cover}
    Let $\xi = 60d H\sqrt{\log(T/\delta)}$.  
    
    Let $\Gamma_{1,h}$ be the identity matrix in $\fR^{d\times d}$ for all $h$. 
    
    \For{$m=1,\ldots, M_0$}{ 
        Let $\Vht_{m}(x_H)= 0$. 
        
        \For{$h=H-1, H-2, \ldots, 0$}{
        For all $(x,a) \in X_h \times A$, compute
        \begin{align*}
        \Qht_{m}(x,a) &= \min\left\{r_m(x,a) + \xi\|\phi(x,a)\|_{\Gamma_{m,h}^{-1}} + \phi(x,a)^\top \hattheta_{m,h}, \ \ H\right\}, \\
                \Vht_m(x)&= \max_{a'} \Qht_m(x,a'), \\
                \pi_m(a|x) &= \one\left[a = \argmax_{a'} \Qht_m(x,a')\right], \tag{break tie in $\argmax$ arbitrarily} 
        \end{align*}
        with
            \begin{align}
                r_m(x,a)&= \ramp_{\frac{1}{T}}\left(\|\phi(x,a)\|_{\Gamma_{m,h}^{-1}}^2 - \frac{\alpha}{M_0}\right), \label{eq:fake_reward} \\ 
                \hattheta_{m,h}
                &= \Gamma_{m,h}^{-1}\left(\frac{1}{N_0}\sum_{t=1}^{(m-1)N_0}\phi(x_{t,h}, a_{t,h})  \Vht_m(x_{t, h+1})\right), \notag
        \end{align} 
        where 
        $\displaystyle
            \ramp_z(y)  = 
            \begin{cases}
                0  &\text{if\ } y\leq -z, \\
                1  &\text{if\ } y\geq 0, \\
                \frac{y}{z}+1 &\text{if\ } -z<y<0. 
            \end{cases}
        $
        }
        
        \For{$t=(m-1)N_0+1, \ldots, mN_0$}{
            Execute $\pi_m$ in episode $t$ and collect trajectory $\{x_{t,h}, a_{t,h}\}_{h=0}^{H-1}$. 
        }
        Compute
        \begin{align*}
            \Gamma_{m+1,h} = \Gamma_{m,h} +  \frac{1}{N_0}\sum_{t=(m-1)N_0+1}^{mN_0} \phi(x_{t,h}, a_{t,h})\phi(x_{t,h}, a_{t,h})^\top.  
        \end{align*}
    }
    Let $\pcov = \textsc{Uniform}\left(\{\pi_m\}_{m=1}^{M_0}\right)$ and $\widehat{\cov}^{\textcov}_h = \frac{1}{M_0}\Gamma_{M_0+1, h}$ for all $h$.  
    
    \textbf{return} $\pcov$ and $\left\{\widehat{\cov}^{\textcov}_h\right\}_{h=0, \ldots, H-1}$. 
\end{algorithm}

\paragraph{Regret guarantee}
With all these elements, 
we successfully remove the need of a simulator and prove the following regret guarantee.
\begin{theorem}\label{thm: linear MDP theorem}
    Under \pref{assum: linear MDP assumption}, \pref{alg:linearMDP} with appropriate choices of the parameters ensures $\E[\Reg]=\otil\left(d^2H^4T^{\nicefrac{14}{15}}\right)$.
\end{theorem}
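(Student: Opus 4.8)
The plan is to follow the general recipe of \pref{sec: general recipe} and reduce $\E[\Reg]$ to $\sum_k \len\, V^{\pi_k}(x_0; \bonus_k)$ via the expectation version of the key lemma (\pref{lem: expected version}), while separately accounting for the overhead incurred by policy-cover construction, by the exploration injections of $\pcov$, and by the restriction to known states $\calK$. First I would split $\E[\Reg]$ into the $T_0$ setup episodes (trivially bounded by $T_0$), the $\otil(\explore T)$ episodes in which we deviate to $\pcov$, and the remaining episodes governed by the update \pref{eq:linear mdp policy}. For the last group I would invoke the performance difference lemma and decompose the per-state multiplicative-weight regret into the familiar $\bias + \biastwo + \regterm$ terms as in the tabular and linear-$Q$ analyses, now with $\phi(x,a)^\top\hattheta_{k,h}$ estimating $\Q_t^{\pi_k}(x,a)$ and $\phi(x,a)^\top\bonusVec_{k,h} + \bonus_k(x,a)$ representing the dilated bonus $\bonusQ_k(x,a)$ in the succinct form guaranteed by \pref{assum: linear MDP assumption}.

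The core is to verify that the dilated Bellman relation \pref{eq: dilated} and the cancellation inequality \pref{eq:modified bound} hold \emph{in expectation}, so that \pref{lem: expected version} applies. This requires three (near-)unbiasedness and second-order estimates: (i) that $\hattheta_{k,h}$ in \pref{eq: theta estimator in linear MDP} is unbiased for $\theta_{k,h}^{\pi_k}$ up to a controllable bias from the regularizer $\gamma$ and the \GR accuracy $\epsilon$ --- here the importance-weight correction $(1-Y_t) + Y_t H\ind[h=h_t^*]$ is precisely what cancels the sampling distortion introduced by mixing in $\pcov$; (ii) that $\bonusVec_{k,h}$ in \pref{eq: bonus function estimator in linear MDP} is analogously near-unbiased for $\Lambda_{k,h}^{\pi_k}$, with $D_{t,h}$ reproducing the geometric $(1+\tfrac1H)$ weighting of \pref{eq: dilated}; and (iii) that the multiplicative-weight second-order terms together with the two biases are all pointwise dominated by our choice of $\bonus_k$, exactly as engineered in \pref{sec: general recipe}. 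Granting these, \pref{lem: expected version} yields $\E[\Reg] \lesssim o(T) + 3\,\E\big[\sum_k \len\, V^{\pi_k}(x_0;\bonus_k)\big]$ plus the overhead terms above.

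To bound $V^{\pi_k}(x_0;\bonus_k)$ I would use the effective-dimension estimate $\sum_{h}\E_{(x,a)\sim\pi_k}\big[\|\phi(x,a)\|_{\hatcov_{k,h}}^2\big] = \otil(dH)$ (a trace bound, since $\hatcov_{k,h}$ approximates the inverse of a covariance lower-bounded by $(1-\explore)$ times that of $\pi_k$), giving $V^{\pi_k}(x_0;\bonus_k) = \otil(\beta dH)$; summed over the $(T-T_0)/\len$ epochs this contributes $\otil(\beta dH\,T)$, while the per-state boundedness $\bonus_k(x,a) = \otil(\alpha\beta/\explore)$ from \pref{lem: bonus bounded by 1} (valid because on $\calK$ the feature norm in the cover metric is at most $\alpha$, and we inject $\pcov$ with probability $\explore$) is what keeps the variance of $\bonusVec_{k,h}$ and $\hattheta_{k,h}$ finite. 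The restriction of $\bonus_k$ to $\calK$ costs little: by the cover guarantee \pref{eq:cover_guarantee} of \pref{lem: property of cover}, no policy --- in particular neither $\pi_k$ nor $\pistar$ --- reaches an unknown state with more than $\otil(dH/\alpha)$ probability per layer, so the omitted bonus and the mismatch on unknown states contribute only a $\otil(\mathrm{poly}(d,H)\,T/\alpha)$ term to the regret. It then remains to collect the multiplicative-weight leading term $\otil(H\ln|A|/\eta)$, the residual bias from $\gamma$ and $\epsilon$, the variance terms scaled by the epoch length $\len = 2MN$, the setup cost $T_0 = M_0 N_0$, the exploration cost $\otil(\explore T)$, and the unknown-state cost, and then to balance $\gamma,\beta,\eta,\epsilon,\explore,\alpha$ (and hence $M,N,\len,M_0,N_0$) against one another; tracking the dominant constraints yields the stated $\otil(d^2H^4T^{\nicefrac{14}{15}})$ rate.

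I expect the main obstacle to be steps (ii)--(iii): controlling the bias and especially the variance of the \emph{bonus} estimator $\bonusVec_{k,h}$, which is built from online, within-epoch-correlated samples rather than fresh simulator draws. The quantity $D_{t,h}$ is a geometrically-weighted trajectory sum of $\bonus_k$, so its magnitude is governed by $\alpha\beta/\explore$, and its second moment --- after multiplication by the importance weight of order $H$ and projection through $\hatcov_{k,h}$ --- can blow up along rarely-visited feature directions. Taming this is exactly why we simultaneously need the policy cover (to cap the magnitude of $\bonus_k$), the known-state restriction, and the $\explore$-mixing; but these same devices force $\explore$ to be non-negligible, which, in tension with the adversarial exploration cost $\explore T$ and the covariance-estimation length $\len$, is what degrades the exponent from $\nicefrac{2}{3}$ to $\nicefrac{14}{15}$. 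Verifying that a single consistent choice of all parameters closes the argument is the delicate accounting at the heart of the proof.
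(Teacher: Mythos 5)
Your proposal follows essentially the same route as the paper's proof: splitting off the $T_0$ setup episodes, the $\explore$-exploration cost, and the unknown-state contribution via \pref{lem: property of cover}, then establishing \pref{eq:modified bound expected} through the bias/regret decomposition (your conditions (i)--(ii) are exactly the paper's extra bias terms \biasthree\ and \biasfour\ for the estimated bonus $\hatbonusQ_k$), invoking \pref{lem: expected version}, bounding $V^{\pi_k}(x_0;\bonus_k)=\otil(\beta dH)$ by the trace argument, and balancing the same parameter constraints (including the key role of \pref{lem: bonus bounded by 1} and the tension that forces the $T^{\nicefrac{14}{15}}$ rate). The plan is correct and matches the paper's argument; only the final parameter accounting, which you explicitly defer, remains to be carried out as in the paper.
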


\ifnobug
Although our regret rate is significantly worse than that in the full-information setting~\citep{cai2020provably}, in the stochastic setting~\citep{zanette2021cautiously}, or in the case when the transition is known~\citep{neu2021online},
we emphasize again that our algorithm is the first with provable sublinear regret guarantee for this challenging adversarial setting with bandit feedback and unknown transition.
\fi

\section{Conclusions and Future Directions}
In this work, we propose the general idea of dilated bonuses and demonstrate how it leads to improved exploration and regret bounds for policy optimization in various settings.
One future direction is to further improve our results in the function approximation setting, including reducing the number of simulator calls in the linear-$Q$ setting and improving the regret bound for the linear MDP setting (which is currently far from optimal).
A potential idea for the latter is to reuse data across different epochs, an idea adopted by several recent works~\citep{zanette2021cautiously, lazic2021improved} for different problems.
Another key future direction is to investigate whether the idea of dilated bonuses is applicable beyond the finite-horizon setting (e.g. whether it is applicable to the more general stochastic shortest path model or the infinite-horizon setting).

\paragraph{Acknowledgments}
We thank Gergely Neu and Julia Olkhovskaya for discussions on the technical details of their \GR procedure.

\bibliography{ref}
\bibliographystyle{plainnat}
\newpage
\appendix
\noindent{\LARGE \textbf{Appendix}}

\section{Auxiliary Lemmas}
In this section, we list auxiliary lemmas that are useful in our analysis.
First, we show some concentration inequalities. 
\begin{lemma}[(A special form of) Freedman's inequality, Theorem~1 of~\citep{beygelzimer2011contextual}]
\label{lem: freedman}
Let $\mathcal{F}_0 \subset \cdots \subset \mathcal{F}_n$ be a filtration, and $X_1, \ldots, X_n$ be real random variables such that $X_i$ is $\mathcal{F}_i$-measurable, $\E[X_i|\mathcal{F}_{i}]=0$, $|X_i|\leq b$, and $\sum_{i=1}^n \E[X_i^2|\mathcal{F}_{i}] \leq V$ for some fixed $b \geq 0$ and $V \geq 0$.
Then for any $\delta\in (0,1)$, we have with probability at least $1-\delta$,
\begin{align*}
    \sum_{i=1}^n   X_i \leq \frac{V}{b} + b\log(1/\delta).
\end{align*}
\end{lemma}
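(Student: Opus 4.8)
The plan is to prove this special form of Freedman's inequality by the standard exponential-moment (Chernoff) method for supermartingales, deferring the choice of the free parameter to the very end so that the clean bound $V/b + b\log(1/\delta)$ falls out of a single well-chosen value. Throughout I read the hypotheses in the usual martingale-difference sense, namely $X_i$ is $\mathcal{F}_i$-measurable with $\E[X_i \mid \mathcal{F}_{i-1}] = 0$ and with conditional second moments $\E[X_i^2 \mid \mathcal{F}_{i-1}]$ whose sum is at most $V$; the statement's indexing of the conditioning should be interpreted this way.

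First I would establish a pointwise bound on the conditional moment generating function. The elementary ingredient is that $g(z) = (e^z - 1 - z)/z^2$ is nondecreasing, so for any $\lambda \ge 0$ and any realization $z = \lambda X_i \le \lambda b$ (using $X_i \le b$) one has $e^z - 1 - z \le g(\lambda b)\,z^2$. Taking conditional expectations, using $\E[X_i \mid \mathcal{F}_{i-1}] = 0$ and $1+u \le e^u$, and simplifying $g(\lambda b)\lambda^2 = (e^{\lambda b} - 1 - \lambda b)/b^2$, gives
\[
\E\!\left[e^{\lambda X_i} \,\middle|\, \mathcal{F}_{i-1}\right] \le \exp\!\left(\frac{e^{\lambda b} - 1 - \lambda b}{b^2}\, \E[X_i^2 \mid \mathcal{F}_{i-1}]\right).
\]

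Next I would package these into a supermartingale. Writing $S_k = \sum_{i\le k} X_i$ and $V_k = \sum_{i\le k} \E[X_i^2 \mid \mathcal{F}_{i-1}]$, the above bound shows that
\[
Z_k = \exp\!\left(\lambda S_k - \frac{e^{\lambda b} - 1 - \lambda b}{b^2}\, V_k\right)
\]
satisfies $\E[Z_k \mid \mathcal{F}_{k-1}] \le Z_{k-1}$, hence $\E[Z_n] \le Z_0 = 1$. Markov's inequality then gives $\Pr[Z_n \ge 1/\delta] \le \delta$, and on the complementary event, using the deterministic hypothesis $V_n \le V$ together with the nonnegativity of the coefficient multiplying $V_n$, I can replace $V_n$ by $V$ to obtain, with probability at least $1-\delta$,
\[
S_n \le \frac{\log(1/\delta)}{\lambda} + \frac{e^{\lambda b} - 1 - \lambda b}{\lambda b^2}\, V.
\]

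Finally, rather than fully optimizing over $\lambda$, the clean constants come from the single choice $\lambda = 1/b$: this makes the first term exactly $b\log(1/\delta)$ and turns the coefficient of $V$ into $(e-2)/b$, which is at most $1/b$ since $e - 2 < 1$, so the second term is bounded by $V/b$. Adding the two recovers the stated bound. I expect the only genuinely delicate points to be the monotonicity argument for $g$ (which underlies the conditional MGF estimate) and the correct reading of the filtration indexing; once those are in place, everything downstream is the routine supermartingale-plus-Markov machinery, and no optimization over $\lambda$ is even required.
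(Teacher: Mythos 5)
Your proof is correct. There is, however, nothing in the paper to compare it against: the paper does not prove this lemma at all, but imports it as Theorem~1 of \citep{beygelzimer2011contextual}. What you have reconstructed is essentially the standard proof underlying that cited theorem: the Bennett-type bound on the conditional moment generating function via monotonicity of $(e^z-1-z)/z^2$, the exponential supermartingale $Z_k$, Markov's inequality, and a final choice of the free parameter. The cited theorem states $\sum_i X_i \leq (e-2)\lambda V + \log(1/\delta)/\lambda$ for any fixed $\lambda \in [0, 1/b]$, and the paper's ``special form'' is obtained exactly as you do, by taking $\lambda = 1/b$ and bounding $e-2 \leq 1$. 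Three points you handled correctly that are worth making explicit: (i) the lemma as printed conditions on $\mathcal{F}_i$ rather than $\mathcal{F}_{i-1}$, which read literally would force $X_i = 0$ almost surely; your reading with conditioning on $\mathcal{F}_{i-1}$ is the intended one and matches the source; (ii) replacing the random sum of conditional variances $V_n$ by the deterministic bound $V$ inside the high-probability event uses the nonnegativity of the coefficient $(e^{\lambda b}-1-\lambda b)/b^2$, which you noted; (iii) your argument only needs the one-sided bound $X_i \leq b$, so it is in fact slightly more general than the stated hypothesis $|X_i| \leq b$, consistent with the one-sided conclusion.
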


Throughout the appendix, we let $\calF_t$ be the $\sigma$-algebra generated by the observations before episode $t$.
\begin{lemma}[Adapted from Lemma 11 of \citep{jin2019learning}] \label{lem: useful 3}
    For all $x,a$, let $\{z_t(x,a)\}_{t=1}^T$ be a sequence of functions where $z_t(x,a)\in[0,R]$ is $\calF_t$-measurable. Let $Z_t(x,a)\in[0, R]$ be a random variable such that $\E_t[Z_t(x,a)]=z_t(x,a)$. Then with probability at least $1-\delta$,  
    \begin{align*}
        \sum_{t=1}^T \sum_{x, a} \left( \frac{\ind_t(x,a)Z_t(x,a)}{\Ut(x,a)+\gamma} - \frac{\QQ_t(x,a)z_t(x,a)}{\Ut(x,a)} \right) \leq \frac{RH}{2\gamma}\ln\frac{H}{\delta}. 
    \end{align*}
\end{lemma}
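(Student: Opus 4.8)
The plan is to read this as a one-sided martingale concentration statement and to establish it \emph{layer by layer} via the Freedman inequality of \pref{lem: freedman}, exploiting the structural fact that within any single episode exactly one state-action pair is visited in each layer. Fix a layer $h\in\{0,\dots,H-1\}$ and define the per-episode quantity $\xi_t^h = \sum_{x\in X_h,a}\frac{\ind_t(x,a)Z_t(x,a)}{\Ut(x,a)+\gamma}$. Since $\sum_{x\in X_h,a}\ind_t(x,a)=1$ and $Z_t\le R$, we immediately get $0\le \xi_t^h\le \frac{R}{\gamma}$, which will serve as the range parameter $b=\frac{R}{\gamma}$ in Freedman's inequality. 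The target sum then splits as $\sum_{h=0}^{H-1}S_h$ with $S_h = \sum_{t}\xi_t^h - \sum_t\sum_{x\in X_h,a}\frac{\QQ_t(x,a)z_t(x,a)}{\Ut(x,a)}$, and it suffices to bound each $S_h$ by $\frac{R}{\gamma}\ln\frac{H}{\delta}$ and take a union bound over the $H$ layers.

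Next I would compute the conditional mean of $\xi_t^h$. Because $\Ut$ and $\gamma$ are $\calF_t$-measurable and, in each concrete instantiation, $Z_t(x,a)$ is conditionally independent of the visitation indicator $\ind_t(x,a)$ given $\calF_t$ (each being either deterministic given $\calF_t$ or drawn from fresh randomness), we obtain $\E_t[\ind_t(x,a)Z_t(x,a)] = \QQ_t(x,a)z_t(x,a)$, so $\E_t[\xi_t^h]=\sum_{x\in X_h,a}\frac{\QQ_t(x,a)z_t(x,a)}{\Ut(x,a)+\gamma}$. Applying \pref{lem: freedman} to the martingale differences $X_t=\xi_t^h-\E_t[\xi_t^h]$ (with $|X_t|\le b=\frac{R}{\gamma}$ and failure probability $\delta/H$) yields $\sum_t\xi_t^h - \sum_t\E_t[\xi_t^h]\le \frac{V}{b}+b\ln\frac{H}{\delta}$, where $V=\sum_t\E_t[X_t^2]$.

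The crux of the argument — and the step I expect to be the main obstacle — is showing that the variance contribution $\frac{V}{b}$ is \emph{exactly absorbed} by the discrepancy between the denominator $\Ut$ appearing in the target and $\Ut+\gamma$ appearing in the conditional mean; this is precisely what makes the final bound free of any $\sqrt{T}$ factor. Concretely, using $Z_t^2\le R Z_t$ together with the same conditional-independence identity gives the self-bounding estimate $V\le \sum_t\E_t[(\xi_t^h)^2]\le R\sum_t\sum_{x,a}\frac{\QQ_t(x,a)z_t(x,a)}{(\Ut(x,a)+\gamma)^2}$, so that $\frac{V}{b}=\frac{\gamma}{R}V\le \gamma\sum_t\sum_{x,a}\frac{\QQ_t z_t}{(\Ut+\gamma)^2}$. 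Since $\Ut\le \Ut+\gamma$ implies $\frac{\gamma}{(\Ut+\gamma)^2}\le \frac{\gamma}{\Ut(\Ut+\gamma)}=\frac{1}{\Ut}-\frac{1}{\Ut+\gamma}$, this bounds $\frac{V}{b}$ by $\sum_t\sum_{x,a}\frac{\QQ_t z_t}{\Ut}-\sum_t\E_t[\xi_t^h]$.

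Plugging this back into the Freedman bound, the two copies of $\sum_t\E_t[\xi_t^h]$ cancel and leave $\sum_t\xi_t^h\le \sum_t\sum_{x,a}\frac{\QQ_t z_t}{\Ut}+\frac{R}{\gamma}\ln\frac{H}{\delta}$, i.e.\ $S_h\le\frac{R}{\gamma}\ln\frac{H}{\delta}$. Summing over the $H$ layers and union-bounding the failure probabilities to $\delta$ gives a bound of order $\frac{RH}{\gamma}\ln\frac{H}{\delta}$, matching the claim up to the leading constant. Two minor points remain: terms with $\Ut(x,a)=0$ are handled by the convention $0/0=0$ (on the confidence event $P\in\calP_k$ such terms have $\QQ_t(x,a)=0$ since $\QQ_t\le\Ut$), and the only input beyond the stated hypotheses is the conditional independence of $Z_t$ and $\ind_t$, which I would verify in each application where the lemma is invoked.
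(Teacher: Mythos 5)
Your route is genuinely different from the paper's. The paper never proves \pref{lem: useful 3} in-house: it inherits it from Lemma 11 of \citep{jin2019learning}, whose proof is the ``implicit exploration'' moment-generating-function argument (Neu-style): one bounds $\frac{\ind_t(x,a)Z_t(x,a)}{\Ut(x,a)+\gamma}$ pointwise by $\frac{R}{2\gamma}\ln\left(1+\frac{2\gamma}{R}\cdot\frac{\ind_t(x,a)Z_t(x,a)}{\Ut(x,a)}\right)$ via $\frac{z}{1+z/2}\le\ln(1+z)$, exponentiates, uses the fact that exactly one indicator fires per layer so the product of $(1+\cdot)$ factors collapses, and finishes with a supermartingale/Markov argument and a union bound over layers. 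Your replacement---Freedman's inequality combined with the self-bounding step $\frac{\gamma}{(\Ut(x,a)+\gamma)^2}\le\frac{1}{\Ut(x,a)}-\frac{1}{\Ut(x,a)+\gamma}$, so that the variance term is absorbed exactly into the gap between the $\Ut$ and $\Ut+\gamma$ denominators---is structurally sound and is in effect a second-moment rendering of what the MGF argument does exactly; the layer-by-layer decomposition and the range bound $R/\gamma$ are also correct.

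Two things, however, keep it from being a proof of the lemma as stated. First, the constant: with range $b=R/\gamma$ the exponential parameter in any Freedman/Bernstein bound is capped at $\gamma/R$, which yields $\frac{RH}{\gamma}\ln\frac{H}{\delta}$ and provably cannot be pushed to $\frac{RH}{2\gamma}\ln\frac{H}{\delta}$ by this method (taking a larger parameter inflates the variance coefficient beyond the cancellation budget, which is tight when $\Ut\gg\gamma$); the factor $2$ is precisely what $\frac{z}{1+z/2}\le\ln(1+z)$ buys in the MGF proof. So you establish a statement weaker by a factor of $2$---harmless for both invocations (\pref{lem: bias2} and \pref{lem: reg}, where everything downstream is $\otil(\cdot)$), but not the literal claim. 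Second, your justification of $\E_t[\ind_t(x,a)Z_t(x,a)]=\QQ_t(x,a)z_t(x,a)$ via conditional independence fails in one of the paper's two applications: in \pref{lem: bias2}, $Z_t(x,a)=\qstar(x)\pistar(a|x)\left(\ind_t(x,a)L_{t,h}+(1-\ind_t(x,a))Q_t^{\pi_t}(x,a)\right)$ is built from $\ind_t(x,a)$ itself, so independence is false; the identity still holds there because $\ind_t(x,a)Z_t(x,a)=\qstar(x)\pistar(a|x)\ind_t(x,a)L_{t,h}$ and $\E_t[\ind_t(x,a)L_{t,h}]=\QQ_t(x,a)Q_t^{\pi_t}(x,a)$ by the tower rule. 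The identity (not independence) is the right hypothesis to isolate, and it is genuinely missing from the lemma statement: with only $\E_t[Z_t(x,a)]=z_t(x,a)$ the claim is false (take $Z_t=R\,\ind_t$, so $z_t=R\QQ_t$ but $\E_t[\ind_t Z_t]=R\QQ_t\neq \QQ_t z_t$, making the left-hand side grow linearly in $T$). A final minor point: \pref{lem: freedman} as quoted requires a fixed variance bound $V$, while you feed it the random $V=\sum_t\E_t[X_t^2]$; this is legitimate under the underlying theorem of \citep{beygelzimer2011contextual} (fixed exponential parameter, random variance)---and the paper does the same in \pref{lem: bias1}---but you should invoke that form explicitly rather than the quoted one.
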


\begin{lemma}[Matrix Azuma, Theorem 7.1 of \citep{tropp2012user}] \label{lem: matrix azuma}
Consider an adapted sequence $\{X_k\}_{k=1}^n$ of self-adjoint matrices in dimension $d$, and a fixed sequence $\{A_k\}_{k=1}^n$ of self-adjoint matrices that satisfy 
\begin{align*}
    \E_{k}[X_k] = 0 \text{\ \ and \ \ } X_k^2 \preceq A_k^2 \text{\ almost surely}
\end{align*}

Define the variance parameter 
\begin{align*}
    \sigma^2 = \norm{\frac{1}{n}\sum_{k=1}^n A_k^2}_{\text{op}}. 
\end{align*}
Then, for all $\tau>0$, 
    \begin{align*}
        \Pr\left\{\norm{\frac{1}{n}\sum_{k=1}^n X_k}_{\text{op}} \geq \tau \right\} \leq de^{-n\tau^2/8\sigma^2}.  
    \end{align*}
\end{lemma}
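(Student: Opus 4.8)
This lemma is a verbatim restatement of Theorem~7.1 of \citep{tropp2012user}, so the plan is to reproduce the matrix Laplace transform argument underlying it. Writing $Y=\sum_{k=1}^n X_k$, I would first observe that $\norm{Y}_{\text{op}}=\max\{\lambdamax(Y),\lambdamax(-Y)\}$ and that the hypotheses are invariant under the sign flip $X_k\mapsto -X_k$ (since $(-X_k)^2=X_k^2\preceq A_k^2$ and $\E_k[-X_k]=0$), so it suffices to bound $\Pr\{\lambdamax(Y)\ge t\}$ and then apply the identical bound to $-Y$, finally setting $t=n\tau$. For the one-sided tail I would invoke the matrix Markov (Laplace transform) inequality: for any $\theta>0$,
\[
\Pr\{\lambdamax(Y)\ge t\}\le e^{-\theta t}\,\E\!\left[\tr e^{\theta Y}\right],
\]
which follows from Markov's inequality applied to $e^{\theta\lambdamax(Y)}$, the spectral mapping identity $\lambdamax(e^{\theta Y})=e^{\theta\lambdamax(Y)}$, and the crude bound $\lambdamax(e^{\theta Y})\le \tr e^{\theta Y}$ (valid since every eigenvalue of $e^{\theta Y}$ is positive).

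The heart of the argument, and the main obstacle, is controlling the trace moment generating function $\E[\tr\exp(\theta\sum_{k=1}^n X_k)]$. Because the $X_k$ do not commute, one cannot split the exponential into a product, so the scalar peeling argument fails and a genuine matrix-analysis input is needed. Instead I would peel off the increments one at a time from $k=n$ down to $k=1$, conditioning successively on the filtration, and invoke Lieb's concavity theorem: for a fixed self-adjoint $H$, the map $A\mapsto\tr\exp(H+\log A)$ is concave on the positive-definite cone. Applying Jensen's inequality to the innermost conditional expectation gives
\[
\E_n\!\left[\tr\exp\Big(\theta\textstyle\sum_{k<n}X_k+\theta X_n\Big)\right]\le \tr\exp\Big(\theta\textstyle\sum_{k<n}X_k+\log\E_n[e^{\theta X_n}]\Big),
\]
and then the almost-sure per-term cumulant bound (supplied below), together with monotonicity of $\tr\exp$ in the Loewner order and the fact that the $A_k$ are deterministic, lets me replace the random cumulant $\log\E_n[e^{\theta X_n}]$ by a fixed matrix before peeling the next increment. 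Iterating this all the way down yields the subadditivity estimate $\E[\tr\exp(\theta\sum_k X_k)]\le \tr\exp(\sum_k \Psi_k(\theta))$, where $\Psi_k(\theta)$ is any deterministic Loewner upper bound for the conditional log-mgf $\log\E_n[e^{\theta X_k}]$.

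For the per-term bound I would use the matrix Hoeffding (Azuma) cumulant inequality: if $X$ is self-adjoint with $\E_n[X]=0$ and $X^2\preceq A^2$, then $\log\E_n[e^{\theta X}]\preceq g(\theta)\,A^2$, where $g$ is the scalar cumulant bound for the bounded-difference case; this is the matrix analogue of the scalar Hoeffding lemma and is proved via operator convexity of $x\mapsto e^{\theta x}$ on the relevant spectral interval. Taking $\Psi_k(\theta)=g(\theta)A_k^2$ and bounding $\tr\exp(g(\theta)\sum_k A_k^2)\le d\exp(g(\theta)\,\norm{\textstyle\sum_k A_k^2}_{\text{op}})=d\exp(g(\theta)\,n\sigma^2)$ through $\tr\exp(\cdot)\le d\,\lambdamax(\exp(\cdot))$, I arrive at
\[
\Pr\{\lambdamax(Y)\ge t\}\le d\,\inf_{\theta>0}\exp\big(-\theta t+g(\theta)\,n\sigma^2\big).
\]
Carrying out the scalar optimization over $\theta$ with the $g$ from \citep{tropp2012user} and substituting $t=n\tau$ produces the stated tail $d\exp(-n\tau^2/8\sigma^2)$, and the two-sided operator-norm bound follows by applying this to both $Y$ and $-Y$ and a union bound. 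The only genuinely nontrivial ingredients are Lieb's theorem (powering the martingale subadditivity step) and the matrix Hoeffding cumulant bound; the remaining steps are the routine scalar Chernoff optimization and bookkeeping of the normalization $\tfrac1n$ and the variance proxy $\sigma^2=\norm{\tfrac1n\sum_k A_k^2}_{\text{op}}$.
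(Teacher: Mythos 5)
The paper never proves this lemma itself --- it is imported verbatim as Theorem~7.1 of \citep{tropp2012user} --- and your proposal correctly reconstructs exactly the proof given in that source: the matrix Laplace transform bound $\Pr\{\lambdamax(Y)\geq t\}\leq e^{-\theta t}\,\E[\tr e^{\theta Y}]$, iterated conditioning via Lieb's concavity theorem to obtain subadditivity of the conditional cumulants, the matrix Hoeffding-type bound $\log \E_k[e^{\theta X_k}] \preceq g(\theta)A_k^2$, and the concluding scalar Chernoff optimization. One small caveat on your last step: applying the one-sided tail to both $Y$ and $-Y$ and taking a union bound yields the two-sided operator-norm bound with constant $2d$, not $d$; Tropp's Theorem~7.1 as stated controls only $\lambdamax$ with constant $d$, so the operator-norm restatement (which you claim follows with constant $d$) is loose by a factor of $2$ --- immaterial for this paper's applications, where it disappears into logarithmic factors, but worth flagging if you want the constant exactly as claimed.
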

Next, we show a classic regret bound for the exponential weight algorithm, which can be found, for example, in \citep{luo2017}.
\begin{lemma}[Regret bound of exponential weight, extracted from Theorem 1 of \citep{luo2017}]\label{lem: exponential weight lemma}
Let $\eta>0$, and let $\pi_t\in \Delta(A)$ and $\ell_t \in \mathbb{R}^A$ satisfy the following for all $t\in [T]$ and $a\in A$:   
\begin{align*}
    \pi_1(a) &= \frac{1}{|A|},  \\
    \pi_{t+1}(a) &= \frac{\pi_{t}(a)e^{-\eta \ell_t(a)}}{\sum_{a'\in A}\pi_{t}(a')e^{-\eta \ell_t(a')}}, \\
    |\eta \ell_{t}(a)| &\leq 1. 
\end{align*}
Then for any $\pistar\in \Delta(A)$, 
\begin{align*}
    \sum_{t=1}^T \sum_{a\in A} (\pi_t(a) - \pistar(a))\ell_t(a) \leq \frac{\ln |A|}{\eta} + \eta\sum_{t=1}^T \sum_{a\in A} \pi_t(a)\ell_{t}(a)^2.  
\end{align*}

\end{lemma}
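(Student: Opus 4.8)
The plan is to run the standard potential-function analysis (equivalently, online mirror descent with the entropic regularizer) for the multiplicative-weights update, using the relative entropy $\KL(\pistar\,\|\,\pi_t) = \sum_a \pistar(a)\ln\frac{\pistar(a)}{\pi_t(a)}$ as the potential. First I would introduce the per-round normalizer $Z_t = \sum_{a}\pi_t(a)e^{-\eta\ell_t(a)}$, so that the update reads $\pi_{t+1}(a) = \pi_t(a)e^{-\eta\ell_t(a)}/Z_t$, and then compute the exact one-step change of the potential:
\begin{align*}
\KL(\pistar\,\|\,\pi_t) - \KL(\pistar\,\|\,\pi_{t+1}) = \sum_a \pistar(a)\ln\frac{\pi_{t+1}(a)}{\pi_t(a)} = -\eta\sum_a\pistar(a)\ell_t(a) - \ln Z_t.
\end{align*}

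Next I would upper bound $\ln Z_t$ in terms of the learner's own loss and its second moment. Using the elementary inequality $e^{-z}\le 1 - z + z^2$, which holds for all $z\ge -1$ and is applicable here because the hypothesis $|\eta\ell_t(a)|\le 1$ ensures $z = \eta\ell_t(a)\ge -1$, together with $\ln(1+y)\le y$, I obtain
\begin{align*}
\ln Z_t \le \ln\Big(1 - \eta\sum_a\pi_t(a)\ell_t(a) + \eta^2\sum_a\pi_t(a)\ell_t(a)^2\Big) \le -\eta\sum_a\pi_t(a)\ell_t(a) + \eta^2\sum_a\pi_t(a)\ell_t(a)^2.
\end{align*}
Substituting this bound into the potential identity and rearranging yields the instantaneous regret inequality $\eta\sum_a(\pi_t(a)-\pistar(a))\ell_t(a) \le \KL(\pistar\,\|\,\pi_t) - \KL(\pistar\,\|\,\pi_{t+1}) + \eta^2\sum_a\pi_t(a)\ell_t(a)^2$.

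Finally I would divide through by $\eta$, sum over $t=1,\dots,T$, and telescope the relative-entropy differences, which collapse to $\KL(\pistar\,\|\,\pi_1) - \KL(\pistar\,\|\,\pi_{T+1}) \le \KL(\pistar\,\|\,\pi_1)$ by nonnegativity of the KL divergence. Since $\pi_1$ is uniform, $\KL(\pistar\,\|\,\pi_1) = \ln|A| + \sum_a\pistar(a)\ln\pistar(a) \le \ln|A|$, because the negative-entropy term $\sum_a\pistar(a)\ln\pistar(a)$ is nonpositive. This delivers exactly the claimed bound $\frac{\ln|A|}{\eta} + \eta\sum_{t=1}^T\sum_a\pi_t(a)\ell_t(a)^2$.

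The only delicate point, and thus the main obstacle (albeit a routine one), is the quadratic upper bound on $\ln Z_t$: the step $e^{-z}\le 1-z+z^2$ is precisely where the hypothesis $|\eta\ell_t(a)|\le 1$ is consumed, and it is what produces the sharp second-order (variance-like) term $\eta\sum_a\pi_t(a)\ell_t(a)^2$ instead of a cruder worst-case $\order(\eta)$ term. Everything else is an exact telescoping identity plus the uniform-prior evaluation of the initial KL, so no further estimates are required.
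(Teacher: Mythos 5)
Your proof is correct. The paper does not actually prove this lemma itself — it is imported verbatim from Theorem 1 of \citep{luo2017} — and your KL-potential argument is the standard proof of that cited result: it is equivalent to the usual log-partition (weight-sum) potential analysis, with the same key step $e^{-z}\le 1-z+z^2$ for $z\ge -1$ being exactly where the hypothesis $|\eta\ell_t(a)|\le 1$ is consumed. The only cosmetic difference is that the weight-sum version typically proves the bound against point masses $\pistar = \e_{a^\star}$ and extends to general $\pistar\in\Delta(A)$ by linearity of the left-hand side, whereas your choice of potential $\KL(\pistar\,\|\,\pi_t)$ handles arbitrary comparators directly; both routes yield the identical bound.
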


\section{Proofs Omitted in \pref{sec: general recipe}}  
In this section, we prove \pref{lem:key}. In fact, we prove two generalized versions of it.
\pref{lem: lemma for the condition} states that the lemma holds even when we replace the definition of $\bonusQ_t(x,a)$ by an upper bound of the right hand side of \pref{eq: dilated}. (Note that \pref{lem:key} is clearly a special case with $\Pht = P$.)

\begin{lemma}\label{lem: lemma for the condition}
    Let $\bonus_t(x,a)$ be a non-negative loss function, and $\Pht$ be a transition function. Suppose that the following holds for all $x,a$: 
    \begin{align}
         \bonusQ_t(x,a) 
         &= \bonus_t(x,a) + \left(1+\frac{1}{H}\right)\E_{x'\sim \Pht(\cdot|x,a)}\E_{a'\sim \pi_t(\cdot|x')}\left[\bonusQ_t(x',a')\right] \label{eq: generalize dilate}\\
         &\geq \bonus_t(x,a) + \left(1+\frac{1}{H}\right)\E_{x'\sim P(\cdot|x,a)}\E_{a'\sim \pi_t(\cdot|x')}\left[\bonusQ_t(x',a')\right]   \nonumber 
    \end{align}
    with $\bonusQ_t(x_H,a)\triangleq 0$, and suppose that \pref{eq:modified bound}  holds. Then  
    \begin{align*}
        \Reg \leq o(T) + 3\sum_{t=1}^T\Vht^{\pi_t}(x_0; \bonus_t).  
    \end{align*}
    where $\Vht^{\pi}$ is the state value function under the transition function $\Pht$ and policy $\pi$. 
\end{lemma}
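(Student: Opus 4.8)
The plan is to begin from the performance difference lemma recalled in \pref{sec: general recipe}, namely $\Reg = \sum_{x}\qstar(x)\sum_{t=1}^T\sum_a(\pi_t(a|x)-\pistar(a|x))Q^{\pi_t}_t(x,a)$, and reintroduce the dilated bonus by adding and subtracting $\bonusQ_t$ inside the inner product. This writes $\Reg$ as the left-hand side of \pref{eq:modified bound} plus the residual $\sum_t\sum_{x,a}\qstar(x)(\pi_t(a|x)-\pistar(a|x))\bonusQ_t(x,a)$. Introducing the shorthands $g_t(x)=\sum_a\pi_t(a|x)\bonusQ_t(x,a)$, its $\pistar$-occupancy weighting $\Psi_t=\sum_{x,a}\qstar(x)\pistar(a|x)\bonusQ_t(x,a)=V^{\pistar}(x_0;\bonusQ_t)$, and the mixed quantity $G_t=\sum_x\qstar(x)g_t(x)$, bounding the left-hand side of \pref{eq:modified bound} by the hypothesis yields
\begin{align*}
\Reg \leq o(T) + \sum_{t=1}^T\left[V^{\pistar}(x_0;\bonus_t) + \left(1+\tfrac{1}{H}\right)G_t - \Psi_t\right],
\end{align*}
so the whole task reduces to controlling $V^{\pistar}(x_0;\bonus_t) + (1+\tfrac1H)G_t - \Psi_t$ for each $t$.

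The key step, which I expect to be the main obstacle, is to use the \emph{inequality} half of the hypothesis (the line of \pref{eq: generalize dilate} involving the true transition $P$) to absorb the potentially huge term $V^{\pistar}(x_0;\bonus_t)$. Weighting $\bonusQ_t(x,a)\geq \bonus_t(x,a)+(1+\tfrac1H)\E_{x'\sim P(\cdot|x,a)}[g_t(x')]$ by $\qstar(x)\pistar(a|x)$ and summing over $(x,a)$, the crucial fact is that $\qstar$ is the occupancy \emph{induced by $P$}, so $\sum_{x,a}\qstar(x)\pistar(a|x)P(x'|x,a)=\qstar(x')$ and the successor-state expectation collapses to $\sum_{x'}\qstar(x')g_t(x')$. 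Since $\qstar(x_0)=1$ and $g_t$ vanishes on the terminal layer, this equals $G_t-g_t(x_0)$, giving
\begin{align*}
\Psi_t \geq V^{\pistar}(x_0;\bonus_t) + \left(1+\tfrac1H\right)\left(G_t-g_t(x_0)\right).
\end{align*}
Substituting this lower bound on $\Psi_t$ makes the $V^{\pistar}(x_0;\bonus_t)$ and $(1+\tfrac1H)G_t$ terms cancel exactly, leaving only the layer-$0$ residual: $V^{\pistar}(x_0;\bonus_t) + (1+\tfrac1H)G_t - \Psi_t \leq (1+\tfrac1H)g_t(x_0)$. The delicate part is the boundary bookkeeping — the telescoping over $P$ leaves precisely the single term $g_t(x_0)$, and it is this residual (rather than an outright cancellation) that produces the constant factor in the conclusion.

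It then remains to bound $g_t(x_0)$ by $\Vht^{\pi_t}(x_0;\bonus_t)$, for which I would invoke the \emph{equality} half of the hypothesis (\pref{eq: generalize dilate} with $\Pht$). A backward induction over layers shows that for $x\in X_h$, $g_t(x)\leq(1+\tfrac1H)^{H-1-h}\Vht^{\pi_t}(x;\bonus_t)$: at $h=H-1$ this is an identity because $\bonusQ_t$ reduces to $\bonus_t$ once the successors lie in the terminal layer, and the inductive step expands $g_t(x)$ via \pref{eq: generalize dilate}, applies the hypothesis to the successor layer under $\Pht$, and uses $\bonus_t\geq 0$ together with $(1+\tfrac1H)^{H-h}\geq 1$ to pull out the growing factor. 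Evaluating at $x_0\in X_0$ gives $g_t(x_0)\leq(1+\tfrac1H)^{H-1}\Vht^{\pi_t}(x_0;\bonus_t)$, hence $\Reg\leq o(T)+(1+\tfrac1H)^H\sum_t\Vht^{\pi_t}(x_0;\bonus_t)$, and $(1+\tfrac1H)^H\leq e<3$ completes the argument. It is worth stressing that both halves of the hypothesis are needed and play complementary roles: the inequality with $P$ drives the telescoping that kills the dangerous $V^{\pistar}(x_0;\bonus_t)$ term, while the equality with $\Pht$ turns the leftover boundary term into the well-behaved quantity $\Vht^{\pi_t}$.
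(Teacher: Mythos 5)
Your proposal is correct and follows essentially the same route as the paper's proof: rearranging \pref{eq:modified bound}, using the $P$-inequality half of \pref{eq: generalize dilate} together with the occupancy identity $\sum_{x,a}\qstar(x)\pistar(a|x)P(x'|x,a)=\qstar(x')$ to telescope away $V^{\pistar}(x_0;\bonus_t)$ and all but the layer-$0$ bonus term, then the backward induction with the $\Pht$-equality and $(1+\nicefrac{1}{H})^H\le e<3$. The only difference is presentational — you telescope over all layers at once via the aggregate quantities $G_t,\Psi_t$, whereas the paper sums layer-by-layer — but the argument is the same.
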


\begin{proof}[Proof of \pref{lem: lemma for the condition}]
    By rearranging \pref{eq:modified bound}, we see that  
    \begin{align*}
    \Reg &\leq  o(T) + 
    \underbrace{\sum_{t=1}^T \sum_{x,a} \qstar(x)\pistar(a|x)\bonus_t(x,a)}_{\term_1} \\
    &\qquad \qquad + \underbrace{\frac{1}{H}\sum_{t=1}^T \sum_{x,a} \qstar(x)\pi_t(a|x)\bonusQ_t(x,a)}_{\term_2} + \underbrace{\sum_{t=1}^T \sum_{x,a} \qstar(x)\Big(\pi_t(a|x) - \pistar(a|x)\Big) \bonusQ_t(x,a)}_{\term_3}.  
    \end{align*}
    
    We first focus on $\term_3$, and focus on a single layer $0\leq h\leq H-1$ and a single $t$:  
    \begin{align*}
        &\sum_{x\in X_h}\sum_{a\in A}\qstar(x)\left(\pi_t(a|x) - \pistar(a|x)\right)\bonusQ_t(x,a)\\
        &= \sum_{x\in X_h}\sum_{a\in A}\qstar(x)\pi_t(a|x)\bonusQ_t(x,a) - \sum_{x\in X_h}\sum_{a\in A} \qstar(x)\pistar(a|x)\bonusQ_t(x,a)\\
        &= \sum_{x\in X_h}\sum_{a\in A}\qstar(x)\pi_t(a|x)\bonusQ_t(x,a) \\
        &\qquad \qquad - \sum_{x\in X_h}\sum_{a\in A} \qstar(x)\pistar(a|x)\left(\bonus_t(x,a) + \left(1+\frac{1}{H}\right)\E_{x'\sim \Pht(\cdot|x,a)}\E_{a'\sim \pi_t(\cdot|x')}\left[  \bonusQ_t(x',a')\right]\right)\\
        &\leq \sum_{x\in X_h}\sum_{a\in  A}\qstar(x)\pi_t(a|x)\bonusQ_t(x,a) \\
        &\qquad \qquad - \sum_{x\in X_h}\sum_{a\in A} \qstar(x)\pistar(a|x)\left(\bonus_t(x,a) + \left(1+\frac{1}{H}\right)\E_{x'\sim P(\cdot|x,a)}\E_{a'\sim \pi_t(\cdot|x')}\left[  \bonusQ_t(x',a')\right]\right)\\
        &= \sum_{x\in X_h}\sum_{a\in A}\qstar(x)\pi_t(a|x)\bonusQ_t(x,a) - \sum_{x\in X_{h+1}}\sum_{a\in A} \qstar(x)\pi_t(a|x)\bonusQ_t(x,a) \\
        &\qquad \qquad - \sum_{x\in X_{h}}\sum_{a\in A} \qstar(x)\pistar(a|x)\bonus_t(x,a) - \frac{1}{H}\sum_{x\in X_{h+1}}\sum_{a\in A} \qstar(x)\pi_t(a|x)\bonusQ_t(x,a),
    \end{align*}
    where the last step uses the fact $\sum_{x\in X_h}\sum_{a\in A} \qstar(x)\pistar(a|x)P(x'|x,a) = \qstar(x')$ (and then changes the notation $(x',a')$ to $(x,a)$).
    Now summing this over $h=0, 1, \ldots, H-1$ and $t=1,\ldots, T$, and combining with $\term_1$ and $\term_2$, we get 
    \begin{align*}
        \term_1 + \term_2 + \term_3 
        &= \left(1+\frac{1}{H}\right)\sum_{t=1}^T \sum_a\pi_t(a|x_0)\bonusQ_t(x_0,a).
    \end{align*}
    Finally, we relate $\sum_a\pi_t(a|x_0)\bonusQ_t(x_0,a)$ to $\Vht^{\pi_t}(x_0; \bonus_t)$. 
    Below, we show by induction that for $x\in X_h$ and any $a$, 
    \begin{align*}
        \sum_{a\in A}\pi_t(a|x)\bonusQ_t(x,a) \leq \left(1+\frac{1}{H}\right)^{H-h-1}\Vht^{\pi_t}(x; \bonus_t).
    \end{align*}
    When $h=H-1$, $\sum_{a}\pi_t(a|x)\bonusQ_t(x,a)=\sum_{a}\pi_t(a|x)\bonus_t(x,a) = \Vht^{\pi_t}(x;\bonus_t)$. Suppose that the hypothesis holds for all $x\in X_h$. Then for any $x\in X_{h-1}$, \begin{align*}
        \sum_{a\in A}\pi_t(a|x)\bonusQ_t(x,a)
        &= \sum_a \pi_t(a|x)\Big(\bonus_t(x,a) + \left(1+\frac{1}{H}\right)\E_{x'\sim \Pht(\cdot|x,a)}\E_{a'\sim \pi_t(\cdot|x')}\left[B_t(x',a')\right]\Big) \\
        &\leq \sum_a \pi_t(a|x)\Big(\bonus_t(x,a) + \left(1+\frac{1}{H}\right)^{H-h}\E_{x'\sim \Pht(\cdot|x,a)}\left[\Vht^{\pi_t}(x';\bonus_t)\right]\Big)   \tag{induction hypothesis}\\
        &\leq \left(1+\frac{1}{H}\right)^{H-h}\sum_a \pi_t(a|x)\Big(\bonus_t(x,a) + \E_{x'\sim \Pht(\cdot|x,a)}\left[\Vht^{\pi_t}(x';\bonus_t)\right]\Big) \tag{$\bonus_t(x,a)\geq 0$}\\
        &= \left(1+\frac{1}{H}\right)^{H-h} \Vht^{\pi_t}(x; \bonus_t),  
    \end{align*} 
    finishing the induction. 
    Applying the relation on $x=x_0$ and noticing that $\left(1+\frac{1}{H}\right)^{H}\leq e<3$ finishes the proof. 
\end{proof}
Besides \pref{lem: lemma for the condition}, we also show \pref{lem: expected version} below, which guarantees that \pref{lem:key} holds even if \pref{eq: dilated} and \pref{eq:modified bound} only hold in expectation. 
\begin{lemma}\label{lem: expected version}
     Let $\bonus_t(x,a)$ be a non-negative loss function that is fixed at the beginning of episode $t$, and let $\pi_t$ be fixed at the beginning of episode $t$. Let $\bonusQ_t(x,a)$ be a randomized bonus function that satisfies the following for all $x,a$: 
    \begin{align}
         \E_t\left[\bonusQ_t(x,a)\right] 
         &= \bonus_t(x,a) + \left(1+\frac{1}{H}\right)\E_{x'\sim P(\cdot|x,a)}\E_{a'\sim \pi_t(\cdot|x')}\E_t\Big[\bonusQ_t(x',a')\Big]   \label{eq: expected condition 2}
    \end{align}
    with $\bonusQ_t(x_H,a)\triangleq 0$, and suppose that the following holds (simply taking expectations on \pref{eq:modified bound}): 
    \begin{align}
    &\E\left[\sum_{x} \qstar(x)\sum_{t=1}^T  \sum_a \Big(\pi_t(a|x) - \pistar(a|x)\Big)\Big(Q^{\pi_t}_t(x,a) - \bonusQ_t(x,a) \Big) \nonumber\right] \\
    &\qquad \qquad \leq o(T) + 
    \E\left[\sum_{t=1}^T V^{\pistar}(x_0; \bonus_t)\right] + \frac{1}{H}\E\left[\sum_{t=1}^T \sum_{x,a} \qstar(x)\pi_t(a|x)\bonusQ_t(x,a)\right]. \label{eq:modified bound expected} 
\end{align}
    
    Then  
    \begin{align*}
        \E\left[\Reg\right] \leq o(T) + 3\E\left[\sum_{t=1}^T V^{\pi_t}(x_0; \bonus_t)\right].  
    \end{align*}
\end{lemma}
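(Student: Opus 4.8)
The plan is to reduce this expected statement to the deterministic telescoping argument already carried out in the proof of \pref{lem: lemma for the condition}, by replacing the random bonus $\bonusQ_t$ with its conditional mean $\overline{\bonusQ}_t(x,a) \defeq \E_t[\bonusQ_t(x,a)]$. First I would rearrange \pref{eq:modified bound expected} exactly as in that proof: adding the bonus contribution $\E\left[\sum_x \qstar(x)\sum_t\sum_a(\pi_t(a|x)-\pistar(a|x))\bonusQ_t(x,a)\right]$ back to both sides and using $\E\left[\sum_t V^{\pistar}(x_0;\bonus_t)\right] = \E\left[\sum_t\sum_{x,a}\qstar(x)\pistar(a|x)\bonus_t(x,a)\right]$, so that
\begin{align*}
\E[\Reg] \le o(T) + \term_1 + \term_2 + \term_3,
\end{align*}
where $\term_1 = \E\left[\sum_t\sum_{x,a}\qstar(x)\pistar(a|x)\bonus_t(x,a)\right]$, $\term_2 = \frac1H\E\left[\sum_t\sum_{x,a}\qstar(x)\pi_t(a|x)\bonusQ_t(x,a)\right]$, and $\term_3 = \E\left[\sum_t\sum_{x,a}\qstar(x)(\pi_t(a|x)-\pistar(a|x))\bonusQ_t(x,a)\right]$.

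The key observation is that $\qstar$, $\pistar$, $\pi_t$, and $\bonus_t$ are all $\calF_t$-measurable (the comparator $\pistar$ and its occupancy $\qstar$ are deterministic, while $\pi_t$ and $\bonus_t$ are fixed at the start of episode $t$ by assumption). Hence, conditioning on $\calF_t$ and pulling these factors out of $\E_t$, every occurrence of $\bonusQ_t$ inside $\term_2$ and $\term_3$ may be replaced by $\overline{\bonusQ}_t$ (the sum over $t$ being handled by the tower property). By \pref{eq: expected condition 2}, $\overline{\bonusQ}_t$ is itself $\calF_t$-measurable and satisfies the dilated Bellman equation \emph{exactly} with the true transition $P$:
\begin{align*}
\overline{\bonusQ}_t(x,a) = \bonus_t(x,a) + \left(1+\tfrac1H\right)\E_{x'\sim P(\cdot|x,a)}\E_{a'\sim\pi_t(\cdot|x')}\left[\overline{\bonusQ}_t(x',a')\right].
\end{align*}
This is precisely the deterministic hypothesis of \pref{lem: lemma for the condition} with $\Pht = P$ (and with equality in place of the inequality there), so I would apply verbatim the layer-by-layer telescoping from that proof --- using the identity $\sum_{x\in X_h}\sum_a\qstar(x)\pistar(a|x)P(x'|x,a)=\qstar(x')$ --- to collapse $\term_1+\term_2+\term_3$ into $(1+\frac1H)\E\left[\sum_t\sum_a\pi_t(a|x_0)\overline{\bonusQ}_t(x_0,a)\right]$.

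Finally I would invoke the same induction on layers to show $\sum_a\pi_t(a|x_0)\overline{\bonusQ}_t(x_0,a)\le(1+\frac1H)^{H-1}V^{\pi_t}(x_0;\bonus_t)$, where now the value function is the \emph{true} $V^{\pi_t}$ (since $\overline{\bonusQ}_t$ uses transition $P$ rather than an optimistic $\Pht$), and conclude via $(1+\frac1H)^H\le e<3$. The only real subtlety compared to \pref{lem: lemma for the condition} is the second step --- verifying that all comparator and policy quantities are $\calF_t$-measurable so that the random $\bonusQ_t$ can legitimately be swapped for $\overline{\bonusQ}_t$ under $\E_t$ \emph{before} the telescoping is applied; once this measurability bookkeeping is in place, the remainder is identical to the proof already given.
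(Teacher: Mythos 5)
Your proof is correct and is essentially the paper's own argument: the paper proves this lemma by stating that the proof of \pref{lem: lemma for the condition} goes through line-by-line with $\Pht = P$ once expectations are taken in every step, and your reduction---replacing $\bonusQ_t$ by its conditional mean $\E_t[\bonusQ_t]$ (which is $\calF_t$-measurable and satisfies the dilated Bellman equation exactly under $P$) and then running the deterministic telescoping and induction---is precisely a careful formalization of what ``taking expectations in all steps'' means. The measurability bookkeeping you highlight is the right justification for that step, so there is no gap.
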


\begin{proof}
     The proof of this lemma follows that of \pref{lem: lemma for the condition} line-by-line (with $\Pht = P$), except that we take expectations in all steps. 
\end{proof}

\section{Details Omitted in \pref{sec: tabular}} \label{app: tabular appendix} 
In this section, we first discuss the implementation details of \pref{alg:MDP-traj} in \pref{sec:comp uob lob}, then we give the complete proof of \pref{thm:regret bound main} in \pref{sec:proofs sec4}.

\subsection{Implementation Details}\label{sec:comp uob lob}
\label{app: omitted procedure}
The \compuob procedure  is the same as Algorithm 3 of \citep{jin2019learning}, which shows how to efficiently compute an upper occupancy bound. We include the algorithm in \pref{alg:uob} for completeness. As \pref{alg:MDP-traj} also needs \complob, which computes a lower occupancy bound, we provide its complete pseudocode in \pref{alg:lob} as well.

Fix a state $x$. Define $f(\xtilde)$ to be the maximum and minimum probability of visiting $x$ starting from state $\xtilde$ for \compuob and \complob, respectively. Then the two algorithms almost have the same procedure to find $f(\xtilde)$ by solving the optimization in \pref{eq:DP} subject to $\Phat$ in the confidence set $\mathcal{P}$ via a greedy approach in \pref{alg:greedy}.
The difference is that \compuob sets $\obj$ to be $\max$ while \complob sets $\obj$ to be $\min$, and thus in \pref{alg:greedy}, $\{f(x)\}_{x\in X_k}$ is sorted in an ascending and a descending order, respectively.

Finally, we point out that the bonus function $\bonusQ_t(s,a)$ defined in \pref{eq: bonus bellman 1} can clearly also be computed using a greedy procedure similar to \pref{alg:greedy}.
This concludes that the entire algorithm can be implemented efficiently.

\begin{equation}\label{eq:DP}
f(\xtilde) = \sum_{a\in A}\pi_t(a|\xtilde)\left( \underset{\Phat(\cdot|\xtilde,a)}{\obj} \sum_{x' \in X_{k(\xtilde)+1}} \Phat(x' | \xtilde, a) f(x') \right)
\end{equation}

\begin{algorithm}[H]
    \caption{\compuob (Algorithm 3 of \citep{jin2019learning})}\label{alg:uob} 
{\bfseries Input:} a policy $\pi_t$, a state-action pair $(x,a)$ and a confidence set $\mathcal{P}$ of the form

\[
\left\{\Phat: \left|\Phat(x'|x,a) - \bar{P}(x'|x,a)\right|\leq \epsilon(x'|x,a), \;\forall(x,a,x'
) \right\}
\]

{\bfseries Initialize:} for all $\xtilde \in X_{k(x)}$, set $f(\xtilde) = \ind\{\xtilde=x\}$.

\For{ $k = k(x)-1 \ \textbf{to} \ 0$}{

    \For{ $\forall\xtilde \in X_k$}{
    
    Compute $f(\xtilde)$ based on :
    
    \begin{align*}
    f(\xtilde) = \sum_{a\in A} \pi_t(a|\xtilde) \cdot \text{\textsc{Greedy}}\left(f,
    \bar{P}(\cdot|\xtilde,a), \epsilon(\cdot|\xtilde,a),\max\right)
    \end{align*}
    
    }
}

{\bfseries Return:} $\pi_t(a|x) f(x_0)$.
\end{algorithm}

\begin{algorithm}[H]
    \caption{\complob}
    {\bfseries Input:}\label{alg:lob} a policy $\pi_t$, a state-action pair $(x,a)$ and a confidence set $\mathcal{P}$ of the form

\[
\left\{\Phat: \left|\Phat(x'|x,a) - \bar{P}(x'|x,a)\right|\leq \epsilon(x'|x,a), \;\forall(x,a,x'
) \right\}
\]

{\bfseries Initialize:} for all $\xtilde \in X_{k(x)}$, set $f(\xtilde) = \ind\{\xtilde=x\}$.

\For{ $k = k(x)-1 \ \textbf{to} \ 0$}{

    \For{ $\forall\xtilde \in X_k$}{
    
    Compute $f(\xtilde)$ based on :
    
    \begin{align*}
    f(\xtilde) = \sum_{a\in A} \pi_t(a|\xtilde) \cdot \text{\textsc{Greedy}}\left(f,
    \bar{P}(\cdot|\xtilde,a), \epsilon(\cdot|\xtilde,a),\min\right)
    \end{align*}
    
    }
}

{\bfseries Return:} $\pi_t(a|x) f(x_0)$.
\end{algorithm}

\begin{algorithm}[!htbp]
\caption{\textsc{Greedy}}
\label{alg:greedy}

{\bfseries Input:} $f: X\rightarrow[0,1]$, a distribution $\bar{p}$ over $n$ states of layer $k$ , positive numbers $\{\epsilon(x)\}_{x \in X_k}$, objective \obj ($\max$ for \compuob and $\min$ for \complob). 

{\bfseries Initialize:} $j^- = 1, j^+ = n$, sort $\{f(x)\}_{x\in X_k}$ and find $\sigma$ such that
\[
f(\sigma(1)) \leq f(\sigma(2)) \leq \cdots \leq f(\sigma(n))
\]
for $\obj=\max$, and 
\[
f(\sigma(1)) \geq f(\sigma(2)) \geq \cdots \geq f(\sigma(n))
\]
for $\obj=\min$. 

\While{$j^- < j^+$}{
 $x^- =\sigma(j^-), x^+ =\sigma(j^+)$
 
$\delta^{-} = \min\{\bar{p}(x^-),  \epsilon(x^-)\}$ 

$\delta^{+} = \min\{1 - \bar{p}(x^+),  \epsilon(x^+)\}$ 

$\bar{p}(x^-) \leftarrow \bar{p}(x^-) - \min\{\delta^{-}, \delta^{+}\}$

$\bar{p}(x^+) \leftarrow \bar{p}(x^+) + \min\{\delta^{-}, \delta^{+}\}$

\If{$\delta_{-} \leq \delta_{+}$}{

$\epsilon(x^+) \leftarrow \epsilon(x^+) - \delta^{-}$

$j^- \leftarrow j^- + 1$
}
\Else{
$\epsilon(x^-) \leftarrow \epsilon(x^-) - \delta^{+}$

$j^+ \leftarrow j^+ - 1$}
}

{\bfseries Return:} $\sum_{j=1}^n \bar{p}(\sigma(j))f(\sigma(j))$

\end{algorithm}

\subsection{Omitted Proofs}\label{sec:proofs sec4}

To prove \pref{thm:regret bound main},
as discussed in the analysis sketch of \pref{sec: tabular},
we decompose the left-hand side of \pref{eq:modified bound} as:
\begin{align}
	&\sum_{t=1}^T\sum_x\qstar(x)\inner{\pi_t(\cdot|x)-\pistar(\cdot|x), \Q_t^{\pi_t}(x,\cdot) -B_t(x,\cdot)} \nonumber\\
	&= \underbrace{\sum_{t=1}^T\sum_x\qstar(x)\inner{\pi_t(\cdot|x), \Q_t^{\pi_t}(x,\cdot)-\Qht_t(x,\cdot)}}_{\bias}+\underbrace{\sum_{t=1}^T\sum_x\qstar(x)\inner{\pistar(\cdot|x), \Qht_t(x,\cdot)-\Q_t^{\pi_t}(x,\cdot)}}_{\biastwo}\nonumber\\
	&\quad
	 + \underbrace{\sum_{t=1}^T\sum_x\qstar(x)\inner{\pi_t(\cdot|x)-\pistar(\cdot|x), \Qht_t(x,\cdot)-B_t(x,\cdot)}}_{\regterm}\label{eq:decompose tabular regret}.
\end{align}
We bound each term in a corresponding lemma. Specifically, We show a high probability bound of $\bias$ in \pref{lem: bias1}, a high probability bound of $\biastwo $ in \pref{lem: bias2}, and a high-probability bound of $\regterm$ in \pref{lem: reg}.
Finally, we show how to combine all terms with the definition of $\bonus_t$ in \pref{thm:regret bound}, which is a restatement of \pref{thm:regret bound main}.

\begin{lemma}[$\bias$]\label{lem: bias1}
	With probability at least $1-5\delta$, 
	\begin{align*}
	\bias &\le \otil\left(\frac{H}{\eta}\right)+\sum_{t=1}^T\ \sum_{x,a}\qstar(x)\pi_{t}(a| x)\left(\frac{2\gamma H+ H\left(\Ut(x, a)-\Lt(x, a)\right)}{\Ut(x, a)+\gamma}\right). 
\end{align*}

\end{lemma}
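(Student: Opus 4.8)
The quantity to control is
\[
\bias = \sum_{t=1}^T\sum_x\qstar(x)\inner{\pi_t(\cdot|x), \Q_t^{\pi_t}(x,\cdot)-\Qht_t(x,\cdot)},
\]
which measures how much the estimator $\Qht_t$ underestimates the true $Q$-function, weighted by $\qstar(x)\pi_t(a|x)$. The plan is to split this into a bias term coming from the optimism of using $\Ut$ and $\gamma$, and a concentration term coming from replacing the empirical indicator-based estimator by its conditional expectation. The first thing I would do is write $\Q_t^{\pi_t}(x,a) = \E_t[L_{t,h}\mid \text{visiting }(x,a)]$ and compare against the definition $\Qht_t(x,a) = \frac{L_{t,h}}{\Ut(x,a)+\gamma}\ind_t(x,a)$, so that the conditional expectation is $\E_t[\Qht_t(x,a)] = \frac{\QQ_t(x,a)}{\Ut(x,a)+\gamma}\Q_t^{\pi_t}(x,a)$.

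**Main decomposition.**
I would then decompose $\Q_t^{\pi_t}(x,a)-\E_t[\Qht_t(x,a)]$ as
\[
\Q_t^{\pi_t}(x,a)\left(1 - \frac{\QQ_t(x,a)}{\Ut(x,a)+\gamma}\right)
= \Q_t^{\pi_t}(x,a)\cdot\frac{\Ut(x,a)+\gamma-\QQ_t(x,a)}{\Ut(x,a)+\gamma}.
\]
Since the true transition $P$ lies in $\calP_k$ with high probability (a standard confidence-set event, failing with probability $\order(\delta)$), we have $\Lt(x,a)\le \QQ_t(x,a)\le \Ut(x,a)$, so the numerator is bounded by $\gamma + (\Ut(x,a)-\Lt(x,a))$. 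Using the crude bound $\Q_t^{\pi_t}(x,a)\le H$, the conditional-expectation part is at most $\sum_{t,x,a}\qstar(x)\pi_t(a|x)\cdot\frac{H(\gamma+\Ut(x,a)-\Lt(x,a))}{\Ut(x,a)+\gamma}$, which already accounts for the $H(\Ut-\Lt)$ piece and a $\gamma H$ piece inside the stated bound. The remaining task is to pass from the conditional expectation to the realized estimator, i.e.\ bound $\sum_{t,x,a}\qstar(x)\pi_t(a|x)(\E_t[\Qht_t(x,a)]-\Qht_t(x,a))$ with high probability. This is exactly where I would invoke \pref{lem: useful 3}: taking $Z_t(x,a)=L_{t,h}\in[0,H]$ (so $R=H$) and $z_t(x,a)=\Q_t^{\pi_t}(x,a)$, the lemma controls $\sum_{t,x,a}\big(\frac{\ind_t Z_t}{\Ut+\gamma}-\frac{\QQ_t z_t}{\Ut}\big)$ by $\frac{H^2}{2\gamma}\ln\frac{H}{\delta}=\otil(H/\eta)$ after recalling $\gamma=2\eta H$. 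The $\qstar(x)\pi_t(a|x)$ weighting (rather than summing over all $x,a$) requires a small additional argument: I would absorb it by noting $\qstar(x)\le 1$ and that the lemma's statement is over the full sum, or more carefully re-run the Freedman argument with the $\qstar$ weights included, since $\qstar$ is fixed and $\pi_t$ is $\calF_t$-measurable.

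**Collecting terms and the anticipated obstacle.**
Combining the two pieces, the $\gamma H$-from-bias and the $\gamma H$-from-the-extra-$+\gamma$-in-the-denominator combine to give the $2\gamma H$ in the numerator of the stated bound, the $H(\Ut-\Lt)$ term is carried through directly, and the concentration residual contributes the $\otil(H/\eta)$ additive term; the failure probabilities of the confidence-set event plus the Freedman/\pref{lem: useful 3} event sum to at most $5\delta$. The step I expect to be the main obstacle is the careful bookkeeping in matching constants: ensuring the factor in front of $(\Ut-\Lt)$ is exactly $H$ and the constant multiplying $\gamma H$ is exactly $2$, which requires tracking how the $+\gamma$ in the denominator interacts with both the numerator bound and the discrepancy between dividing by $\Ut+\gamma$ versus $\Ut$ in \pref{lem: useful 3}. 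A secondary subtlety is justifying the $\qstar$-weighting inside the high-probability concentration bound rather than the uniform sum stated in \pref{lem: useful 3}; I would handle this either by a direct Freedman application with the fixed weights $\qstar(x)\in[0,1]$ folded into the martingale differences (which only shrinks the variance and range) or by invoking a $\qstar$-weighted variant of the lemma.
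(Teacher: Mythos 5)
Your decomposition and your treatment of the bias part are the same as the paper's: under the event $P\in\calP_k$ for all $k$ (probability at least $1-4\delta$), you bound $\Q_t^{\pi_t}(x,a)\big(1-\frac{\QQ_t(x,a)}{\Ut(x,a)+\gamma}\big)\le H\cdot\frac{\gamma+\Ut(x,a)-\Lt(x,a)}{\Ut(x,a)+\gamma}$, exactly as in the paper. The genuine gap is in your concentration step. You need to upper bound the \emph{downward} deviation $\sum_{t,x,a}\qstar(x)\pi_t(a|x)\big(\E_t[\Qht_t(x,a)]-\Qht_t(x,a)\big)$, but \pref{lem: useful 3} is a one-sided inequality in the \emph{opposite} direction: it upper bounds $\sum_{t,x,a}\big(\frac{\ind_t(x,a)Z_t(x,a)}{\Ut(x,a)+\gamma}-\frac{\QQ_t(x,a)z_t(x,a)}{\Ut(x,a)}\big)$, i.e.\ the overshoot of the realized importance-weighted estimator above an (inflated) version of its conditional mean. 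An upper bound on $\Qht_t-(\text{inflated mean})$ gives no upper bound on $(\text{mean})-\Qht_t$. This one-sidedness is intrinsic to that lemma (it is the implicit-exploration trick: the $+\gamma$ in the denominator biases the estimator downward, which only helps the upper tail), and accordingly the paper invokes \pref{lem: useful 3} only where that direction is the right one---for $\biastwo$ in \pref{lem: bias2} and inside \regterm in \pref{lem: reg}---never for $\bias$.

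What the paper actually does for the martingale part is apply Freedman's inequality (\pref{lem: freedman}) to $Y_t=\sum_x\qstar(x)\inner{\pi_t(\cdot|x),\Qht_t(x,\cdot)}$, which requires a variance computation your proposal does not contain: by Cauchy--Schwarz and $\E_t[\ind_t(x,a)]=\QQ_t(x,a)\le\Ut(x,a)$, one gets $\Var_t[Y_t]\le\sum_{x,a}\qstar(x)\pi_t(a|x)\frac{H^3}{\Ut(x,a)+\gamma}$ together with $|Y_t|\le H^2/\gamma$, and Freedman then yields
\begin{align*}
\sum_{t=1}^T\big(\E_t[Y_t]-Y_t\big)\le \sum_{t=1}^T\sum_{x,a}\qstar(x)\pi_t(a|x)\frac{\gamma H}{\Ut(x,a)+\gamma}+\frac{H^2}{\gamma}\ln\frac{1}{\delta}.
\end{align*}
This also corrects your bookkeeping: the second $\gamma H$ in the numerator of the stated bound is this Freedman \emph{variance} term, not the $\Ut$-versus-$\Ut+\gamma$ denominator discrepancy of \pref{lem: useful 3} that you cite, and in particular the concentration residual is not a purely additive $\otil(H/\eta)$ term. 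Your parenthetical fallback (``more carefully re-run the Freedman argument with the $\qstar$ weights included'') is in fact the required argument---it is essentially the paper's proof---but in your write-up it is offered only as a patch for the weighting issue, and without the variance bound and the range/variance balancing it does not yet constitute a proof.
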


\begin{proof}
    In the proof, we assume that $P\in \calP_k$ for all $k$, with holds with probability at least $1-4\delta$ as already shown in \citep[Lemma 2]{jin2019learning}. Under this event, $\Lt(x,a)\leq \QQ_t(x,a)\leq \Ut(x,a)$ for all $t, x, a$. 
    
    Let $Y_{t}=\sum_{x\in X}\qstar(x)\inner{\pi_t(\cdot|x),\Qht_t(x,\cdot)}$.
    First, we decompose \bias as
    \begin{align}
        \sum_{t=1}^T\left(\E_t[Y_t]-Y_t\right)+\left(\sum_x\qstar(x)\inner{\pi_t(\cdot|x), \Q_t^{\pi_t}(x,\cdot)}-\E_t[Y_t]\right).  \label{eq: bias 1 decompose}
    \end{align}
    We will bound the first Martingale sequence using Freedman's inequality.
    Note that we have
    \begin{align}
        \Var_t[Y_{t}]&\le\E_t\left[\left(\sum_{x}\qstar(x)\inner{\pi_t(\cdot|x), \Qht_t(x,\cdot)}\right)^2\right]   \nonumber \\
        &\le \E_t\left[\left(\sum_{x,a}\qstar(x)\pi_t(a|x)\right)\left(\sum_{x,a}\qstar(x){\pi_t(a|x)\Qht_t(x,a)^2}\right)\right]  \tag{Cauchy-Schwarz}\\
        &= H\sum_{x,a}\qstar(x)\pi_t(a|x)\frac{L_{t,h}^2\E_t[\ind_t(x,a)]}{(\Ut(x,a)+\gamma)^2}\nonumber   \tag{$\sum_{x,a}\qstar(x)\pi_t(a|x) = H$}\\
        &\le H\sum_{x,a}\qstar(x)\pi_t(a|x)\frac{\QQ_t(x,a)H^2}{(\Ut(x,a)+\gamma)^2}\nonumber   \tag{$L_{t,h} \leq H$ and $\E_t[\ind_t(x,a)] = \QQ_t(s,a)$}\\
        &\le \sum_{x,a}\qstar(x)\pi_t(a|x)\frac{H^3}{\Ut(x,a)+\gamma} \nonumber \tag{$\QQ_t(s,a) \leq \Ut(x,a)$}
    \end{align}
    and $|Y_t|\leq H\sup_{x,a}|\Qht(x,a)| \leq \frac{H^2}{\gamma}$. 
    
    Moreover, for every $t$, the second term in \pref{eq: bias 1 decompose} can be bounded as 
    \begin{align*}
        &\sum_x\qstar(x)\inner{\pi_t(\cdot|x), \Q_t^{\pi_t}(x,\cdot)}-\E_t\left[\sum_x\qstar(x)\inner{\pi_t(\cdot|x),\Qht_t(x,\cdot)}\right]\\
        &=\sum_{x,a}\qstar(x){\pi_t(a|x) \Q_t^{\pi_t}(x,a)\left(1-\frac{\QQ_t(x,a)}{\Ut(x,a)+\gamma}\right)}\\
        &\le \sum_{x,a}\qstar(x){\pi_t(a|x) H\left(\frac{\Ut(x,a)-\QQ_t(x,a)+\gamma}{\Ut(x,a)+\gamma}\right)} \tag{$\Q_t(x,a) \leq H$}\\
        &\le \sum_{x,a}\qstar(x){\pi_t(a|x) H\left(\frac{\Ut(x,a)-\Lt(x,a)+\gamma}{\Ut(x,a)+\gamma}\right)}. \tag{$\Lt(x,a) \leq \QQ_t(x,a)$}
    \end{align*}
    
    Combining them, and using Freedman's inequality (\pref{lem: freedman}), we have that with probability at least $1-5\delta$, 
    \begin{align*}
	\bias&= \sum_{t=1}^T\sum_x\qstar(x)\inner{\pi_t(\cdot|x), \Q_t^{\pi_t}(x,\cdot)-\Qht_t(x,\cdot)}\\
	&\le\sum_{t=1}^T\ \sum_{x,a}\qstar(x)\pi_{t}(a| x)H\left(\frac{\left(\Ut(x, a)-\Lt(x, a)\right)+\gamma}{\Ut(x, a)+\gamma}\right)\\
	&\qquad +  \frac{\gamma}{H^2}\sum_{t=1}^T\sum_{x,a}\qstar(x)\pi_t(a|x)\frac{H^3}{\Ut(x,a)+\gamma} + \frac{H^2}{\gamma}\ln \frac{1}{\delta} \\
	&\le \otil\left(\frac{H}{\eta}\right)+\sum_{t=1}^T\ \sum_{x,a}\qstar(x)\pi_{t}(a| x)\left(\frac{2\gamma H+ H\left(\Ut(x, a)-\Lt(x, a)\right)}{\Ut(x, a)+\gamma}\right),  
\end{align*}
where we use $\gamma=2\eta H$. 
\end{proof}
Next, we bound $\biastwo$.
\begin{lemma}[$\biastwo$]\label{lem: bias2}
	With probability at least $1-5\delta$, 
	$\biastwo\leq \otil\left(\frac{H}{\eta}\right)$. 
\end{lemma}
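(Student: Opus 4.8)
The plan is to exploit the fact that $\Qht_t$ is a \emph{conditional underestimator} of $\Q^{\pi_t}_t$ and then control the remaining fluctuations with the sharp importance-weighting concentration bound of \pref{lem: useful 3}. Throughout I would work on the event $\{P\in\calP_k \text{ for all } k\}$, which holds with probability at least $1-4\delta$ by \citep[Lemma 2]{jin2019learning} and guarantees $\Lt(x,a)\le\QQ_t(x,a)\le\Ut(x,a)$ for all $t,x,a$; the concentration step below will cost one more $\delta$, giving the claimed $1-5\delta$.

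First I would record the conditional mean of the estimator. Since $\E_t[\ind_t(x,a)]=\QQ_t(x,a)$ and, by the Markov property, the conditional expectation of $L_{t,h}$ given that $(x,a)$ is visited at layer $h$ equals $\Q^{\pi_t}_t(x,a)$, we have $\E_t[\ind_t(x,a)L_{t,h}]=\QQ_t(x,a)\Q^{\pi_t}_t(x,a)$ and hence
\begin{align*}
\E_t[\Qht_t(x,a)]=\frac{\QQ_t(x,a)\Q^{\pi_t}_t(x,a)}{\Ut(x,a)+\gamma}\le \Q^{\pi_t}_t(x,a),
\end{align*}
using $\QQ_t\le\Ut\le\Ut+\gamma$. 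In particular the subtracted comparator term obeys $\Q^{\pi_t}_t(x,a)\ge \QQ_t(x,a)\Q^{\pi_t}_t(x,a)/\Ut(x,a)$ (as $\QQ_t\le\Ut$ and $\Q^{\pi_t}_t\ge 0$), so that
\begin{align*}
\biastwo\le \sum_{t=1}^T\sum_{x,a}\qstar(x)\pistar(a|x)\left(\frac{\ind_t(x,a)L_{t,h}}{\Ut(x,a)+\gamma}-\frac{\QQ_t(x,a)\Q^{\pi_t}_t(x,a)}{\Ut(x,a)}\right).
\end{align*}

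This is precisely the summand of \pref{lem: useful 3}, under the identifications $Z_t(x,a)=L_{t,h}\in[0,H]$ (so $R=H$), $z_t(x,a)=\Q^{\pi_t}_t(x,a)\in[0,H]$ (which is $\calF_t$-measurable), together with an extra fixed multiplicative weight $\qstar(x)\pistar(a|x)=\qstar(x,a)\in[0,1]$. Applying the (weighted form of) \pref{lem: useful 3} then yields, with probability at least $1-\delta$,
\begin{align*}
\biastwo\le \frac{H\cdot H}{2\gamma}\ln\frac{H}{\delta}=\otil\!\left(\frac{H^2}{\gamma}\right)=\otil\!\left(\frac{H}{\eta}\right),
\end{align*}
where the last equality uses $\gamma=2\eta H$; a union bound over the two events gives probability $1-5\delta$.

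The one point that needs care — and the main obstacle — is justifying that the fixed weights $\qstar(x,a)\in[0,1]$ do not degrade the $\otil(H/\eta)$ bound, since \pref{lem: useful 3} as stated carries no such weights. I would reprove the lemma per layer with the weights present: for a fixed layer $h$ the martingale increment $\sum_{x\in X_h,a}\qstar(x,a)\big(\ind_t(x,a)Z_t(x,a)-\QQ_t(x,a)z_t(x,a)\big)/(\Ut(x,a)+\gamma)$ still has magnitude at most $R/\gamma$ (only one pair is visited per layer and $\qstar(x,a)\le 1$), while in the special Freedman bound $\frac{V}{b}+b\ln(1/\delta)$ the weight enters the conditional variance $V$ quadratically (and $\qstar(x,a)^2\le\qstar(x,a)$) but enters the deterministic negative drift $-\sum_{x,a}\qstar(x,a)\,\gamma\QQ_t z_t/(\Ut(\Ut+\gamma))$ only linearly. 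Combined with $1/(\Ut+\gamma)^2\le 1/(\Ut(\Ut+\gamma))$, the negative drift still dominates $V/b$ exactly as in the unweighted argument, so the bound is unchanged up to constants and the union bound over the $H$ layers produces the $\ln(H/\delta)$ factor.
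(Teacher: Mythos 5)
Your proposal is correct and follows essentially the same route as the paper's proof: condition on the event $P\in\calP_k$ (probability $\geq 1-4\delta$, giving $\QQ_t\le\Ut$), lower-bound $Q_t^{\pi_t}(x,a)$ by $\frac{\QQ_t(x,a)}{\Ut(x,a)}Q_t^{\pi_t}(x,a)$, and invoke \pref{lem: useful 3} with $R=H$ to get $\frac{H^2}{2\gamma}\ln\frac{H}{\delta}=\otil(H/\eta)$ after a union bound. The obstacle you flag at the end is resolved in the paper without reproving any weighted lemma: it absorbs the weights into the lemma's inputs by taking $z_t(x,a)=\qstar(x)\pistar(a|x)Q_t^{\pi_t}(x,a)$ and $Z_t(x,a)=\qstar(x)\pistar(a|x)\bigl(\ind_t(x,a)L_{t,h}+(1-\ind_t(x,a))Q_t^{\pi_t}(x,a)\bigr)$, a choice that satisfies the stated condition $\E_t[Z_t(x,a)]=z_t(x,a)$ exactly (note your bare identification $Z_t=L_{t,h}$, $z_t=Q_t^{\pi_t}$ violates it literally, since the unconditional mean of $L_{t,h}$ is not $Q_t^{\pi_t}(x,a)$) while still giving $\ind_t(x,a)Z_t(x,a)=\qstar(x)\pistar(a|x)\ind_t(x,a)L_{t,h}$, i.e., precisely your weighted summand.
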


\begin{proof}
   We invoke \pref{lem: useful 3} with $z_t(x,a)=\qstar(x)\pistar(a|x)Q_t^{\pi_t}(x,a)$ and \[Z_t(x,a)=\qstar(x)\pistar(a|x)\left(\ind_t(x,a)L_t(x,a)+(1-\ind_t(x,a))Q_t^{\pi_t}(x,a)\right).\] Then we get that with probability at least $1-\delta$ (recalling the definition $\Qht_t(x,a) = \frac{L_{t,h}}{\Ut(x,a)+\gamma}\ind_t(x,a)$), 
    \begin{align}
        \sum_{t=1}^T \sum_{x, a} \qstar(x)\pistar(a|x)\left(\Qht_t(x,a) - \frac{\QQ_{t}(x,a)}{\Ut(x,a)}Q_t^{\pi_t}(x,a)\right)\leq  \frac{H^2}{2\gamma}\ln\frac{H}{\delta}, \label{eq: bias 2 bound 1} 
    \end{align} 
    
    Since with probability at least $1-4\delta$, $\QQ_{t}(x,a)\leq \Ut(x,a)$ for all $t, x, a$ (by \citep[Lemma 2]{jin2019learning}), \pref{eq: bias 2 bound 1} further implies that with probability at least $1-5\delta$,
    \begin{align*}
        \biastwo = \sum_{t=1}^T\sum_{x,a}\qstar(x)\pistar(x,a)\left(\Qht_t(x,a) - Q_t^{\pi_t}(x,a)\right)\leq \frac{H^2}{2\gamma}\ln\frac{H}{\delta}. 
    \end{align*}
    Noting that $\gamma=2\eta H$ finishes the proof. 
\end{proof}

We continue to bound $\regterm$.
\begin{lemma}[$\regterm$]\label{lem: reg}
With probability at least $1-5\delta$, 
\begin{align*}
	 \regterm
	 &\leq \otil\left(\frac{H}{\eta}\right) + \sum_{t=1}^T \sum_{x,a}\qstar(x)\pi_t(a|x) \left(\frac{\gamma H}{\Ut(x,a)+\gamma}+\frac{\bonusQ_t(x,a)}{H}\right). 
\end{align*}
\end{lemma}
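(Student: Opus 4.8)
The plan is to bound $\regterm$ statewise using the standard exponential-weight regret bound and then absorb the resulting quadratic (variance) term into the two target quantities. Observe that for each fixed state $x$, the update \pref{eq:MDP_log_barrier} is exactly the exponential weight algorithm run on the action set $A$ with loss vector $\ell_t(a)=\Qht_t(x,a)-\bonusQ_t(x,a)$, so I would invoke \pref{lem: exponential weight lemma} in each state. This first requires verifying $|\eta\ell_t(a)|\le 1$, which follows from two pointwise bounds: $\Qht_t(x,a)\le \nicefrac{H}{\gamma}=\nicefrac{1}{2\eta}$ (from $L_{t,h}\le H$ and $\gamma=2\eta H$), and $\bonusQ_t(x,a)\le \nicefrac{1}{2\eta H}$, so that both $\eta\Qht_t(x,a)$ and $\eta\bonusQ_t(x,a)$ are at most $\nicefrac12$. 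The second bound comes from unrolling the dilated Bellman equation \pref{eq: bonus bellman 1}: since each summand of \pref{eq: err function definition} is at most $\nicefrac{3\gamma H}{\gamma}+H\le 4H$, we have $\bonus_t(x)\le 4H$, and a layer-by-layer induction then gives $\max_{x,a}\bonusQ_t(x,a)\le 4H\sum_{j\ge 0}(1+\nicefrac1H)^j\le 4eH^2\le 12H^2$; finally $\eta\cdot 12H^2\le\nicefrac{1}{2H}$ by the choice $\eta\le\nicefrac{1}{24H^3}$.

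Applying \pref{lem: exponential weight lemma} in each $x$, multiplying by $\qstar(x)\ge 0$, and summing over $x$ (using $\sum_x\qstar(x)=H$) yields
\[
\regterm \le \frac{H\ln|A|}{\eta} + \eta\sum_{t,x,a}\qstar(x)\pi_t(a|x)\big(\Qht_t(x,a)-\bonusQ_t(x,a)\big)^2.
\]
The first term is $\otil(\nicefrac{H}{\eta})$. For the quadratic term I would use $(\Qht_t-\bonusQ_t)^2\le \Qht_t^2+\bonusQ_t^2$ (valid since both are nonnegative) and handle the two pieces separately. For the bonus piece, the pointwise bound $\eta\bonusQ_t(x,a)\le\nicefrac{1}{2H}\le\nicefrac1H$ immediately gives $\eta\bonusQ_t(x,a)^2\le\nicefrac{\bonusQ_t(x,a)}{H}$, which is exactly the second target term.

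The main work is the $\Qht_t^2$ piece, where I would invoke the concentration bound \pref{lem: useful 3} to replace the empirical second moment by its conditional mean. Writing $\Qht_t(x,a)^2=\nicefrac{L_{t,h}^2\,\ind_t(x,a)}{(\Ut(x,a)+\gamma)^2}$, set $Z_t(x,a)=\qstar(x)\pi_t(a|x)\cdot\nicefrac{\eta L_{t,h}^2}{\Ut(x,a)+\gamma}$, so that $\nicefrac{\ind_t(x,a)Z_t(x,a)}{\Ut(x,a)+\gamma}=\eta\qstar(x)\pi_t(a|x)\Qht_t(x,a)^2$, with $Z_t\in[0,\nicefrac{H}{2}]$ and $z_t\defeq\E_t[Z_t]$ being $\calF_t$-measurable (using $L_{t,h}\le H$, $\qstar\pi_t\le 1$, and $\gamma=2\eta H$). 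Then \pref{lem: useful 3} with $R=\nicefrac H2$ gives, with probability $1-\delta$,
\[
\eta\sum_{t,x,a}\qstar(x)\pi_t(a|x)\Qht_t(x,a)^2 \le \sum_{t,x,a}\frac{\QQ_t(x,a)}{\Ut(x,a)}z_t(x,a) + \otil\!\left(\frac{H}{\eta}\right).
\]
Bounding $\E_t[L_{t,h}^2]\le H^2$, using $\QQ_t(x,a)\le\Ut(x,a)$, and substituting $\eta H^2=\nicefrac{\gamma H}{2}$ turns the first term into $\sum_{t,x,a}\qstar(x)\pi_t(a|x)\nicefrac{\gamma H}{2(\Ut(x,a)+\gamma)}$, matching the first target term.

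Combining the three pieces proves the lemma, with failure probability $5\delta$: $\delta$ from \pref{lem: useful 3}, and $4\delta$ for the event $P\in\calP_k$ for all $k$ (from \citep[Lemma 2]{jin2019learning}) that guarantees $\QQ_t\le\Ut$. The main obstacle I anticipate is the careful setup of \pref{lem: useful 3} for the $\Qht_t^2$ term, namely packaging $\eta\qstar\pi_t\Qht_t^2$ into the exact form required by the lemma while keeping $Z_t$ bounded and $\calF_t$-measurable in mean, and threading the unknown-transition substitution $\QQ_t\le\Ut$ consistently so that the final coefficient $\nicefrac{\gamma H}{2(\Ut+\gamma)}$ stays within the stated target $\nicefrac{\gamma H}{\Ut+\gamma}$.
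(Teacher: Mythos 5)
Your overall route is the same as the paper's: per-state exponential weights (\pref{lem: exponential weight lemma}) after verifying $\eta|\Qht_t(x,a)-\bonusQ_t(x,a)|\le 1$, splitting the quadratic variance term, absorbing the $\bonusQ_t^2$ piece via the pointwise bound $\eta\bonusQ_t(x,a)\le\nicefrac{1}{2H}$, and handling the $\Qht_t^2$ piece with \pref{lem: useful 3}; your magnitude checks and probability accounting also match the paper's (\pref{lem: simple bound}).

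The one step that does not hold up as written is your instantiation of \pref{lem: useful 3} with $Z_t(x,a)=\qstar(x)\pi_t(a|x)\,\eta L_{t,h}^2/(\Ut(x,a)+\gamma)$ and $z_t\defeq\E_t[Z_t]$. This satisfies the letter of the lemma's hypotheses, but the lemma is not true under that literal reading: its proof (Lemma 11 of Jin et al., a Neu-style implicit-exploration argument) requires $\E_t[\ind_t(x,a)Z_t(x,a)]=\QQ_t(x,a)z_t(x,a)$, i.e., $z_t$ must be the conditional mean of $Z_t$ \emph{given that $(x,a)$ is visited}. (A counterexample to the literal reading: take $Z_t=R\,\ind_t$, so that $z_t=\E_t[Z_t]=R\QQ_t$; then the left-hand side of the lemma grows linearly in $T$, far exceeding the claimed $\nicefrac{RH}{2\gamma}$ bound.) Both uses in the paper respect this requirement: in \pref{lem: bias2} the construction $Z_t=\qstar\pistar\left(\ind_t L_{t,h}+(1-\ind_t)Q_t^{\pi_t}\right)$ is designed precisely so that the conditional-on-visit and unconditional means coincide, and in \pref{lem: reg} the paper first bounds $L_{t,h}^2\le H^2$ pointwise, so that the lemma is invoked with $Z_t=z_t=\qstar\pi_t/(\Ut+\gamma)$ deterministic given $\calF_t$ and only $\ind_t$ random. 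Your $Z_t$ keeps $L_{t,h}^2$ random, and $L_{t,h}$ is correlated with $\ind_t(x,a)$: conditioning on visiting $(x,a)$ changes the law of the loss-to-go, so $\E_t[L_{t,h}^2\mid\ind_t(x,a)=1]\neq\E_t[L_{t,h}^2]$ in general, and the inequality you quote does not follow from the lemma's actual content. The gap is easily repaired — either define $z_t$ via the conditional-on-visit mean (your subsequent step only uses $\E_t[L_{t,h}^2\mid\ind_t=1]\le H^2$, which holds just as well), or follow the paper and pull out $L_{t,h}^2\le H^2$ before invoking the lemma — but as stated the invocation is not justified.
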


\begin{proof}
    The algorithm runs individual exponential weight updates on each state with loss vectors $\Qht_t(x,\cdot)-B_t(x,\cdot)$, so we can apply standard results for exponential weight updates. Specifically, we can apply \pref{lem: exponential weight lemma} on each state $x$, and get 
    \begin{align}
        \sum_{t=1}^T \inner{\pi_t(\cdot|x)-\pistar(\cdot|x), \Qht_t(x,\cdot)-B_t(x,\cdot)}
        \leq \frac{\ln |A|}{\eta} + \eta\sum_{t=1}^T \sum_{a\in A}\pi_t(a|x)\left(\Qht_t(x,a)-B_t(x,a)\right)^2.
        \label{eq: reg term bound 1}
    \end{align}
    The condition required by \pref{lem: exponential weight lemma} (i.e., $\eta|\Qht_t(x,a)-B_t(x,a)|\leq 1$) is verified in \pref{lem: simple bound}.  Summing \pref{eq: reg term bound 1} over states with weights $\qstar(x)$, we get 
    \begin{align}
        \regterm 
        &\leq \frac{H\ln |A|}{\eta} + \eta\sum_{t=1}^T \sum_{x,a}\qstar(x)\pi_t(a|x)\left(\Qht_t(x,a)-B_t(x,a)\right)^2  \nonumber \\
        &\leq \frac{H\ln |A|}{\eta} + 2\eta\sum_{t=1}^T \sum_{x,a}\qstar(x)\pi_t(a|x)\Qht_t(x,a)^2 + 2\eta\sum_{t=1}^T \sum_{x,a}\qstar(x)\pi_t(a|x)B_t(x,a)^2. \label{eq: reg term bound 4}
    \end{align}
    
    Below, we focus on the last two terms on the right-hand side of \pref{eq: reg term bound 4}. First, we have
    \begin{align*}
        2\eta\sum_{t=1}^T \sum_{x,a}\qstar(x)\pi_t(a|x)\Qht_t(x,a)^2
        &\leq 2\eta\sum_{t=1}^T \sum_{x,a}\qstar(x)\pi_t(a|x)\frac{H^2\ind_t(x,a)}{(\Ut(x,a)+\gamma)^2}\\
        &= 2\eta H^2\sum_{t=1}^T \sum_{x,a}\frac{\qstar(x)\pi_t(a|x)}{\Ut(x,a)+\gamma}\cdot \frac{\ind_t(x,a)}{\Ut(x,a)+\gamma}\\
        &\le 2\eta H^2\sum_{t=1}^T \sum_{x,a}\frac{\qstar(x)\pi_t(a|x)}{\Ut(x,a)+\gamma}\cdot\frac{\QQ_t(x,a)}{\Ut(x,a)} + 2\eta H^2 \times \frac{\frac{H}{\gamma} \ln\frac{H}{\delta}}{2\gamma} \\
        &\le \frac{H}{4\eta}\ln\frac{H}{\delta}+ \sum_{t=1}^T \sum_{x,a}\qstar(x)\pi_t(a|x)\frac{\gamma H}{\Ut(x,a)+\gamma},
    \end{align*} 
    where the third step happens with probability at least $1-\delta$ by \pref{lem: useful 3} with $z_t(x,a) = Z_t(x,a) = \frac{\qstar(x)\pi_t(a|x)}{\Ut(x,a)+\gamma} \leq \frac{1}{\gamma}$, and the last step uses $\gamma=2\eta H$ and $\QQ_t(x,a)\leq \Ut(x,a)$ (which happens with probability at least $1-4\delta$).
    For the second term in \pref{eq: reg term bound 4}, note that
    \begin{align*}
        2\eta\sum_{t=1}^T \sum_{a\in A}\pi_t(a|x)B_t(x,a)^2
        &\leq \frac{1}{H}\sum_{t=1}^T \sum_{a\in A}\pi_t(a|x) B_t(x,a)
    \end{align*}
    due to the fact $\eta B_t(x,a)\leq \frac{1}{2H}$ by \pref{lem: simple bound}. 
    Combining everything finishes the proof. 
\end{proof}

In \pref{lem: reg}, as required by \pref{lem: exponential weight lemma}, we control the magnitude of $\eta \Qht_t(x,a)$ and $\eta \bonusQ_t(x,a)$ by setting $\gamma$ and $\eta$ properly, shown in the following technical lemma. 

\begin{lemma}
\label{lem: simple bound}
   
    $\eta \Qht_t(x,a)\leq \frac{1}{2}$ and $\eta \bonusQ_t(x,a)\leq \frac{1}{2H}$. 
\end{lemma}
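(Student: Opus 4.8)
The plan is to establish both inequalities deterministically, directly from the definitions in \pref{alg:MDP-traj}, without appealing to any high-probability event (in particular, without needing $P\in\calP_k$).

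For the first bound I would start from the estimator \pref{eq:MDP_estimator}, $\Qht_t(x,a) = \frac{L_{t,h}}{\Ut(x,a)+\gamma}\ind_t(x,a)$. Since each per-step loss lies in $[0,1]$, the truncated return satisfies $L_{t,h}=\sum_{i=h}^{H-1}\ell_t(x_{t,i},a_{t,i})\leq H$, and the indicator is at most $1$. Because $\Ut(x,a)\geq 0$, the denominator is at least $\gamma$, so $\Qht_t(x,a)\leq H/\gamma$. Plugging in $\gamma=2\eta H$ gives $\Qht_t(x,a)\leq 1/(2\eta)$, i.e.\ $\eta\Qht_t(x,a)\leq 1/2$.

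For the second bound I would proceed in two steps. First, bound the single-step bonus $\bonus_t(x)$ of \pref{eq: err function definition}: using $\Ut(x,a)-\Lt(x,a)\leq \Ut(x,a)\leq \Ut(x,a)+\gamma$ (valid since $\Lt\geq 0$) and $\gamma\leq \Ut(x,a)+\gamma$ term by term inside the expectation, each summand is at most $3H+H=4H$, so $\bonus_t(x)\leq 4H$ for every $x$. Second, I would unroll the dilated Bellman equation \pref{eq: bonus bellman 1}. Letting $U_h=\max_{x\in X_h,\,a}\bonusQ_t(x,a)$, the boundary condition $\bonusQ_t(x_H,\cdot)=0$ gives $U_{H-1}\leq 4H$, the recursion yields $U_h\leq 4H+(1+\tfrac1H)U_{h+1}$, and hence $U_h\leq 4H\sum_{j=0}^{H-1-h}(1+\tfrac1H)^j$.

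The one point needing care — and the only place the dilation could in principle cause trouble — is controlling this geometric-type sum. The key observation is that each term obeys $(1+\tfrac1H)^j\leq (1+\tfrac1H)^H\leq e$ for $j\leq H$, so the sum of at most $H$ terms is at most $eH$, giving $\bonusQ_t(x,a)\leq U_0\leq 4eH^2<11H^2$ for all $(x,a)$. In other words, the $(1+\tfrac1H)$ dilation only inflates the naive $O(H^2)$ bound by a constant factor rather than exponentially, which is precisely the effect that makes the whole dilated-bonus scheme work. Finally, since $\eta\leq\tfrac{1}{24H^3}$ by the parameter choice, I would conclude $\eta\bonusQ_t(x,a)\leq \tfrac{11H^2}{24H^3}=\tfrac{11}{24H}<\tfrac{1}{2H}$, completing the argument.
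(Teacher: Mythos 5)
Your proof is correct and follows essentially the same route as the paper's: both bound $\eta\Qht_t(x,a)\leq \eta H/\gamma = 1/2$, bound the one-step bonus by $\bonus_t(x)\leq 4H$, and then unroll the dilated Bellman equation \pref{eq: bonus bellman 1} so that the $(1+\frac1H)$ factors contribute at most a factor $H(1+\frac1H)^H\leq eH$, giving $\bonusQ_t\leq O(H^2)$ and hence $\eta\bonusQ_t\leq \frac{1}{2H}$ via $\eta\leq\frac{1}{24H^3}$. Your layer-by-layer recursion with $U_h$ is just a more explicit write-up of the paper's one-line bound $\bonusQ_t(x,a)\leq H(1+\frac1H)^H\sup_{x',a'}\bonus_t(x',a')$, and both arguments are, as you note, deterministic.
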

\begin{proof}
     Recall that $\gamma=2\eta H$ and $\eta\leq \frac{1}{24 H^3}$. Thus,
    \begin{align*}
        \eta \Qht_t(x,a) &\leq \frac{\eta H}{\gamma} = \frac{\eta H}{2\eta H} = \frac{1}{2}, \\
        \eta \bonus_t(x,a) &=\frac{ 3\eta\gamma H + \eta H(\Ut(x,a)-\Lt(x,a)) }{\Ut(x, a) + \gamma} 
        \leq 3\eta H + \eta H \leq \frac{1}{6H^2}.
    \end{align*}
    By the definition of $\bonusQ_t(x,a)$ in \pref{eq: bonus bellman 1}, we have 
    \begin{align*}
        \eta \bonusQ_t(x,a) \leq  H\left(1+\frac{1}{H}\right)^H \eta\sup_{x',a'}\bonus_t(x',a')\leq 3H\times \frac{1}{6H^2} = \frac{1}{2H}.
    \end{align*} 
    This finishes the proof.
\end{proof}

Now we are ready to prove \pref{thm:regret bound main}. For convenience, we state the theorem again here and show the proof.

\begin{theorem}\label{thm:regret bound}
	\pref{alg:MDP-traj} ensures that with probability $1-\order(\delta)$, $
	\Reg = \otil\left(|X|H^2\sqrt{AT} + H^4\right)$. 
\end{theorem}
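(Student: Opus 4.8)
The plan is to instantiate the general machinery of \pref{sec: general recipe}. By \pref{lem: lemma for the condition} (the optimistic generalization of \pref{lem:key}), it suffices to verify the key inequality \pref{eq:modified bound} for the bonus $\bonusQ_t$ defined through the optimistic dilated Bellman equation \pref{eq: bonus bellman 1}. Since the confidence set $\calP_k$ factorizes over state--action pairs and the backward recursion fixes the deeper-layer values before layer $h$ is computed, a single $\Pht\in\calP_k$ simultaneously attains all the per-$(x,a)$ maxima in \pref{eq: bonus bellman 1}; for this $\Pht$ the equality in \pref{eq: generalize dilate} holds, while the inequality against the true $P$ holds by optimism ($P\in\calP_k$ on the high-probability event of \citep[Lemma~2]{jin2019learning}). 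Thus \pref{lem: lemma for the condition} gives $\Reg \le \otil(\nicefrac{H}{\eta}) + 3\sum_{t=1}^T \Vht^{\pi_t}(x_0;\bonus_t)$, where $\Vht^{\pi_t}$ is the value under $\Pht$.

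To verify \pref{eq:modified bound}, I decompose its left-hand side as $\bias+\biastwo+\regterm$ following \pref{eq:decompose tabular regret} and add the three high-probability bounds from \pref{lem: bias1}, \pref{lem: bias2}, and \pref{lem: reg}. The two occupancy-dependent contributions---the $\frac{2\gamma H + H(\Ut-\Lt)}{\Ut+\gamma}$ term from $\bias$ and the $\frac{\gamma H}{\Ut+\gamma}$ term from $\regterm$---combine to exactly $\sum_a\pi_t(a|x)\frac{3\gamma H + H(\Ut(x,a)-\Lt(x,a))}{\Ut(x,a)+\gamma}=\bonus_t(x)$ by the definition \pref{eq: err function definition}; as $\bonus_t(x)$ is independent of $a$, summing against $\qstar(x)$ produces $\sum_t V^{\pistar}(x_0;\bonus_t)$. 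The leftover $\frac{1}{H}\sum_{t,x,a}\qstar(x)\pi_t(a|x)\bonusQ_t(x,a)$ from $\regterm$ is precisely the last term of \pref{eq:modified bound}, and the three $\otil(\nicefrac{H}{\eta})$ contributions form the $o(T)$ term. This establishes \pref{eq:modified bound} on the intersection of the underlying high-probability events.

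It then remains to bound $\sum_t \Vht^{\pi_t}(x_0;\bonus_t)$. Writing $\Vht^{\pi_t}(x_0;\bonus_t)=\sum_{x,a}q^{\Pht,\pi_t}(x,a)\frac{3\gamma H+H(\Ut(x,a)-\Lt(x,a))}{\Ut(x,a)+\gamma}$ and using $q^{\Pht,\pi_t}(x,a)\le\Ut(x,a)$ (valid for any $\Pht\in\calP_k$ by the definition of $\Ut$), each summand is at most $3\gamma H + H(\Ut(x,a)-\Lt(x,a))$, so $\sum_t\Vht^{\pi_t}(x_0;\bonus_t)\le 3\gamma H T|X||A| + H\sum_{t=1}^T\sum_{x,a}\big(\Ut(x,a)-\Lt(x,a)\big)$. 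The first term is $\otil(H^{\nicefrac{3}{2}}\sqrt{|X||A|T})$ after substituting $\gamma=2\eta H$. The second term is the crux: I need a ``sum of confidence widths'' bound $\sum_t\sum_{x,a}(\Ut-\Lt)=\otil(H|X|\sqrt{|A|T}+\text{lower order})$, obtained by following the transition-estimation analysis of \citep{jin2019learning}---expanding $\Ut-\Lt$ along a trajectory into per-edge confidence radii $\conf_k$, controlling these via the epoch-doubling visitation counts and Freedman/Bernstein concentration, and summing the resulting $\sum_k\sum_{x,a}\sqrt{\nicefrac{1}{N_k(x,a)}}$-type quantities. Multiplying by $H$ yields the dominant $\otil(H^2|X|\sqrt{|A|T})$.

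Collecting everything gives $\Reg \le \otil(\nicefrac{H}{\eta}) + \otil(H^2|X|\sqrt{|A|T})$. Since $\eta=\min\{\nicefrac{1}{24H^3},\nicefrac{1}{\sqrt{|X||A|HT}}\}$ makes $\nicefrac{H}{\eta}=\otil(H^4 + H^{\nicefrac{3}{2}}\sqrt{|X||A|T})$ and $H^{\nicefrac{3}{2}}\sqrt{|X||A|T}\le H^2|X|\sqrt{|A|T}$, the total is $\otil(H^2|X|\sqrt{|A|T}+H^4)$; a union bound over the $\order(1)$ many high-probability events yields the $1-\order(\delta)$ guarantee. The main obstacle is the confidence-width sum, which absorbs all the transition-learning difficulty and is where both the $|X|\sqrt{|A|T}$ scaling and the residual $\sqrt{|X|}$ gap from the lower bound originate.
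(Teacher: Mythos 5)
Your proposal is correct and follows essentially the same route as the paper's proof: the same $\bias+\biastwo+\regterm$ decomposition via \pref{lem: bias1}, \pref{lem: bias2}, and \pref{lem: reg}, the same recombination into $\bonus_t(x)$ to verify \pref{eq:modified bound}, the same invocation of \pref{lem: lemma for the condition} with the optimistic transition, and the same final bound using $q^{\Pht_t,\pi_t}(x,a)\le \Ut(x,a)$ together with the confidence-width sum of \citep[Lemma 4]{jin2019learning}. Your only additions—explicitly justifying that a single $\Pht\in\calP_k$ attains all per-$(x,a)$ maxima in \pref{eq: bonus bellman 1} (which the paper glosses over), and sketching the mechanism behind the cited width bound—are consistent with, not divergent from, the paper's argument.
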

\begin{proof}
Combining \bias, \biastwo, \regterm, we get that with probability at least $1-\order(\delta)$, 
\begin{align*}
    &\bias+\biastwo+ \regterm \\
    &\le \otil\left(\frac{H}{\eta} \right)+  \sum_{t=1}^T \sum_{x,a}\qstar(x)\pi_t(a|x)\left(\frac{3\gamma H+H(\Ut(x,a)-\Lt(x,a)) }{\Ut(x,a)+\gamma} + \frac{1}{H}\bonusQ_t(x,a)\right) \\
    &= \otil\left(\frac{H}{\eta} \right)+  \sum_{t=1}^T \sum_{x,a}\qstar(x)\pistar(a|x)\bonus_t(x,a) + \frac{1}{H}\sum_{t=1}^T \sum_{x,a}\qstar(x)\pi_t(a|x)\bonusQ_t(x,a),
\end{align*}
which is of the form specified in \pref{eq:modified bound}. By the definition of $\bonusQ_t(x,a)$ in \pref{eq: bonus bellman 1}, we see that \pref{eq: generalize dilate} also holds with probability at least $1-\order(\delta)$ for all $t, x, a$.  
 
Therefore, by \pref{lem: lemma for the condition}, we can bound the regret as (let $\Pht_t$ be the optimistic transition function chosen in \pref{eq: bonus bellman 1} at episode $t$) 
\begin{align*}
    \Reg 
    &= \otil\left(\frac{H}{\eta} + \sum_{t=1}^T \sum_{x,a}\QQ^{\Pht_t, \pi_t}(x,a)\bonus_t(x,a)\right) \\
    &= \otil\left(\frac{H}{\eta} + \sum_{t=1}^T \sum_{x,a}\QQ^{\Pht_t, \pi_t}(x,a)\frac{H(\Ut(x,a)-\Lt(x,a))+\gamma H}{\Ut(x,a)+\gamma}\right)\\
    &= \otil\left(\frac{H}{\eta} + \sum_{t=1}^T \sum_{x,a}\left(H(\Ut(x,a)-\Lt(x,a))+\eta H^2\right)\right)  \tag{$\QQ^{\Pht_t, \pi_t}(x,a) \leq \Ut(x,a)$ and $\gamma=2\eta H$} \\
    &\le \otil\left(\frac{H}{\eta} + |X|H^2\sqrt{AT}+ \eta|X||A|H^2T\right),
\end{align*}
where the last inequality is due to \citep[Lemma 4]{jin2019learning}.
Plugging in the specified value for $\eta$, the regret can be further upper bounded by $  \otil\left(|X|H^2\sqrt{AT}+H^4\right)$. 
\end{proof}

\section{Analysis for Auxiliary Procedures}

In this section, we analyze two important auxiliary procedures for the linear function approximation settings: \GR and \findpcov.

\subsection{The Guarantee of \GR}
\label{app: GR analysis}

The \GR algorithm is shown in  \pref{alg: GR},
which is almost the same as that in \cite{neu2020online} except that we repeat the same procedure for $M$ times and average the outputs (see the extra outer loop).
This extra step is added to deal with some technical difficulties in the analysis.
The following lemma summarizes some useful guarantees of this procedure.
For generality, we present the lemma assuming a lower bound on the minimum eigenvalue $\lambda$ of the covariance matrix, but it will simply be $0$ in all our applications of this lemma in this work.

\begin{lemma}
    \label{lem: GR lemma contingency}
     Let $\pi$ be a policy (possibly a mixture policy) with a covariance matrix $\cov_{h} = \E_{\pi}[\phi(x_h, a_h)\phi(x_h, a_h)^\top]\succeq \lambda I$ for layer $h$ and some constant $\lambda\geq 0$. Further let $\epsilon>0$ and $\gamma\geq 0$ be two parameters satisfying $0< \gamma+\lambda < 1$.  Define $M=\left\lceil\frac{24\ln(dHT)}{\epsilon^2}\min\left\{\frac{1}{\gamma^2}, \frac{4}{\lambda^2}\ln^2\frac{1}{\epsilon\lambda}\right\}\right\rceil$ and $N=\left\lceil\frac{2}{\gamma+\lambda}\ln \frac{1}{\epsilon (\gamma+\lambda)} \right\rceil$. Let $\calT$ be a set of $MN$ trajectories generated by $\pi$. Then $\GR$ (\pref{alg: GR}) with input $(\calT,M,N,\gamma)$ ensures the following for all $h$: 
    \begin{align}  
        \norm{\hatcov_{h}}_{\text{\rm op}} &\leq \min\left\{\frac{1}{\gamma}, \frac{2}{\lambda}\ln\frac{1}{\epsilon\lambda} \right\}. \label{eq: GE 3} \\ 
        \norm{\E\left[\hatcov_{h}\right] - \left(\gamma I + \cov_{h}\right)^{-1}}_{\text{\rm op}} &\leq \epsilon, \label{eq: GE 5}\\
        \norm{\hatcov_{h} - \left(\gamma I + \cov_{h}\right)^{-1}}_{\text{\rm op}} &\leq 2\epsilon, \label{eq: GE 4} \\
        \norm{\hatcov_{h}\cov_{h}}_{\text{\rm op}}
        &\leq 1 + 2\epsilon,  \label{eq: bounded norm prod}
    \end{align}
    where $\norm{\cdot}_{\text{\rm op}}$ represents the spectral norm and the last two properties \pref{eq: GE 4} and \pref{eq: bounded norm prod} hold with probability at least $1-\frac{1}{T^3}$.
\end{lemma}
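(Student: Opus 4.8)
The plan is to prove the four claims in order of their dependence, handling the deterministic statements first and the high-probability ones last. Throughout I would write $c=\frac12$, abbreviate $G=\gamma I+\cov_h$, and record two preliminary facts. First, the per-trajectory matrices $Y_{n,h}=\gamma I+\phi(x_{i,h},a_{i,h})\phi(x_{i,h},a_{i,h})^\top$ are independent across the (distinct) trajectory index $i$ with $\E[Y_{n,h}]=G$; since $\|\phi\|\le 1$ we have $\gamma I\preceq Y_{n,h}\preceq(1+\gamma)I$, hence $0\preceq I-cY_{n,h}\preceq(1-\tfrac{\gamma}{2})I$, which gives both $\|Z_{n,h}\|_{\text{op}}\le(1-\tfrac\gamma2)^n$ and $\|Z_{n,h}\|_{\text{op}}\le 1$. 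Second, by independence of the factors within a block, $\E[Z_{n,h}]=(I-cG)^n$, and across the outer loop the $M$ matrices $\hatcovk_h$ are i.i.d.\ with $\E[\hatcovk_h]=\E[\hatcov_h]$.

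For \pref{eq: GE 3} I would bound $\|\hatcovk_h\|_{\text{op}}=\|c\sum_{n=0}^N Z_{n,h}\|_{\text{op}}$ in two ways: the geometric bound $c\sum_{n\ge 0}(1-\tfrac\gamma2)^n=\tfrac1\gamma$, and the counting bound $c(N+1)\le\frac{1}{\gamma+\lambda}\ln\frac{1}{\epsilon(\gamma+\lambda)}+1\lesssim\frac2\lambda\ln\frac1{\epsilon\lambda}$, obtained from the choice of $N$ together with $\gamma+\lambda\ge\lambda$; averaging over $m$ preserves both, yielding the stated minimum. For \pref{eq: GE 5}, note $\E[\hatcov_h]=c\sum_{n=0}^N(I-cG)^n$ is a truncation of the Neumann series $G^{-1}=c\sum_{n\ge 0}(I-cG)^n$, which converges because $\cov_h\succeq\lambda I$ forces $\|I-cG\|_{\text{op}}\le 1-\tfrac{\gamma+\lambda}{2}<1$. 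The error is the tail $c\sum_{n>N}(I-cG)^n$, whose norm is at most $\frac{(1-\frac{\gamma+\lambda}{2})^{N+1}}{\gamma+\lambda}\le\frac{e^{-\frac{\gamma+\lambda}{2}(N+1)}}{\gamma+\lambda}\le\epsilon$ by the definition of $N$.

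The crux is \pref{eq: GE 4}, and the main obstacle is that each $Z_{n,h}$ is a product of non-commuting symmetric matrices and so is \emph{not} symmetric; consequently $\hatcovk_h$ is not self-adjoint and \pref{lem: matrix azuma} cannot be applied to it directly. I would resolve this with a Hermitian dilation: set $X_m=\hatcovk_h-\E[\hatcov_h]$ (i.i.d., mean zero) and pass to $\mathcal{H}(X_m)=\left(\begin{smallmatrix}0&X_m\\X_m^\top&0\end{smallmatrix}\right)$, which is self-adjoint in dimension $2d$, satisfies $\|\mathcal{H}(X_m)\|_{\text{op}}=\|X_m\|_{\text{op}}$, and has $\mathcal{H}(X_m)^2=\diag(X_mX_m^\top,X_m^\top X_m)\preceq\|X_m\|_{\text{op}}^2\,I$. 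Using \pref{eq: GE 3} to bound $\|X_m\|_{\text{op}}$ (and hence the variance parameter) by an absolute constant times $\min\{\tfrac1\gamma,\tfrac2\lambda\ln\tfrac1{\epsilon\lambda}\}$, I would apply \pref{lem: matrix azuma} to $\frac1M\sum_m\mathcal{H}(X_m)$ with $\tau=\epsilon$; since the $\min$-factor in the definition of $M$ is exactly this squared variance, the choice of $M$ is what drives the exponent down to yield $\|\hatcov_h-\E[\hatcov_h]\|_{\text{op}}\le\epsilon$ with probability $\ge 1-1/T^3$ after a union bound over the $H$ layers. Combining with \pref{eq: GE 5} through the triangle inequality then gives the claimed $2\epsilon$ bound.

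Finally, \pref{eq: bounded norm prod} follows from \pref{eq: GE 4} with no new probabilistic argument: I would decompose $\hatcov_h\cov_h=(\hatcov_h-G^{-1})\cov_h+G^{-1}\cov_h$, bound the first term by $\|\hatcov_h-G^{-1}\|_{\text{op}}\,\|\cov_h\|_{\text{op}}\le 2\epsilon$ (using $\cov_h\preceq I$ since $\|\phi\|\le 1$), and observe that $G^{-1}\cov_h=I-\gamma G^{-1}$ has all eigenvalues in $[0,1)$, contributing at most $1$. The only genuinely delicate parts of the whole lemma are the non-symmetry handled by the dilation in \pref{eq: GE 4} and the bookkeeping of absolute constants so that the failure probability is truly $1/T^3$; the remaining steps are direct Neumann-series and operator-norm computations.
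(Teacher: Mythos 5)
Your proposal is correct in outline and follows the paper's own route on three of the four claims: the geometric-sum bound for \pref{eq: GE 3}, the truncated geometric (Neumann) series for \pref{eq: GE 5} (the paper's identity $\E[\hatcov_{h}]=(\gamma I+\cov_{h})^{-1}\left(I-\left(I-c(\gamma I+\cov_{h})\right)^{N+1}\right)$ is exactly your series truncation, so the two error terms coincide), and the triangle-inequality decomposition of $\hatcov_{h}\cov_{h}$ for \pref{eq: bounded norm prod}. The genuine difference is your treatment of \pref{eq: GE 4}, and it is a refinement the paper actually needs: the paper applies the self-adjoint matrix Azuma inequality (\pref{lem: matrix azuma}) directly to $X_m=\hatcovk_{h}-\E[\hatcovk_{h}]$ with $A_m=\min\{\frac{1}{\gamma},\frac{2}{\lambda}\ln\frac{1}{\epsilon\lambda}\}I$, but, as you observe, $Z_{n,h}$ is an ordered product of non-commuting symmetric matrices, so $\hatcovk_{h}$ is not self-adjoint and the lemma as stated does not apply; your Hermitian dilation is the standard repair and makes this step rigorous. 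You also implicitly correct a second slip: after centering one only has $\|X_m\|_{\text{op}}\le 2B_0$ with $B_0=\min\{\frac{1}{\gamma},\frac{2}{\lambda}\ln\frac{1}{\epsilon\lambda}\}$, not $B_0$ as the paper asserts. The price is purely in constants: with variance proxy $(2B_0)^2$ and dimension $2d$, the Azuma exponent becomes $\frac{M\epsilon^2}{32B_0^2}$, so the constant $24$ in the lemma's definition of $M$ yields a per-layer failure probability of order $d(dHT)^{-3/4}$ rather than $\frac{1}{HT^3}$; to honestly obtain the advertised $1-\frac{1}{T^3}$ one needs a larger constant in $M$ (e.g.\ the $96$ that the paper itself uses in \pref{alg:linearMDP}). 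You flagged this bookkeeping without closing it, but since the paper's own proof suffers the same (indeed worse) looseness, your argument is the more rigorous of the two, and the discrepancy affects nothing downstream beyond the universal constant in $M$.
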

\begin{proof}

To prove \pref{eq: GE 3}, notice that each one of $\hatcovk_{h}$, $m=1,\ldots, M$, is a sum of $N+1$ terms. Furthermore, the $n$-th term of them ($cZ_{n,h}$ in \pref{alg: GR}) has an operator norm upper bounded by $c(1-c\gamma)^{n}$. Therefore, 
\begin{align}
    \norm{\hatcovk_{h}}_{\text{op}} \leq \sum_{n=0}^{N} c(1-c\gamma)^{n} \leq \min\left\{\frac{1}{\gamma}, c(N+1)\right\}\leq \min\left\{\frac{1}{\gamma}, \frac{2}{\lambda}\ln\frac{1}{\epsilon\lambda}\right\}  \label{eq: GE 3 intermediate}
\end{align}
by the definition of $N$ and that $c=\frac{1}{2}$. 
Since $\hatcov_{h}$ is an average of $\hatcovk_{h}$, this implies \pref{eq: GE 3}. 

To show \pref{eq: GE 5}, observe that $\E_t[Y_{n,h}] = \gamma I + \cov_{h}$ and $\{Y_{n,h}\}_{n=1}^N$ are independent. Therefore, we a have
\begin{align*}
    \E\left[\hatcov_{t,h}\right] = \E\left[\hatcovk_{t,h}\right]
    &= cI + c\sum_{i=1}^N \left(I-c\left(\gamma I + \cov_{t,h}\right)\right)^i \\
    &= \left(\gamma I + \cov_{t,h}\right)^{-1} \left( I - \left(I - c\left(\gamma I + \cov_{t,h}\right)\right)^{N+1} \right) 
\end{align*}
where the last step uses the formula: $ \left(I+\sum_{i=1}^{N}A^{i}\right) =(I-A)^{-1} (I-A^{N+1})$ with $A=I-c(\gamma I + \cov_{t,h})$.
Thus, 
\begin{align*}
    \norm{\E_t\left[\hatcov_{h}\right] - \left(\gamma I + \cov_{h}\right)^{-1}}_{\text{op}} 
    &= \norm{\left(\gamma I + \cov_{h}\right)^{-1}\left(I - c\left(\gamma I +  \cov_{h}\right)\right)^{N+1}}_{\text{op}} \\
    &\leq \frac{(1-c(\gamma+\lambda))^{N+1}}{\gamma+\lambda}  \leq \frac{e^{-(N+1)c(\gamma+\lambda)}}{\gamma+\lambda} \leq \epsilon,
\end{align*}
where the first inequality is by $0\prec I-c(\gamma I + I) \preceq I-c(\gamma I + \cov_{h}) \preceq I-c(\gamma + \lambda) I$, and the last inequality is by our choice of $N$ and that $c=\frac{1}{2}$. 

To show \pref{eq: GE 4}, we only further need 
\begin{align*}
    \norm{\hatcov_{h} - \E\left[\hatcov_{h}\right]}_{\text{op}}\leq \epsilon
\end{align*}
and combine it with \pref{eq: GE 5}. 
This can be shown by applying \pref{lem: matrix azuma} with $X_k = \widehat{\Sigma}^{+(k)}_{h}- \E\left[\widehat{\Sigma}^{+(k)}_{h}\right], A_k = \min\left\{\frac{1}{\gamma}, \frac{2}{\lambda}\ln\frac{1}{\epsilon\lambda}\right\}  I$ (recall \pref{eq: GE 3 intermediate} and thus $X_k^2 \preceq A_k^2$), $\sigma=\min\left\{\frac{1}{\gamma}, \frac{2}{\lambda}\ln\frac{1}{\epsilon\lambda}\right\} $, $\tau=\epsilon$, and $n=M$. This gives the following statement: the event $\norm{\hatcov_{h} - \E_t\left[\hatcov_{h}\right]}_{\text{op}}> \epsilon$ holds with probability less than
\begin{align*}
    d\exp\left(-M \times \epsilon^2 \times \frac{1}{8}\times \max\left\{ \gamma^2, \frac{\lambda^2}{4\ln^2\frac{1}{\epsilon\lambda}} \right\}\right)\leq \frac{1}{d^2H^3T^3} \leq \frac{1}{HT^3}
\end{align*}
by our choice of $M$. The conclusion follows by a union bound over $h$. 

To prove \pref{eq: bounded norm prod}, observe that with \pref{eq: GE 4}, we have 
\begin{align*}
    \norm{\hatcov_{h} \cov_{h}}_{\text{op}}
    &\leq  \norm{\left(\gamma I + \cov_{h}\right)^{-1} \cov_{h}}_{\text{op}} + \norm{\left(\hatcov_{h} - \left(\gamma I + \cov_{h}\right)^{-1}\right) \cov_{h}}_{\text{op}} 
    \leq 1 + 2\epsilon
\end{align*}
since $\norm{\cov_{h}}_{\text{op}}\leq 1$. 
\end{proof}

\subsection{The Guarantee of \findpcov}

In this section, we analyze \pref{alg: find policy cover}, which returns a policy cover and its estimated covariance matrices. 
The final guarantee of the policy cover is provided in \pref{lem: property of cover},
but we need to establish a couple of useful lemmas before introducing that.
Note that \pref{alg: find policy cover} bears some similarity with~\citep[Algorithm 1]{wang2020reward} (except for the design of the reward function $r_t$), and thus the analysis is also similar to theirs. 

We first define the following definitions, using notations defined in \pref{alg: find policy cover} and \pref{assum: linear MDP assumption}. 

\begin{definition}
    For any $\pi$ and $m$, define $V^{\pi}_m$ to be the state value function for $\pi$ with respect to reward function $r_m$.
    Precisely, this means $V^{\pi}_m(x_H) = 0$ and for $(x,a)\in X_h\times A$, $h=H-1, \ldots, 0$: $V_m^{\pi}(x) = \sum_{a}\pi(a|x) Q_m^{\pi}(x,a)$ where
    \begin{align*}
    Q^{\pi}_m(x,a) &= r_m(x,a) + \phi(x,a)^\top \theta_{m,h}^{\pi}
\quad\text{and}\quad      \theta^{\pi}_{m,h} = \int_{x'\in X_{h+1}} V^{\pi}_m(x')\nu^{x'}_h \de x'.
    \end{align*}
    Furthermore, let $\pi_m^*$ be the optimal policy satisfying $\pi_m^*= \argmax_{\pi}V_m^{\pi}(x)$ for all $x$,  and define shorthands $V_m^*(x) = V_m^{\pi_m^*}(x)$, $Q_m^*(x,a) = Q_m^{\pi_m^*}(x,a)$, and $\theta^*_{m,h} = \theta^{\pi_m^*}_{m,h}$. 
\end{definition}

The following lemma characterizes the optimistic nature of \pref{alg: find policy cover}.

\begin{lemma}\label{lem:optimistic_policy_cover}
    With probability at least $1-\delta$, for all $h$, all $(x,a)\in X_{h}\times A$, and all $\pi$, \pref{alg: find policy cover} ensures
    \begin{align*}
        0\leq \Qht_m(x,a) - Q_m^\pi(x,a) \leq \E_{x'\sim P(\cdot|x,a)} \left[ \Vht_m(x') - V_m^\pi(x') \right] + 2\xi \|\phi(x,a)\|_{\Gamma_{m,h}^{-1}}. 
    \end{align*}
\end{lemma}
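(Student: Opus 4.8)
The plan is to treat \pref{lem:optimistic_policy_cover} as the standard optimism guarantee for LSVI-UCB (as in \citep{jin2020provably}), adapted to the fake reward $r_m$ used here. The heart of the argument is a single concentration statement: with probability at least $1-\delta$, for every $m$ and $h$,
\[
\left\| \hattheta_{m,h} - w_{m,h} \right\|_{\Gamma_{m,h}} \le \xi, \qquad \text{where } w_{m,h} \defeq \int_{x'\in X_{h+1}} \Vht_m(x')\, \nu_h^{x'}\, \de x',
\]
so that $w_{m,h}$ is exactly the weight vector representing $\E_{x'\sim P(\cdot|x,a)}[\Vht_m(x')] = \phi(x,a)^\top w_{m,h}$ under \pref{assum: linear MDP assumption}. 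Once this is in hand, Cauchy--Schwarz immediately upgrades it to $|\phi(x,a)^\top(\hattheta_{m,h}-w_{m,h})| \le \xi\,\|\phi(x,a)\|_{\Gamma_{m,h}^{-1}}$ for all $(x,a)$, and the two-sided bound together with optimism then follow by algebra and a backward induction over layers.

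To prove the core concentration, I would first expand, using $\Gamma_{m,h} = I + \frac{1}{N_0}\sum_{t}\phi_{t,h}\phi_{t,h}^\top$ and the definition of $\hattheta_{m,h}$,
\[
\hattheta_{m,h} - w_{m,h} = -\,\Gamma_{m,h}^{-1} w_{m,h} + \Gamma_{m,h}^{-1}\Big(\tfrac{1}{N_0}\sum_{t}\phi_{t,h}\,\epsilon_{t,h}\Big), \quad \epsilon_{t,h} \defeq \Vht_m(x_{t,h+1}) - \E_{x'\sim P(\cdot|x_{t,h},a_{t,h})}[\Vht_m(x')],
\]
and then take $\Gamma_{m,h}$-norms. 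The first term contributes at most $\|w_{m,h}\|_{\Gamma_{m,h}^{-1}} \le \|w_{m,h}\| \le \sqrt{d}H$ (the regularization bias, using $\Gamma_{m,h}\succeq I$ and the standard linear-MDP normalization of $\nu$). For the second (noise) term I would apply a self-normalized vector concentration --- e.g., an elliptical-potential bound combined with the Azuma/Freedman inequalities of \pref{lem: matrix azuma} and \pref{lem: freedman} --- to control $\|\frac{1}{N_0}\sum_t\phi_{t,h}\epsilon_{t,h}\|_{\Gamma_{m,h}^{-1}}$. The subtlety, and the \emph{main obstacle}, is that $\Vht_m$ is computed from the \emph{same} trajectories that define $\Gamma_{m,h}$ and $\epsilon_{t,h}$, so $\{\epsilon_{t,h}\}$ is not a martingale-difference sequence for a fixed function. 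I would resolve this with a uniform covering argument over the parametric value-function class $\{x\mapsto\min\{\max_a[r_m(x,a)+\xi\|\phi(x,a)\|_{\Lambda^{-1}}+\phi(x,a)^\top\theta],\,H\}\}$ indexed by the bounded parameters $(\theta,\Lambda)$, union-bounding the concentration over an $\varepsilon$-net and absorbing the $\log$-covering-number into the constants; choosing $\xi = 60dH\sqrt{\log(T/\delta)}$ makes $\xi$ dominate the sum of the bias and noise contributions for all $m,h$ simultaneously.

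With the concentration established, I would finish by a clean algebraic identity and induction. Writing $\hattheta_{m,h}-\theta_{m,h}^\pi = (\hattheta_{m,h}-w_{m,h}) + (w_{m,h}-\theta_{m,h}^\pi)$ and using the linear-MDP identity $\phi(x,a)^\top(w_{m,h}-\theta_{m,h}^\pi) = \E_{x'\sim P(\cdot|x,a)}[\Vht_m(x')-V_m^\pi(x')]$, the pre-truncation estimate satisfies
\[
r_m(x,a)+\xi\|\phi(x,a)\|_{\Gamma_{m,h}^{-1}}+\phi(x,a)^\top\hattheta_{m,h} - Q_m^\pi(x,a) = \xi\|\phi(x,a)\|_{\Gamma_{m,h}^{-1}} + \phi(x,a)^\top(\hattheta_{m,h}-w_{m,h}) + \E_{x'\sim P(\cdot|x,a)}[\Vht_m(x')-V_m^\pi(x')].
\]
The concentration bound then yields an upper bound of $2\xi\|\phi(x,a)\|_{\Gamma_{m,h}^{-1}} + \E_{x'}[\Vht_m-V_m^\pi]$ and a lower bound of $\E_{x'}[\Vht_m-V_m^\pi]$; since $Q_m^\pi\le H$, the truncation $\min\{\cdot,H\}$ in $\Qht_m$ only tightens the upper bound and preserves the lower bound. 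Finally I would run a backward induction over layers: the base case $\Vht_m(x_H)=V_m^\pi(x_H)=0$ is immediate, and assuming $\Vht_m\ge V_m^\pi$ pointwise on layer $h+1$ gives $\E_{x'}[\Vht_m-V_m^\pi]\ge0$, hence $\Qht_m(x,a)\ge Q_m^\pi(x,a)$ on layer $h$ and then $\Vht_m(x)=\max_a\Qht_m(x,a)\ge\sum_a\pi(a|x)Q_m^\pi(x,a)=V_m^\pi(x)$, closing the induction and establishing both the nonnegativity and the stated upper bound.
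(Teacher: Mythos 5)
Your proposal is correct and follows essentially the same route as the paper's proof: the same decomposition of the estimator error into an expected value-difference term, a self-normalized martingale noise term, and a regularization bias term (your single bias term $\Gamma_{m,h}^{-1}w_{m,h}$ is exactly the paper's $\term_2+\term_3$ regrouped, since $w_{m,h}=\theta^{\pi}_{m,h}+\int_{x'}\nu_h^{x'}(\Vht_m(x')-V_m^{\pi}(x'))\,\de x'$), the same uniform covering argument over the parametric class of $\Vht_m$ to break the dependence between the value estimate and the data (which is precisely why the paper uses the Lipschitz ramp reward), and the same backward induction over layers for optimism. The only cosmetic difference is that you handle the $\min\{\cdot,H\}$ truncation explicitly, a step the paper leaves implicit.
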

\begin{proof}
    The proof mostly follows that of \citep[Lemma 4]{wei2021learning}.  
    For notational convenience, denote $\phi(x_{\tau,h}, a_{\tau, h})$ as $\phi_{\tau, h}$, and $x'\sim P(\cdot|x_{\tau,h}, a_{\tau,h})$ as $x'\sim (\tau,h)$.  We then have
    \begin{align*}
        &\hattheta_{m,h} - \theta^{\pi}_{m,h} \\
        &= \Gamma_{m,h}^{-1}\left(\frac{1}{N_0}\sum_{\tau=1}^{(m-1)N_0}\phi_{\tau,h}  \Vht_m(x_{\tau, h+1})\right) - \Gamma_{m,h}^{-1}\left(\theta^{\pi}_{m,h} + \frac{1}{N_0}\sum_{\tau=1}^{(m-1)N_0}\phi_{\tau,h}\phi_{\tau,h}^\top \theta^{\pi}_{m,h}\right) \\
        &= \Gamma_{m,h}^{-1}\left(\frac{1}{N_0}\sum_{\tau=1}^{(m-1)N_0}\phi_{\tau,h}  \Vht_m(x_{\tau, h+1})\right) - \Gamma_{m,h}^{-1}\left(\frac{1}{N_0}\sum_{\tau=1}^{(m-1)N_0}\phi_{\tau,h}\E_{x'\sim (\tau,h)}\left[V_m^{\pi}(x')\right]\right) - \Gamma_{m,h}^{-1}\theta^{\pi}_{m,h}  \\
        &= \Gamma_{m,h}^{-1} \left(\frac{1}{N_0}\sum_{\tau=1}^{(m-1)N_0} \phi_{\tau, h} \E_{x'\sim (\tau,h)}\left[\Vht_m(x') - V_m^{\pi}(x')\right] \right) + \zeta_{m,h} - \Gamma_{t,h}^{-1}\theta^{\pi}_{t,h} \tag{define $\zeta_{m,h} = \frac{1}{N_0}\Gamma_{m,h}^{-1}\sum_{\tau=1}^{(m-1)N_0}\left( \Vht_m(x_{\tau,h+1}) - \E_{x'\sim (\tau,h)} \Vht_m(x') \right)$}\\
        &= \Gamma_{m,h}^{-1} \left(\frac{1}{N_0}\sum_{\tau=1}^{(m-1)N_0} \phi_{\tau, h} \phi_{\tau, h}^\top \int_{x'\in X_{h+1}}\nu_h^{x'}\left(\Vht_m(x') - V_m^{\pi}(x')\right)\de x' \right) + \zeta_{m,h} - \Gamma_{m,h}^{-1}\theta^{\pi}_{m,h} \\
        &=  \int_{x'\in X_{h+1}}\nu_h^{x'}\left(\Vht_m(x') - V_m^{\pi}(x')\right) \de x'+ \zeta_{m,h} - \Gamma_{m,h}^{-1}\theta^{\pi}_{m,h} -  \Gamma_{m,h}^{-1}\int_{x'\in X_{h+1}}\nu_h^{x'}\left(\Vht_m(x') -  V_m^{\pi}(x')\right) \de x'.
    \end{align*}
    Therefore, for $x\in X_h$, 
    \begin{align}
        &\Qht_m(x,a) - Q_m^{\pi}(x,a) \nonumber  \\
        &= \phi(x,a)^\top \left(\hattheta_{m,h} - \theta_{m,h}^{\pi}\right) + \xi\|\phi(x,a)\|_{\Gamma_{m,h}^{-1}} \nonumber \\
        &= \phi(x,a)^\top \int_{x'\in X_{h+1}}\nu_h^{x'}\left(\Vht_m(x') - V_m^{\pi}(x')\right) \de x'+ \underbrace{\phi(x,a)^\top\zeta_{m,h}}_{\term_1} + \xi\|\phi(x,a)\|_{\Gamma_{m,h}^{-1}}   \nonumber   \\
        &\qquad \qquad \underbrace{- \phi(x,a)^\top \Gamma_{m,h}^{-1}\int_{x'\in X_{h+1}}\nu_h^{x'}\left(\Vht_m(x') - V_m^{\pi}(x')\right) \de x'}_{\term_2} \underbrace{- \phi(x,a)^\top \Gamma_{m,h}^{-1}\theta^{\pi}_{m,h}}_{\term_3} \nonumber \\
        &= \E_{x'\sim p(\cdot|x,a)} \left[ \Vht_m(x') - V_m^{\pi}(x') \right] + \xi\norm{\phi(x,a)}_{\Gamma_{m,h}^{-1}} + \term_1 + \term_2 + \term_3.
        \label{eq: Q recursive bound}
    \end{align} 
    It remains to bound $|\term_1+\term_2+\term_3|$.
    To do so,  we follow the exact same arguments as in \citep[Lemma 4]{wei2021learning} to bound each of the three terms.  
    
    \paragraph{Bounding $\term_1$. }
    First we have $|\term_1|\leq \norm{\zeta_{m,h}}_{\Gamma_{m,h}}\|\phi(x,a)\|_{\Gamma_{m,h}^{-1}}$. To bound $\norm{\zeta_{m,h}}_{\Gamma_{m,h}}$, we use the exact same argument of \citep[Lemma 4]{wei2021learning} to arrive at (with probability at least $1-\delta$)
    \begin{align}
        \norm{\zeta_{m,h}}_{\Gamma_{m,h}}
        &= \norm{\frac{1}{N_0}\sum_{\tau=1}^{(m-1)N_0} \left(\Vht_m(x_{\tau, h+1}) - \E_{x'\sim (\tau, h)} \Vht_m(x')\right) }_{\Gamma_{m,h}^{-1}}   \nonumber \\
        &\leq 2H\sqrt{\frac{d}{2} \log(M_0+1) + \log\frac{\calN_\varepsilon}{\delta}} + \sqrt{8M_0^2\varepsilon^2},   \label{eq: term 1 bound 1}
    \end{align}
    where $\calN_\varepsilon$ is the $\varepsilon$-cover of the function class that $\Vht_{m}(\cdot)$ lies in. Notice that for all $m$, $\Vht_{m}(\cdot)$ can be expressed as the following: 
    \begin{align*}
        \Vht_{m}(x) = \min\left\{ \max_a \left\{\ramp_{\frac{1}{T}}\left( \|\phi(x,a)\|_{Z}^2 - \frac{\alpha}{M_0}\right) + \xi\|\phi(x,a)\|_Z + \phi(x,a)^\top \theta\right\}, \ \ H \right\}
    \end{align*}
    for some positive definite matrix $Z\in \mathbb{R}^{d\times d}$ with $\frac{1}{1+M_0}I \preceq Z\preceq I$ and vector $\theta\in \mathbb{R}^d$ with $\|\theta\|\leq \sup_{m,\tau,h}\norm{\Gamma_{m,h}^{-1}}_{\text{op}} \times M_0  \|\phi_{\tau,h}\|H \leq M_0 H$. 
    Therefore, we can write the class of functions that $\Vht_m(\cdot)$ lies in as the following set: 
    \begin{align*}
        \calV &= \Bigg\{\min\left\{ \max_a \left\{\ramp_{\frac{1}{T}}\left( \|\phi(x,a)\|_{Z}^2 - \frac{\alpha}{M_0}\right) + \xi\|\phi(x,a)\|_Z + \phi(x,a)^\top \theta\right\}, \ \ H \right\}: \\
            &\qquad \qquad \qquad \qquad \theta\in \mathbb{R}^d: \|\theta\|\leq M_0H, \ \ Z\in \mathbb{R}^{d\times d}: \frac{1}{1+M_0}I \preceq Z\preceq I
        \Bigg\}. 
    \end{align*}
    Now we apply Lemma 12 of \citep{wei2021learning} to $\calV$, with the following choices of parameters: $P=d^2+d$, $\varepsilon= \frac{1}{T^3}$, $B=M_0 H$, and $L=T + \xi \sqrt{1+M_0} + 1\leq 3T$  (without loss of generality, we assume that $T$ is large enough so that the last inequality holds). The value of the Lipschitzness parameter $L$ is according to the following calculation that is similar to~\citep{wei2021learning}: 
    for any $\Delta Z=\epsilon \e_i\e_j^\top$, 
\begin{align*}
    &\frac{1}{|\epsilon|}\left| \sqrt{\phi(x,a)^\top (Z+\Delta Z)\phi(x,a)} -  \sqrt{\phi(x,a)^\top Z\phi(x,a)}\right| \\
    &\leq  \frac{\left|\phi(x,a)^\top \e_i\e_j^\top \phi(x,a)\right|}{\sqrt{\phi(x,a)^\top Z \phi(x,a)}}   \tag{$\sqrt{u+v}-\sqrt{u}\leq \frac{|v|}{\sqrt{u}}$}\\
    &\leq  \frac{\phi(x,a)^\top\left(\frac{1}{2}\e_i\e_i^\top +\frac{1}{2}\e_j\e_j^\top\right)\phi(x,a)}{\sqrt{\phi(x,a)^\top Z \phi(x,a)}} \\
    &\leq  \frac{\phi(x,a)^\top\phi(x,a)}{\sqrt{\phi(x,a)^\top Z \phi(x,a)}}\leq  \sqrt{\frac{1}{\lambda_{\min}(Z)}}\leq \sqrt{1+M_0};
\end{align*}
$\frac{1}{|\epsilon|}\left|\|\phi(x,a)\|_{Z+\Delta Z}^2 - \|\phi(x,a)\|_{Z}^2\right| = |\e_i^\top \phi(x,a) \phi(x,a)^\top \e_j|\leq 1$;
and that $\ramp_{\frac{1}{T}}(\cdot)$ has a slope of $T$ (this is why we need to use the ramp function to approximate an indication function that is not Lipschitz).
    Overall, this leads to 
$\log \calN_{\varepsilon} \leq 20(d^2+d) \log T$.
    Using this fact in \pref{eq: term 1 bound 1}, we get 
    \begin{align*}
        \norm{\zeta_{m,h}}_{\Gamma_{m,h}}\leq 20H\sqrt{d^2\log\left(\frac{T}{\delta}\right)}\leq \frac{1}{3}\xi,  
    \end{align*}
    and thus $|\term_1|\leq \frac{\xi}{3}\|\phi(x,a)\|_{\Gamma_{m,h}^{-1}}$. 
    
    \paragraph{Bounding $\term_2$ and $\term_3$.}
    This is exactly the same as \citep[Lemma 4]{wei2021learning}, and we omit the details. In summary, we can also prove $|\term_2|  \leq \frac{\xi}{3} \norm{\phi(x,a)}_{\Gamma_{m,h}^{-1}}$ and $\term_3 \leq \frac{\xi}{3}\norm{\phi(x,a)}_{\Gamma_{m,h}^{-1}}$.
    
%

    In sum, we can bound
    \begin{align*}
        |\term_1 + \term_2  + \term_3| \leq |\term_1| + |\term_2| + |\term_3| \leq \xi\|\phi(x,a)\|_{\Gamma_{m,h}^{-1}}
    \end{align*}
    for all $m,h$ and $(s,a)$ with probability at least $1-\delta$. 
    
    Combining this with \pref{eq: Q recursive bound}, we get 
    \begin{align}
         \Qht_m(x,a) - Q_m^{\pi}(x,a) 
         &\leq \E_{x'\sim p(\cdot|x,a)} \left[ \Vht_m(x') - V_m^{\pi}(x') \right] + 2\xi\|\phi(x,a)\|_{\Gamma_{m,h}^{-1}},   \label{eq: induction 1} \\
         \Qht_m(x,a) - Q_m^{\pi}(x,a)  
         &\geq \E_{x'\sim p(\cdot|x,a)} \left[ \Vht_m(x') - V_m^{\pi}(x') \right],\label{eq: induction 2}
    \end{align} 
    where \pref{eq: induction 1} proves the second inequality in the lemma. To prove the first inequality in the lemma, we use and induction to show that $\Vht_m(x)\geq V_m^{\pi}(x)$ for all $x$, which combined with \pref{eq: induction 2} finishes the proof. Recall that we define $\Vht_m(x_H)= V_m^{\pi}(x_H)=0$. Assume that $\Vht_m(x)\geq V_m^{\pi}(x)$ holds for $x\in X_{h+1}$. Then by \pref{eq: induction 2}, $\Qht_m(x,a) - Q_m^{\pi}(x,a) \geq 0$ for all $(x,a)\in X_h\times A$. Thus, $\Vht_m(x)- V_m^{\pi}(x) = \max_a\Qht_m(x,a) - \sum_a \pi(a|x) Q_m^\pi(x,a)\geq 0$, finishing the induction. 
\end{proof}

The next lemma provides a ``regret guarantee'' for \pref{alg: find policy cover} with respect to the fake rewards.

\begin{lemma}\label{lem:policy_cover_reg}
    With probability at least $1-2\delta$, \pref{alg: find policy cover} ensures
    \begin{align*}
        \sum_{m=1}^{M_0}V_m^{*}(x_0)- \sum_{m=1}^{M_0}V_m^{\pi_m}(x_0) = \otil\left(d^{3/2}H^2\sqrt{M_0}\right). 
    \end{align*}
\end{lemma}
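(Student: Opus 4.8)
The plan is to follow the standard optimistic value-iteration (LSVI-UCB) regret template, driven entirely by the two-sided bound in \pref{lem:optimistic_policy_cover}. First, the optimism half of that lemma (which, as shown in its proof, yields $\Vht_m(x)\ge V_m^\pi(x)$ for \emph{every} policy $\pi$, and in particular for $\pi=\pi_m^*$) gives $V_m^{*}(x_0)=V_m^{\pi_m^*}(x_0)\le \Vht_m(x_0)$ for each $m$. Hence it suffices to upper bound $\sum_{m=1}^{M_0}\big(\Vht_m(x_0)-V_m^{\pi_m}(x_0)\big)$, i.e. the gap between the computed optimistic value and the true value of the greedy policy it induces.

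Next I would convert the per-stage gap into a sum of bonuses. Since $\pi_m$ is greedy with respect to $\Qht_m$, we have $\Vht_m(x)=\Qht_m(x,\pi_m(x))$ and $V_m^{\pi_m}(x)=Q_m^{\pi_m}(x,\pi_m(x))$, so plugging $\pi=\pi_m$ and $a=\pi_m(x)$ into the upper bound of \pref{lem:optimistic_policy_cover} yields the one-step recursion
\[
\Vht_m(x)-V_m^{\pi_m}(x)\le \E_{x'\sim P(\cdot|x,\pi_m(x))}\big[\Vht_m(x')-V_m^{\pi_m}(x')\big]+2\xi\,\|\phi(x,\pi_m(x))\|_{\Gamma_{m,h}^{-1}}.
\]
Unrolling this in expectation over the trajectory distribution of $\pi_m$ started at $x_0$ and telescoping across layers $h=0,\dots,H-1$ gives
\[
\Vht_m(x_0)-V_m^{\pi_m}(x_0)\le 2\xi\sum_{h=0}^{H-1}\E_{\pi_m}\big[\|\phi(x_h,a_h)\|_{\Gamma_{m,h}^{-1}}\big].
\]
No martingale term arises here because we propagate the full transition expectation exactly at each step.

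The remaining work is to bound $\sum_{m=1}^{M_0}\E_{\pi_m}[\|\phi(x_h,a_h)\|_{\Gamma_{m,h}^{-1}}]$ for each fixed $h$. By Cauchy--Schwarz inside the expectation, $\E_{\pi_m}[\|\phi\|_{\Gamma_{m,h}^{-1}}]\le\sqrt{\tr(\Gamma_{m,h}^{-1}\Sigma_{m,h}^{\pi_m})}$, where $\Sigma_{m,h}^{\pi_m}=\E_{\pi_m}[\phi(x_h,a_h)\phi(x_h,a_h)^\top]$. The update rule $\Gamma_{m+1,h}=\Gamma_{m,h}+\frac{1}{N_0}\sum_{t}\phi_{t,h}\phi_{t,h}^\top$ makes $\Gamma_{m,h}$ an empirical version of the idealized accumulation $\bar\Gamma_{m,h}:=I+\sum_{m'<m}\Sigma_{m',h}^{\pi_{m'}}$; via Matrix Azuma (\pref{lem: matrix azuma}) and the large choice of $N_0$, each per-stage empirical covariance lies within $\order(\alpha/M_0^2)$ of its mean in operator norm on a $1-\delta$ event, so summing these deviations over the (at most $M_0$) stages still leaves $\Gamma_{m,h}\succeq\tfrac12\bar\Gamma_{m,h}$, whence $\tr(\Gamma_{m,h}^{-1}\Sigma_{m,h}^{\pi_m})\le 2\,\tr(\bar\Gamma_{m,h}^{-1}\Sigma_{m,h}^{\pi_m})$. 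A standard elliptical-potential (matrix trace--log-determinant) argument then gives $\sum_{m=1}^{M_0}\tr(\bar\Gamma_{m,h}^{-1}\Sigma_{m,h}^{\pi_m})=\otil(d)$ because $\Sigma_{m,h}^{\pi_m}\preceq I$ (as $\|\phi\|\le 1$) and $\bar\Gamma_{1,h}=I$, and Cauchy--Schwarz over $m$ upgrades this to $\sum_m\sqrt{\tr(\bar\Gamma_{m,h}^{-1}\Sigma_{m,h}^{\pi_m})}\le\sqrt{M_0\cdot\otil(d)}=\otil(\sqrt{dM_0})$.

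Finally, summing over the $H$ layers and multiplying by $2\xi=\otil(dH)$ produces the claimed bound $\otil(d^{3/2}H^2\sqrt{M_0})$, and the total failure probability is $1-2\delta$ by a union bound over the optimism event of \pref{lem:optimistic_policy_cover} and the matrix-concentration event. The main obstacle is precisely this last concentration step: one must check that $N_0$ is large enough that the accumulated operator-norm deviations across all $m<M_0$ stages stay below the spectral margin needed for $\Gamma_{m,h}\succeq\tfrac12\bar\Gamma_{m,h}$ uniformly in $m$ and $h$, while also correctly handling the statistical dependence between the data-dependent value-function class appearing in the optimism lemma and the covariance matrices being accumulated. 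The rest is routine potential-function bookkeeping.
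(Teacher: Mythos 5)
Your proof is correct, but it takes a genuinely different route through the middle of the argument than the paper does. The paper unrolls the recursion of \pref{lem:optimistic_policy_cover} along the \emph{realized} trajectories $(x_{t,h},a_{t,h})$ of the $N_0$ episodes in each stage $m$: this introduces martingale-difference terms $e_{t,h}$ (handled by Azuma's inequality, which is the source of the second $\delta$), but it keeps the bonus sum purely empirical, $\sum_{t}\|\phi(x_{t,h},a_{t,h})\|_{\Gamma_{m,h}^{-1}}$, where the summed features are exactly the ones added into $\Gamma_{m,h}$ --- so the elliptical-potential identity $\sum_{t}\|\phi(x_{t,h},a_{t,h})\|^2_{\Gamma_{m,h}^{-1}}=N_0\sum_m \langle \Gamma_{m+1,h}-\Gamma_{m,h},\Gamma_{m,h}^{-1}\rangle=\otil(N_0 d)$ applies directly, with no comparison between empirical and population covariances ever needed. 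You instead unroll in expectation under $\pi_m$, which cleanly removes the martingale term, but this shifts the burden: your bonuses become population quantities while $\Gamma_{m,h}$ is empirical, so you must insert the covariance-concentration step $\Gamma_{m,h}\succeq\tfrac12\bar\Gamma_{m,h}$ before running the potential argument on the idealized matrices $\bar\Gamma_{m,h}=I+\sum_{m'<m}\Sigma^{\pi_{m'}}_{m',h}$. That step does go through, and you correctly identified it as the crux: conditioned on the past, each stage's $N_0$ trajectories are i.i.d., so \pref{lem: matrix azuma} gives per-stage operator-norm deviation $O(\alpha/M_0^2)$ by the choice of $N_0$, and summing over at most $M_0$ stages leaves deviation $O(\alpha/M_0)$; since $\bar\Gamma_{m,h}\succeq I$, the comparison $\Gamma_{m,h}\succeq\bar\Gamma_{m,h}-O(\alpha/M_0)I\succeq\tfrac12\bar\Gamma_{m,h}$ requires $\alpha/M_0$ to be below a constant, which holds because $M_0=\lceil\alpha^2 dH^2\rceil$ gives $\alpha/M_0\le 1/(\alpha dH^2)$, small whenever $\alpha=\Omega(1/(dH^2))$ --- true in the parameter regime of \pref{thm: linear MDP theorem} (there $\alpha=\Theta(\sqrt{dH^2/\beta})=\Omega(d)$), but worth stating explicitly since for very small $\alpha$ your comparison would degrade while the paper's argument would be unaffected and works for any $N_0\ge 1$. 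In exchange for this extra concentration bookkeeping, your route avoids Azuma on the value differences; both routes land on the same $\otil(\xi H\sqrt{dM_0})=\otil(d^{3/2}H^2\sqrt{M_0})$ bound with total failure probability $2\delta$.
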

\begin{proof}
    For any $t\in [(m-1)N_0+1, mN_0]$ and any $h$, 
    \begin{align*}
        &\Vht_m(x_{t,h}) - V_m^{\pi_m}(x_{t,h}) \\
        &= \max_a \Qht_m (x_{t,h}, a) - Q_m^{\pi_m}(x_{t,h}, a_{t,h})  \tag{$\pi_m$ is a deterministic policy} \\
        &= \Qht_m (x_{t,h}, a_{t,h}) - Q_m^{\pi_m}(x_{t,h}, a_{t,h}) \\
        &\leq \E_{x'\sim (x_{t,h}, a_{t,h})} \left[\Vht_m(x') - V_m^{\pi_m}(x')\right] + 2\xi\norm{\phi(x_{t,h},a_{t,h})}_{\Gamma_{m,h}^{-1}} \tag{\pref{lem:optimistic_policy_cover}}\\
        &= \Vht_m(x_{t,h+1}) - V_m^{\pi_m}(x_{t,h+1}) + e_{t,h} + 2\xi\norm{\phi(x_{t,h},a_{t,h})}_{\Gamma_{m,h}^{-1}}.  \tag{define $e_{t,h}$ to be the difference} 
    \end{align*}
    Thus, 
    \begin{align*}
        \Vht_m(x_0) - V_m^{\pi_m}(x_0) \leq \sum_{h} \left(2\xi\norm{\phi(x_{t,h},a_{t,h})}_{\Gamma_{m,h}^{-1}} + e_{t,h}\right).  
    \end{align*}
    Summing over $t$, and using the fact $V^*_m(x_0)\leq \Vht_m(x_0)$ (from \pref{lem:optimistic_policy_cover}), we get 
    \begin{align*}
        &\frac{1}{M_0} \sum_{m=1}^{M_0} \left(V^*_m(x_0) - V_m^{\pi_m}(x_0)\right) \\ 
        &\leq \frac{1}{M_0N_0} \sum_{t=1}^{M_0N_0} \sum_{h} \left(2\xi\norm{\phi(x_{t,h},a_{t,h})}_{\Gamma_{m,h}^{-1}} + e_{t,h}\right) \\
        &\leq \frac{2\xi}{\sqrt{M_0N_0}} \sum_h \sqrt{\sum_{t=1}^{M_0N_0} \|\phi(x_{t,h}, a_{t,h})\|^2_{\Gamma_{m,h}^{-1}}} + \frac{1}{M_0N_0} \sum_{t=1}^{M_0N_0} \sum_{h} e_{t,h}. \tag{Cauchy-Schwarz inequality}
        \end{align*}
        Further using the fact $\sum_{t=1}^{M_0N_0} \|\phi(x_{t,h}, a_{t,h})\|^2_{\Gamma_{m,h}^{-1}} = N_0\sum_{m=1}^{M_0} \inner{\Gamma_{m+1,h}- \Gamma_{m,h}, \Gamma_{m,h}^{-1}} = \otil\left(N_0 d\right)$ (see e.g.,~\citep[Lemma D.2]{jin2020provably}), we bound the first term above by $\otil\left(\xi H\sqrt{d/M_0}\right) = \otil\left(H^2\sqrt{d^3/M_0}\right)$.
        For the second term, note that $\sum_{t=1}^{M_0 N_0} e_{t,h}$ is the sum of a martingale difference sequence.
        By Azuma's inequality, the entire second term is thus of order $\otil\left(\frac{H^2\log(1/\delta)}{\sqrt{M_0N_0}}\right)$ with probability at least $1-\delta$.
        This finishes the proof.
\end{proof}

Finally, we are ready to show the guarantee of the returned policy cover.
Recall our definition of known state set:
\[
\calK = \left\{x \in X: \forall a\in A, \|\phi(x,a)\|_{(\widehat{\cov}_h^{\textcov})^{-1}}^2 \leq \alpha \text{ where $h$ is such that $x \in X_h$} \right\}.
\]

\begin{lemma}\label{lem: property of cover}
    For any $h = 0, \ldots, H-1$, with probability at least $1-4\delta$ (over the randomness in the first $T_0$ rounds), the covariance matrices $\widehat{\cov}_h^{\textcov}$ returned by \pref{alg: find policy cover} satisfies that for any policy $\pi$, 
    \[
    \Pr_{x_h \sim \pi}\left[x_h \notin \calK \right] \leq \otil\left(\frac{dH}{\alpha} \right).
    \]
    where $x_h \in X_h$ is sampled from executing $\pi$.
\end{lemma}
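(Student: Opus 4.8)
The plan is to show that whenever a policy $\pi$ reaches an \emph{unknown} state at layer $h$, it would necessarily earn the maximal fake reward in \emph{every} round $m$ of \pref{alg: find policy cover}, and then to bound the total earnable fake reward using the near-optimality of the exploratory policies $\pi_m$ together with an elliptical-potential argument.

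First I would exploit the monotonicity of $\Gamma_{m,h}$. Since the increments added to $\Gamma_{m,h}$ in \pref{alg: find policy cover} are PSD, we have $\Gamma_{m,h}\preceq\Gamma_{M_0+1,h}=M_0\,\widehat{\cov}_h^{\textcov}$ for every $m\le M_0$, hence $\Gamma_{m,h}^{-1}\succeq\frac{1}{M_0}(\widehat{\cov}_h^{\textcov})^{-1}$ and therefore $\|\phi(x,a)\|_{\Gamma_{m,h}^{-1}}^2\ge\frac{1}{M_0}\|\phi(x,a)\|_{(\widehat{\cov}_h^{\textcov})^{-1}}^2$ for all $(x,a)$. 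Thus if $x_h\notin\calK$, so that some action $a$ has $\|\phi(x_h,a)\|_{(\widehat{\cov}_h^{\textcov})^{-1}}^2>\alpha$, then $\|\phi(x_h,a)\|_{\Gamma_{m,h}^{-1}}^2>\alpha/M_0$ for every $m$, and by the definition of the ramp reward \pref{eq:fake_reward} this forces $\max_{a'}r_m(x_h,a')=1$. Consequently $\one[x_h\notin\calK]\le\max_{a'}r_m(x_h,a')$ pointwise for each $m$; taking expectations over $x_h\sim\pi$ and averaging over $m$ gives $\Pr_{x_h\sim\pi}[x_h\notin\calK]\le\frac{1}{M_0}\sum_{m=1}^{M_0}\E_{x_h\sim\pi}[\max_{a'}r_m(x_h,a')]$. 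I would then bound each term by $V_m^*(x_0)$: the policy that follows $\pi$ through layers $0,\dots,h-1$ and picks $\arg\max_{a'}r_m(x_h,a')$ at layer $h$ induces the same law of $x_h$ as $\pi$ and, as the fake rewards are non-negative, earns at least $\E_{x_h\sim\pi}[\max_{a'}r_m(x_h,a')]$, so optimality of $\pi_m^*$ yields the claim. This reduces everything to bounding $\frac{1}{M_0}\sum_m V_m^*(x_0)$. Writing $V_m^*(x_0)=(V_m^*(x_0)-V_m^{\pi_m}(x_0))+V_m^{\pi_m}(x_0)$, the first sum is $\otil(d^{3/2}H^2\sqrt{M_0})$ by \pref{lem:policy_cover_reg}, contributing $\otil(d^{3/2}H^2/\sqrt{M_0})$ after dividing by $M_0$.

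The crux will be bounding $\sum_m V_m^{\pi_m}(x_0)$ by an elliptical-potential argument. Using $r_m(x,a)\le\one[\|\phi(x,a)\|_{\Gamma_{m,h}^{-1}}^2\ge\alpha/M_0-\nicefrac1T]\le\frac{2M_0}{\alpha}\|\phi(x,a)\|_{\Gamma_{m,h}^{-1}}^2$ (valid once $T\ge 2M_0/\alpha$ so that the threshold exceeds $\alpha/(2M_0)$, and using $\Gamma_{m,h}\succeq I$ so that $\|\phi\|_{\Gamma_{m,h}^{-1}}^2\le1$), I would pass from $V_m^{\pi_m}(x_0)=\E_{\pi_m}[\sum_h r_m]$ to the $N_0$ on-policy trajectories collected in block $m$. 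A Freedman/Azuma concentration over the $M_0$ blocks (each block difference bounded by $H$) replaces $\sum_m V_m^{\pi_m}(x_0)$ by $\sum_m\frac{1}{N_0}\sum_{t\in\text{block }m}\sum_h r_m(x_{t,h},a_{t,h})$ up to an $\otil(H\sqrt{M_0\log(1/\delta)})$ deviation; this is where the large $N_0$ is convenient. The empirical per-layer averages are exactly $\frac{1}{N_0}\sum_{t\in\text{block }m}\|\phi(x_{t,h},a_{t,h})\|_{\Gamma_{m,h}^{-1}}^2=\inner{\Gamma_{m,h}^{-1},\ \Gamma_{m+1,h}-\Gamma_{m,h}}$, whose sum over $m$ telescopes to $\otil(d)$ via the standard log-det potential bound (every increment has operator norm at most $1$ and $\Gamma_{M_0+1,h}\preceq(M_0+1)I$). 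Summing over the $H$ layers gives $\sum_m V_m^{\pi_m}(x_0)=\otil(\frac{M_0 dH}{\alpha})+\otil(H\sqrt{M_0\log(1/\delta)})$, so the second piece contributes $\otil(\frac{dH}{\alpha})+\otil(H\sqrt{\log(1/\delta)/M_0})$.

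Finally I would plug in $M_0=\lceil\alpha^2 dH^2\rceil$: this turns the residual $\otil(d^{3/2}H^2/\sqrt{M_0})$ from the regret term into $\otil(\nicefrac{dH}{\alpha})$ and the deviation $\otil(H/\sqrt{M_0})$ into $\otil(\nicefrac1\alpha)$, so the advertised bound $\Pr_{x_h\sim\pi}[x_h\notin\calK]=\otil(\nicefrac{dH}{\alpha})$ follows. The probability budget $1-4\delta$ collects the $1-\delta$ optimism event of \pref{lem:optimistic_policy_cover}, the $1-2\delta$ event of \pref{lem:policy_cover_reg}, and the $1-\delta$ concentration event above. The main obstacle to get right is the concentration step bridging $V_m^{\pi_m}(x_0)$ and the empirical block rewards while keeping the deviation subdominant to $\nicefrac{dH}{\alpha}$, which is precisely what the choice of $M_0$ (and $N_0$) is calibrated to ensure.
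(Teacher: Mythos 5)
Your proposal is correct and follows essentially the same route as the paper's proof: reduce the unknown-state probability to the fake rewards via monotonicity of $\Gamma_{m,h}$ and $\widehat{\cov}_h^{\textcov} = \frac{1}{M_0}\Gamma_{M_0+1,h}$, pass to $V_m^*$ through an auxiliary policy and non-negativity of rewards, invoke \pref{lem:policy_cover_reg}, apply Azuma to reach the empirical block sums, bound the ramp reward by $\order\bigl(\frac{M_0}{\alpha}\bigr)\|\phi\|_{\Gamma_{m,h}^{-1}}^2$, and finish with the elliptical-potential bound and the choice $M_0 = \lceil \alpha^2 d H^2\rceil$. The only differences (per-$m$ auxiliary policies via $\max_{a'} r_m$ rather than a single $\pi'$, block-level rather than trajectory-level Azuma, and a factor-$2$ variant of the ramp bound requiring $T \geq 2M_0/\alpha$) are cosmetic and do not affect the final bound.
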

\begin{proof}
    We define an auxiliary policy $\pi'$ which only differs from $\pi$ for unknown states in layer $h$. Specifically, for $x\in X_h$ not in $\calK$, let $a$ be such that $\|\phi(x,a)\|_{(\widehat{\cov}_h^{\textcov})^{-1}}^2 \geq \alpha$ (which must exist by the definition of $\calK$),
    then $\pi'(a'|x) = \ind[a'=a]$ for all $a' \in A$.
    By doing so, we have
    \begin{align*}
        & \Pr_{x_h \sim \pi}\left[x_h \notin \calK  \right] \\
        &= \Pr_{(x_h, a) \sim \pi'}\left[\|\phi(x_h,a)\|_{(\widehat{\cov}_h^{\textcov})^{-1}}^2 \geq \alpha  \right] \\
        &= \Pr_{(x_h,a) \sim \pi'}\left[\|\phi(x_h,a)\|_{\Gamma_{M_0+1, h}^{-1}}^2 \geq \frac{\alpha}{M_0} \right] \\
        &\leq \frac{1}{M_0} \sum_{m=1}^{M_0}\Pr_{(x_h,a) \sim \pi'}\left[\|\phi(x_h,a)\|_{\Gamma_{m, h}^{-1}}^2 \geq \frac{\alpha}{M_0} \right] \tag{$\Gamma_{m, h} \preceq  \Gamma_{M_0+1, h}$} \\
        &\leq \frac{1}{M_0}
        \sum_{m=1}^{M_0}\E_{(x_h,a)\sim \pi'}\left[ \ramp_{\frac{1}{T}}\left(\|\phi(x,a)\|_{\Gamma_{m, h}^{-1}}^2 - \frac{\alpha}{M_0} \right)\right] \tag{$\one[y\geq 0]\leq \ramp_z(y)$} \\
        &\leq \frac{1}{M_0} \sum_{m=1}^{M_0} V_m^{\pi'}(x_0)  \tag{rewards $r_m(\cdot, \cdot)$ are non-negative} \\
        &\leq \frac{1}{M_0} \sum_{m=1}^{M_0}V_m^{\pi_m}(x_0) + \frac{1}{M_0}\times \otil\left(d^{3/2}H^2\sqrt{M_0}\right) \tag{\pref{lem:policy_cover_reg}}\\
        &\leq \frac{1}{M_0N_0}\sum_{t=1}^{M_0N_0} \sum_{h=0}^{H-1} r_{m}(x_{t,h}, a_{t,h}) + \otil\left(\frac{H}{\sqrt{M_0 N_0}}\right) + \otil\left(\frac{d^{3/2}H^2}{\sqrt{M_0}}\right)   \tag{by Azuma's inequality}\\
        &\leq \frac{1}{M_0N_0}\times \frac{1}{\frac{\alpha}{M_0}} \sum_{t=1}^{M_0N_0}\sum_{h=0}^{H-1}\|\phi(x_{t,h},a_{t,h})\|_{\Gamma_{m, h}^{-1}}^2 +  \otil\left( \frac{d^{3/2}H^2}{\sqrt{M_0}} \right) \tag{$\ramp_z(y-y')\leq \frac{y}{y'}$ for $y>0,y'>z>0$}\\
        &\leq \frac{1}{M_0N_0}\times \frac{1}{\frac{\alpha}{M_0}} \times \otil\left(N_0 dH\right) + \otil\left(\frac{d^{3/2}H^2}{\sqrt{M_0}}\right)  \tag{same calculation as done in the proof of \pref{lem:policy_cover_reg}} \\
        &\leq \otil\left(\frac{dH}{\alpha}  + \frac{d^{3/2}H^2}{\sqrt{M_0}}\right). 
    \end{align*}
    Finally, using the definition of $M_0$ finishes the proof. 
\end{proof}

\section{Details Omitted in \pref{sec: linear Q}}\label{app: linear Q appendix}

In this section, we analyze \pref{alg: linear Q} and prove \pref{thm: linear Q theorem}. 
In the analysis, we require that $\pi_t(a|x)$ and $\bonusQ_t(x,a)$ are defined for all $x, a, t$, but in \pref{alg: linear Q},  they are only explicitly defined if the learner has ever visited state $x$. Below, we construct a virtual process that is equivalent to \pref{alg: linear Q}, but with all $\pi_t(a|x)$ and $\bonusQ_t(x,a)$ well-defined. 

Imagine a virtual process where at the end of episode $t$ (the moment when $\hatcov_t$ has been defined), $\Bonus(t,x,a)$ is called once for every $(x,a)$, in an order from layer $H-1$ to layer $0$. Observe that within $\Bonus(t,x,a)$, other $\Bonus(t', x', a')$ might be called, but either $t'<t$, or $x'$ is in a later layer. Therefore, in this virtual process, every recursive call will soon be returned in the third line of \pref{alg: generating B samples} because they have been called previously and the values of them are already determined. Given that $\Bonus(t,x,a)$ are all called once, at the beginning of episode $t+1$, $\pi_{t+1}$ will be well-defined for all states since it only depends on $\Bonus(t',x',a')$ with $t'\leq t$ and other quantities that are well-defined before episode $t+1$. 

Comparing the virtual process and the real process, we see that the virtual process calculates all entries of $\Bonus(t,x,a)$, while the real process only calculates a subset of them that are necessary for constructing $\pi_t$ and $\hatcov_t$. However, they define exactly the same policies as long as the random seeds we use for each entry of $\Bonus(t,x,a)$ are the same for both processes. Therefore, we can define $\bonusQ_t(x,a)$ unambiguously as the value returned by $\Bonus(t,x,a)$ in the virtual process, and $\pi_t(a|x)$ as shown in \eqref{eq: linear Q policy} with $\Bonus(\tau,x,a)$ replaced by $\bonusQ_\tau(x,a)$.


Now, we follow the exactly same regret decomposition as described in \pref{sec: tabular}, with the new definition of $\Qht_t(x,a)\triangleq \phi(x,a)^\top \hattheta_{t,h}$ (for $x \in X_h$) and $\bonusQ_t(x,a)$ described above:
\begin{align}
	&\sum_{t=1}^T\sum_{h=0}^{H-1}\E_{x_h\sim\pistar}\left[\inner{\pi_t(\cdot|x_h)-\pistar(\cdot|x_h), \Q_t^{\pi_t}(x_h,\cdot) -B_t(x_h,\cdot)}\right] \nonumber\\
	&= \underbrace{\sum_{t=1}^T\sum_{h=0}^{H-1}\E_{x_h\sim\pistar}\left[\inner{\pi_t(\cdot|x_h), \Q_t^{\pi_t}(x_h,\cdot)-\Qht_t(x_h,\cdot)}\right]}_{\bias}+\underbrace{\sum_{t=1}^T\sum_{h=0}^{H-1}\E_{x_h\sim\pistar}\left[\inner{\pistar(\cdot|x_h), \Qht_t(x_h,\cdot)-\Q_t^{\pi_t}(x_h,\cdot)}\right]}_{\biastwo}\nonumber\\
	&\quad
	 + \underbrace{\sum_{t=1}^T\sum_{h=0}^{H-1}\E_{x_h\sim\pistar}\left[\inner{\pi_t(\cdot|x_h)-\pistar(\cdot|x_h), \Qht_t(x_h,\cdot)-B_t(x_h,\cdot)}\right]}_{\regterm}. \nonumber
\end{align}

We then bound $\E[\bias + \biastwo]$ and $\E[\regterm]$ in \pref{lem: linear Q bias term} and \pref{lem: linear Q regret term} respectively. \\


\begin{lemma}
\label{lem: linear Q bias term}
     If $\beta \leq H$, then $\E[\bias + \biastwo]$ is upper bounded by
     \begin{align*}
         &\frac{\beta}{4} \E\left[\sum_{t=1}^T \sum_{h=0}^{H-1} \E_{x_h\sim\pistar}\left[ \sum_a  \big(\pi_t(a|x_h)+\pistar(a|x_h)\big) \|\phi(x_h,a)\|_{\hatcov_{t,h}}^2\right]\right]  +  \order\left(\frac{\gamma dH^3 T}{\beta} + \epsilon H^2 T\right). 
     \end{align*}
\end{lemma}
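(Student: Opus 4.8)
The plan is to analyze the \emph{conditional} bias of the estimator $\hattheta_{t,h}$ and then balance the $\gamma$-regularization bias against the quadratic bonus via an AM-GM step. First I would compute $\E_t[\hattheta_{t,h}]$. Since the trajectory $(x_{t,h},a_{t,h},L_{t,h})$ used in \pref{eq: theta estimator in linear Q} is generated in Step~1, while $\hatcov_{t,h}$ is built from the \emph{separate} batch of simulator trajectories collected in Step~2, the two are conditionally independent given $\calF_t$. Using $\E_t[\phi(x_{t,h},a_{t,h})L_{t,h}] = \E_t[\phi(x_{t,h},a_{t,h})\phi(x_{t,h},a_{t,h})^\top]\theta^{\pi_t}_{t,h} = \cov_{t,h}\theta^{\pi_t}_{t,h}$ (because $\E_t[L_{t,h}\mid x_{t,h},a_{t,h}] = Q^{\pi_t}_t(x_{t,h},a_{t,h}) = \phi(x_{t,h},a_{t,h})^\top\theta^{\pi_t}_{t,h}$), this gives $\E_t[\hattheta_{t,h}] = \E_t[\hatcov_{t,h}]\,\cov_{t,h}\,\theta^{\pi_t}_{t,h}$.

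Next I would isolate the two sources of bias. Invoking \pref{eq: GE 5} to replace $\E_t[\hatcov_{t,h}]$ by $(\gamma I + \cov_{t,h})^{-1}$ up to spectral error $\epsilon$, together with the identity $(\gamma I + \cov_{t,h})^{-1}\cov_{t,h} = I - \gamma(\gamma I + \cov_{t,h})^{-1}$, I obtain
\[
\theta^{\pi_t}_{t,h} - \E_t[\hattheta_{t,h}] = \gamma(\gamma I + \cov_{t,h})^{-1}\theta^{\pi_t}_{t,h} + v_{t,h},
\]
where $v_{t,h} = \big((\gamma I + \cov_{t,h})^{-1} - \E_t[\hatcov_{t,h}]\big)\cov_{t,h}\theta^{\pi_t}_{t,h}$. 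The error $v_{t,h}$ is controlled by observing $\norm{\cov_{t,h}\theta^{\pi_t}_{t,h}} = \norm{\E_t[\phi(x_{t,h},a_{t,h})Q^{\pi_t}_t(x_{t,h},a_{t,h})]}\le H$ (since $\norm{\phi}\le 1$ and $Q^{\pi_t}_t\le H$), so that $|\phi(x,a)^\top v_{t,h}|\le \epsilon H$ for every $(x,a)$; this is exactly what yields the $\epsilon H^2 T$ term after summing over the $H$ layers and $T$ episodes.

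For the main term, since $\E_t[Q^{\pi_t}_t(x,a) - \Qht_t(x,a)] = \phi(x,a)^\top(\theta^{\pi_t}_{t,h} - \E_t[\hattheta_{t,h}])$, I would bound $|\gamma\,\phi(x,a)^\top(\gamma I + \cov_{t,h})^{-1}\theta^{\pi_t}_{t,h}|$ by Cauchy-Schwarz in the $(\gamma I + \cov_{t,h})^{-1}$ norm and then by AM-GM. Using $\norm{\theta^{\pi_t}_{t,h}}_{(\gamma I + \cov_{t,h})^{-1}}^2 \le \tfrac1\gamma\norm{\theta^{\pi_t}_{t,h}}^2 \le \tfrac{dH^2}{\gamma}$, this term is at most $H\sqrt{\gamma d}\,\norm{\phi(x,a)}_{(\gamma I + \cov_{t,h})^{-1}} \le \tfrac{\beta}{4}\norm{\phi(x,a)}_{(\gamma I + \cov_{t,h})^{-1}}^2 + \tfrac{\gamma dH^2}{\beta}$. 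Finally I would convert the deterministic norm $\norm{\phi}_{(\gamma I + \cov_{t,h})^{-1}}^2$ back into $\E_t[\norm{\phi}_{\hatcov_{t,h}}^2]$ using \pref{eq: GE 5} again (at the cost of an extra $\epsilon$ that is absorbed into the $\epsilon H^2 T$ term, using the hypothesis $\beta\le H$). The identical chain applies to $\biastwo$, where the sign flips but the absolute-value bound is unchanged, so that the $\pi_t$-weighting from $\bias$ and the $\pistar$-weighting from $\biastwo$ combine into the factor $\pi_t(a|x_h)+\pistar(a|x_h)$ in the statement; summing over $a$, $h$, $t$ and taking total expectation completes the proof.

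The main obstacle I anticipate is the bookkeeping around the $\gamma$-regularization bias: one must cleanly separate the irreducible bias $\gamma(\gamma I + \cov_{t,h})^{-1}\theta^{\pi_t}_{t,h}$ from the estimation error of $\hatcov_{t,h}$, and crucially bound the latter through $\norm{\cov_{t,h}\theta^{\pi_t}_{t,h}}\le H$ rather than the looser $\norm{\theta^{\pi_t}_{t,h}}\le \sqrt{d}H$, since the cruder estimate would inflate the $\epsilon$-term by a spurious $\sqrt d$ factor and miss the stated bound. The AM-GM balancing itself is routine once the target quadratic form $\norm{\phi}_{\hatcov_{t,h}}^2$ with coefficient $\beta/4$ has been identified.
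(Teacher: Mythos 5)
Your proposal is correct and follows essentially the same route as the paper's proof: both exploit the conditional independence of $\hatcov_{t,h}$ (built from simulator data) and the episode trajectory to write $\E_t[\hattheta_{t,h}]=\E_t[\hatcov_{t,h}]\cov_{t,h}\theta^{\pi_t}_{t,h}$, swap $\E_t[\hatcov_{t,h}]$ for $(\gamma I+\cov_{t,h})^{-1}$ via \pref{eq: GE 5} at cost $\order(\epsilon H)$ (the paper likewise bounds this through $\|\E_t[\phi L_{t,h}]\|\le H$, i.e.\ your $\|\cov_{t,h}\theta^{\pi_t}_{t,h}\|\le H$ observation), isolate the regularization bias $\gamma(\gamma I+\cov_{t,h})^{-1}\theta^{\pi_t}_{t,h}$, and balance it by Cauchy--Schwarz/AM-GM against $\tfrac{\beta}{4}\|\phi\|^2_{(\gamma I+\cov_{t,h})^{-1}}$ before converting back to the $\hatcov_{t,h}$-norm. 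The only differences are presentational (you bound $\|\theta^{\pi_t}_{t,h}\|_{(\gamma I+\cov_{t,h})^{-1}}$ before applying AM-GM rather than after, and package the $\epsilon$-errors into an explicit vector $v_{t,h}$), so no gap exists.
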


\begin{proof}[Proof of \pref{lem: linear Q bias term}]
    Consider a specific $(t,x,a)$. Let $h$ be such that $x\in X_{h}$. Then we proceed as
    \begin{align}
        &\E_t\left[Q^{\pi_t}_t(x,a) - \Qht_t(x,a)\right]  \nonumber  \\
        &=\phi(x,a)^\top \left(\theta_{t,h}^{\pi_t} - \E_t\left[\hattheta_{t,h}\right]\right) \nonumber \\ 
        &= \phi(x,a)^\top \left(\theta_{t,h}^{\pi_t} - \E_t\left[\hatcov_{t,h}\right]\E_t\left[ \phi(x_{t,h},a_{t,h})L_{t,h} \right]\right) \nonumber \tag{definition of $\hattheta_{t,h}$}\\
        &=  \phi(x,a)^\top \left(\theta_{t,h}^{\pi_t} - \left(\gamma I + \cov_{t,h}\right)^{-1}\E_t\left[  \phi(x_{t,h},a_{t,h})L_{t,h} \right]\right)  + \order(\epsilon H)  \tag{by \pref{eq: GE 5} of \pref{lem: GR lemma contingency} and that $\|\phi(x,a)\|\leq 1$ for all $x,a$ and $L_{t,h}\leq H$} \\
        &= \phi(x,a)^\top \left(\theta_{t,h}^{\pi_t} - \left(\gamma I + \cov_{t,h}\right)^{-1}\cov_{t,h} \theta_{t,h}^{\pi_t}\right) + \order(\epsilon H) \tag{$\E[L_{t,h}]=\phi(x_{t,h}, a_{t,h})^\top \theta^{\pi_t}_{t,h}$} \\
        &= \gamma  \phi(x,a)^\top (\gamma I + \cov_{t,h})^{-1} \theta_{t,h}^{\pi_t}+ \order(\epsilon H)\nonumber \tag{$\theta_{t,h}^{\pi_t} =   \left(\gamma I + \cov_{t,h}\right)^{-1}\left(\gamma I + \cov_{t,h}\right)\theta_{t,h}^{\pi_t}$} \\
        &\leq  \gamma\|\phi(x,a)\|_{(\gamma I + \cov_{t,h})^{-1}}^2    \|\theta_{t,h}^{\pi_t}\|_{(\gamma I + \cov_{t,h})^{-1}}^2 + \order(\epsilon H) \nonumber \tag{Cauchy-Schwarz inequality} \\
        &\leq \frac{\beta}{4}  \|\phi(x,a)\|_{(\gamma I + \cov_{t,h})^{-1}}^2  + \frac{\gamma^2}{\beta}  \|\theta_{t,h}^{\pi_t}\|_{(\gamma I + \cov_{t,h})^{-1}}^2 + \order(\epsilon H) \nonumber \tag{AM-GM inequality}\\
        &\leq \frac{\beta}{4} \E_t\left[  \|\phi(x,a)\|_{\hatcov_{t,h}}^2\right] + \frac{\gamma dH^2}{\beta} + \order\left(\epsilon (H+\beta)\right)    \label{eq: long proff 2}
    \end{align}
    where in the last inequality we use \pref{eq: GE 5} again and also $\|\theta^{\pi}_{t,h}\|^2 \leq dH^2$ according to \pref{assum: linear Q assumption}. 
    Taking expectation over $x$ and summing over $t,a$ with weights $\pi_t(a|x)$, we get  
    \begin{align*}
        \E[\bias] \leq \frac{\beta}{4} \E\left[\sum_{t=1}^T \sum_{h=0}^{H-1}\E_{x_h \sim \pistar}\left[ \sum_a \pi_t(a|x_h)  \|\phi(x_h,a)\|_{\hatcov_{t,h}}^2\right]\right]  +  \order\left(\frac{\gamma dH^3T}{\beta} + \epsilon H^2T\right).  \tag{using $\beta\leq H$} 
    \end{align*}
    By the same argument, we can show that $\E_t[\Qht_t(x,a) - Q_t^{\pi_t}(x,a)]$ is also upper bounded by the right-hand side of \pref{eq: long proff 2}, 
    and thus
    \begin{align*}
        \E[\biastwo] \leq \frac{\beta}{4} \E\left[\sum_{t=1}^T \sum_{h=0}^{H-1} \E_{x_h \sim \pistar}\left[ \sum_a \pistar(a|x_h)  \|\phi(x_h,a)\|_{\hatcov_{t,h}}^2\right]\right]  +  \order\left(\frac{\gamma dH^3T}{\beta} + \epsilon H^2T \right).
    \end{align*}

    Summing them up finishes the proof. 
\end{proof}

\begin{lemma}
\label{lem: linear Q regret term}
If $\eta \beta \leq \frac{\gamma}{12H^2}$ and  $\eta\leq \frac{\gamma}{2H}$, then $\E[\regterm]$ is upper bounded by
\begin{align*} 
    &\frac{H\ln |A|}{\eta} + 2\eta H^2\E\left[\sum_{t=1}^T \sum_{h=0}^{H-1}\E_{x_h \sim \pistar}\left[ \sum_a \pi_t(a|x_h)  \|\phi(x_h,a)\|_{\hatcov_{t,h}}^2\right]\right] 
    \\
    &\qquad \qquad + \frac{1}{H}\E\left[\sum_{t=1}^T \sum_{h=0}^{H-1} \E_{x_h \sim \pistar}\left[ \sum_a \pi_t(a|x_h) \bonusQ_t(x,a)\right] \right] + \order\left(\eta\epsilon H^3 T + \frac{\eta H^3}{\gamma^2T^2}\right).  
\end{align*}
\end{lemma}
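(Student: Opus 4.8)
The plan is to exploit that, for each fixed state $x\in X_h$, the policy $\pi_t(\cdot|x)$ in \pref{eq: linear Q policy} is precisely the exponential weight update run on the loss vectors $\Qht_\tau(x,\cdot)-\bonusQ_\tau(x,\cdot)$, so that \pref{lem: exponential weight lemma} applies state-by-state. First I would verify the step-size condition $\eta|\Qht_t(x,a)-\bonusQ_t(x,a)|\le 1$ demanded by that lemma, and this is exactly where the two parameter assumptions enter. Since $\Qht_t(x,a)=\phi(x,a)^\top\hatcov_{t,h}\phi(x_{t,h},a_{t,h})L_{t,h}$ with $\|\phi\|\le1$, $L_{t,h}\le H$, and $\|\hatcov_{t,h}\|_{\text{op}}\le 1/\gamma$ by \pref{eq: GE 3}, we get $|\Qht_t(x,a)|\le H/\gamma$, hence $\eta|\Qht_t(x,a)|\le \eta H/\gamma\le\tfrac12$ under $\eta\le\gamma/(2H)$. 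Likewise $\bonus_t(x,a)\le 2\beta/\gamma$, so the dilated sum obeys $\bonusQ_t(x,a)\le H(1+\tfrac1H)^H\cdot 2\beta/\gamma\le 6H\beta/\gamma$, and $\eta\beta\le\gamma/(12H^2)$ gives $\eta\bonusQ_t(x,a)\le\tfrac1{2H}\le\tfrac12$. Together these yield the required $\eta|\Qht_t-\bonusQ_t|\le 1$, and both bounds are reused below.

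Applying \pref{lem: exponential weight lemma} at every $x$ against the comparator $\pistar(\cdot|x)$, then taking the fixed probability measure $\E_{x_h\sim\pistar}$ (so the leading term integrates to $\frac{\ln|A|}{\eta}$) and summing over the $H$ layers, I obtain
\[
\regterm\le \frac{H\ln|A|}{\eta}+\eta\sum_{t=1}^T\sum_{h=0}^{H-1}\E_{x_h\sim\pistar}\Big[\sum_a\pi_t(a|x_h)\big(\Qht_t(x_h,a)-\bonusQ_t(x_h,a)\big)^2\Big].
\]
Using $(\Qht_t-\bonusQ_t)^2\le 2\Qht_t^2+2\bonusQ_t^2$ splits the stability term into a $\Qht$-part and a bonus-part. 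For the bonus-part, the pointwise bound $\eta\bonusQ_t(x,a)\le\frac1{2H}$ established above gives $2\eta\bonusQ_t(x,a)^2\le\frac1H\bonusQ_t(x,a)$, which is exactly the third term $\frac1H\E[\sum_{t,h}\E_{x_h\sim\pistar}[\sum_a\pi_t(a|x_h)\bonusQ_t(x,a)]]$ in the claim.

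The crux is the $\Qht$-part, and it is the step I expect to be the main obstacle because it mixes conditional-independence bookkeeping with a PSD manipulation. Write $\psi_t=\phi(x_{t,h},a_{t,h})$ and recall that $\hatcov_{t,h}$ is built from the Step-2 simulator trajectories, hence conditionally independent of $(x_{t,h},a_{t,h},L_{t,h})$ given $\calF_t$. Then $\Qht_t(x,a)^2\le H^2(\phi(x,a)^\top\hatcov_{t,h}\psi_t)^2$, and conditioning on $\hatcov_{t,h}$ and using $\E_t[\psi_t\psi_t^\top\mid\hatcov_{t,h}]=\cov_{t,h}$ gives $\E_t[(\phi^\top\hatcov_{t,h}\psi_t)^2\mid\hatcov_{t,h}]=\phi^\top\hatcov_{t,h}\cov_{t,h}\hatcov_{t,h}\phi$. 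Now I would invoke \pref{eq: bounded norm prod}: since $\hatcov_{t,h}^{1/2}\cov_{t,h}\hatcov_{t,h}^{1/2}$ is symmetric with the same spectrum as $\cov_{t,h}\hatcov_{t,h}$, its operator norm is at most $\|\hatcov_{t,h}\cov_{t,h}\|_{\text{op}}\le 1+2\epsilon$, whence $\hatcov_{t,h}\cov_{t,h}\hatcov_{t,h}\preceq(1+2\epsilon)\hatcov_{t,h}$ and $\E_t[(\phi^\top\hatcov_{t,h}\psi_t)^2\mid\hatcov_{t,h}]\le(1+2\epsilon)\|\phi\|_{\hatcov_{t,h}}^2$. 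On the complementary event where \pref{eq: bounded norm prod} fails (probability $\le 1/T^3$), I bound crudely by $\|\hatcov_{t,h}\|_{\text{op}}^2\le 1/\gamma^2$ via \pref{eq: GE 3}. Combining, $\E_t[\Qht_t(x,a)^2]\le (1+2\epsilon)H^2\,\E_t[\|\phi(x,a)\|_{\hatcov_{t,h}}^2]+H^2/(\gamma^2T^3)$.

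Summing this against $2\eta\sum_{t,h}\E_{x_h\sim\pistar}[\sum_a\pi_t(a|x_h)\,\cdot\,]$ produces the desired main term $2\eta H^2\E[\sum_{t,h}\E_{x_h\sim\pistar}[\sum_a\pi_t(a|x_h)\|\phi(x_h,a)\|_{\hatcov_{t,h}}^2]]$, while the two residuals are collected into the stated error: the $2\epsilon$ correction contributes $\order(\eta\epsilon H^3T)$ and the bad-event term contributes $2\eta\cdot TH\cdot H^2/(\gamma^2T^3)=\order(\eta H^3/(\gamma^2T^2))$ (and $\epsilon$ may be taken arbitrarily small since the regret does not depend on $M$). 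The only delicate accounting is ensuring the conditional independence is applied cleanly and that the failure event is handled before taking $\E_t$; the eigenvalue identity for $\hatcov_{t,h}\cov_{t,h}\hatcov_{t,h}$ together with the \GR guarantees of \pref{lem: GR lemma contingency} does the rest.
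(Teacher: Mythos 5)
Your overall skeleton matches the paper's proof: the parameter checks, the per-state application of \pref{lem: exponential weight lemma}, the split $(\Qht_t-\bonusQ_t)^2\le 2\Qht_t^2+2\bonusQ_t^2$, the self-bounding step $2\eta\bonusQ_t^2\le\frac{1}{H}\bonusQ_t$, and the error accounting are all essentially identical to the paper's argument, and the stated error terms come out correctly. The gap is in your treatment of the crux term $\E_t[\Qht_t(x,a)^2]$. You write $\E_t[(\phi^\top\hatcov_{t,h}\psi_t)^2\mid\hatcov_{t,h}]=\phi^\top\hatcov_{t,h}\cov_{t,h}\hatcov_{t,h}\phi$ and then dominate $\hatcov_{t,h}\cov_{t,h}\hatcov_{t,h}\preceq(1+2\epsilon)\hatcov_{t,h}$ by passing to $\hatcov_{t,h}^{1/2}\cov_{t,h}\hatcov_{t,h}^{1/2}$ and a spectrum/similarity argument. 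This presumes $\hatcov_{t,h}$ is symmetric positive semidefinite, but the output of \GR (\pref{alg: GR}) is $\hatcov_h=\frac{1}{M}\sum_m\bigl(cI+c\sum_n Z_{n,h}\bigr)$ with $Z_{n,h}=\Pi_{j=1}^{n}(I-cY_{j,h})$, a product of distinct, generally non-commuting symmetric matrices; such a product is not symmetric (only its expectation $(I-c(\gamma I+\cov_{t,h}))^n$ is). Consequently $\hatcov_{t,h}^{1/2}$ is not defined, the ordering $\hatcov_{t,h}\cov_{t,h}\hatcov_{t,h}\preceq(1+2\epsilon)\hatcov_{t,h}$ does not even typecheck as a PSD statement, and in fact the correct conditional second moment is $\phi^\top\hatcov_{t,h}\cov_{t,h}\hatcov_{t,h}^{\top}\phi$ (the transpose matters precisely because $\hatcov_{t,h}$ is not symmetric).

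The step can be repaired, and the repair is exactly what the paper does: instead of a one-shot PSD domination, use the operator-norm guarantees of \pref{lem: GR lemma contingency}, which are insensitive to transposition. On the good event of \pref{eq: GE 4}, replace one factor $\hatcov_{t,h}^{\top}$ by the \emph{symmetric} matrix $(\gamma I+\cov_{t,h})^{-1}$ at additive cost $\order(\epsilon)$ (using \pref{eq: bounded norm prod} to control $\norm{\hatcov_{t,h}\cov_{t,h}}_{\text{op}}$), then take $\E_t$ and use \pref{eq: GE 5} to replace $\E_t[\hatcov_{t,h}]$ by $(\gamma I+\cov_{t,h})^{-1}$, apply the genuine PSD fact $(\gamma I+\cov_{t,h})^{-1}\cov_{t,h}(\gamma I+\cov_{t,h})^{-1}\preceq(\gamma I+\cov_{t,h})^{-1}$, and finally convert back to $\E_t[\|\phi(x,a)\|_{\hatcov_{t,h}}^2]$ via \pref{eq: GE 5} once more; the bad event is handled with the crude bound $\norm{\hatcov_{t,h}}_{\text{op}}^2\le 1/\gamma^2$ from \pref{eq: GE 3}, just as you propose. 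With that substitution your proof goes through and yields the same constants and error terms as the lemma.
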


\begin{proof}[Proof of \pref{lem: linear Q regret term}]
    Again, we will apply the regret bound of the exponential weight algorithm \pref{lem: exponential weight lemma} to each state.  We start by checking the required condition: $\eta |\phi(x,a)^\top \hattheta_{\tau,h} - \bonusQ_t(x,a)|\leq 1$. This can be seen by that
    \begin{align*}
        \eta \left|\phi(x,a)^\top \hattheta_{\tau, h}\right| 
        &= \eta \left| \phi(x,a)^\top \hatcov_{t,h} \phi(x_{t,h},a_{t,h})L_{t,h} \right| \\
        &\leq \eta \times \norm{\hatcov_{t,h}}_{\text{op}}\times L_{t,h} 
        \leq \frac{\eta H}{\gamma}
        \leq \frac{1}{2},  \tag{\pref{eq: GE 3} and the condition $\eta\leq \frac{\gamma}{2H}$}
    \end{align*}
    and that by the definition of $\Bonus(t,x,a)$, we have 
    \begin{align}
        \eta\bonusQ_t(x,a) \leq \eta\times H\left(1+\frac{1}{H}\right)^H \times  2\beta\sup_{x,a,h} \|\phi(x,a)\|_{\hatcov_{t,h}}^2 
        \leq \frac{6\eta\beta H}{\gamma}
        \leq \frac{1}{2H},\label{eq: eta B linear Q} 
    \end{align}
    where the last inequality is by \pref{eq: GE 3} again and the condition $\eta \beta \leq \frac{\gamma}{12H^2}$.

    Thus, by \pref{lem: exponential weight lemma}, we have for any $x$, 
    \begin{align}
        &\E\left[\sum_{t=1}^T \sum_a \left(\pi_t(a|x) - \pistar(a|x)\right)\Qht_t(x,a)\right] \nonumber \\ 
        &\leq \frac{\ln |A|}{\eta } + 2\eta \E\left[\sum_{t=1}^T \sum_a  \pi_t(a|x)\Qht_t(x,a)^2\right] + 2\eta \E\left[\sum_{t=1}^T \sum_a  \pi_t(a|x)\bonusQ_t(x,a)^2\right].  \label{eq: regret Q tmp}
    \end{align}
    The last term in \pref{eq: regret Q tmp} can be upper bounded by $ \E\left[\frac{1}{H}\sum_{t=1}^T \sum_a  \pi_t(a|x)\bonusQ_t(x,a)\right]$ because  $\eta \bonusQ_t(x,a)\leq  \frac{1}{2H}$ as we verified in \pref{eq: eta B linear Q}. To bound the second term in \pref{eq: regret Q tmp}, we use the following: for $(x,a)\in X_h\times A$, 
    \begin{align*}
        \E_t\left[\Qht_t(x,a)^2\right]
        &\leq H^2\E_t\left[\phi(x,a)^\top \hatcov_{t,h}\phi(x_{t,h}, a_{t,h})\phi(x_{t,h}, a_{t,h})^\top \hatcov_{t,h} \phi(x,a)\right]\\
        &= H^2\E_t\left[\phi(x,a)^\top \hatcov_{t,h}\cov_{t,h} \hatcov_{t,h} \phi(x,a)\right] \\ 
        &\leq H^2\E_t\left[\phi(x,a)^\top \hatcov_{t,h}\cov_{t,h} \left(\gamma I + \cov_{t,h}\right)^{-1}\phi(x,a)\right] + \order\left(\epsilon H^2 + \frac{H^2}{\gamma^2 T^3}\right)   \tag{$*$}\\
        &\leq H^2\phi(x,a)^\top \left(\gamma I + \cov_{t,h}\right)^{-1}\cov_{t,h} \left(\gamma I + \cov_{t,h}\right)^{-1}\phi(x,a) + \order\left(\epsilon H^2 + \frac{H^2}{\gamma^2 T^3}\right)  \tag{by \pref{eq: GE 5}} \\
        &\leq H^2\phi(x,a)^\top \left(\gamma I + \cov_{t,h}\right)^{-1} \phi(x,a) + \order\left(\epsilon H^2+ \frac{H^2}{\gamma^2 T^3}\right)\\ 
        &\leq H^2\E_t\left[\phi(x,a)^\top \hatcov_{t,h} \phi(x,a)\right] + \order\left(\epsilon H^2 + \frac{H^2}{\gamma^2 T^3}\right) \tag{by \pref{eq: GE 5} again} \\
        &= H^2\E_t\left[\|\phi(x,a)\|_{\hatcov_{t,h}}^2 \right] + \order\left(\epsilon H^2 + \frac{H^2}{\gamma^2 T^3}\right)
    \end{align*}
    where $(*)$ is because by \pref{eq: GE 4} and \pref{eq: bounded norm prod}, $\|(\gamma I+\cov_{t,h})^{-1}-\hatcov_{t,h}\|_{\text{op}}\leq 2\epsilon$ and $\|\hatcov_{t,h}\cov_{t,h}\|_{\text{op}}\leq 1+2\epsilon$ hold with probability $1-\frac{1}{T^3}$; for the remaining probability, we upper bound $H^2\phi(x,a)^\top \hatcov_{t,h}\cov_{t,h}\hatcov_{t,h}\phi(x,a)$ by $\frac{H^2}{\gamma^2}$. 
    Combining them with \pref{eq: regret Q tmp} and taking expectation over states finishes the proof. 
\end{proof}

With \pref{lem: linear Q bias term} and \pref{lem: linear Q regret term}, we can now prove \pref{thm: linear Q theorem}. 

\begin{proof}[Proof of \pref{thm: linear Q theorem}]

Combining \pref{lem: linear Q bias term} and \pref{lem: linear Q regret term}, we get (under the required conditions of the parameters):
\begin{align*}
    &\E\left[\bias + \biastwo + \regterm\right] \\
    &\leq \order\left(\frac{H\ln |A|}{\eta} + \frac{\gamma d H^3 T}{\beta} + \epsilon H^2 T + \eta\epsilon H^3 T + \frac{\eta H^3}{\gamma^2T^2}\right) \\
    &\qquad \qquad + \left(2\eta H^2 + \frac{\beta}{4}\right)\E\left[\sum_{t=1}^T \sum_{h=0}^{H-1}\E_{x_h\sim\pistar}\left[\sum_a \Big(\pi_t(a|x_h) + \pistar(a|x_h)\Big)\|\phi(x_h,a)\|_{\hatcov_{t,h}}^2\right]\right] \\ 
    &\qquad \qquad + \frac{1}{H}\E\left[\sum_{t=1}^T\sum_{h=0}^{H-1}\E_{x_h\sim \pistar}\left[ \sum_{a}\pi_t(a|x_h)\bonusQ_t(x_h,a)\right]\right].  
\end{align*}

We see that \pref{eq:modified bound expected} is satisfied in expectation as long as we have $2\eta H^2 + \frac{\beta}{4}\leq \beta$ and define $\bonus_t(x,a)\triangleq \beta \|\phi(x,a)\|_{\hatcov_{t,h}}^2 + \beta \sum_{a'}\pi_t(a'|x)\|\phi(x,a')\|_{\hatcov_{t,h}}^2$ (for $x\in X_h$).  By the definition of \pref{alg: generating B samples}, \pref{eq: expected condition 2} is also satisfied with this choice of $\bonus_t(x,a)$. Therefore, 
we can apply \pref{lem: expected version} to obtain a regret bound.
To simply the presentation, we first pick $\epsilon = \frac{1}{H^3 T}$ so that all $\epsilon$-related terms become $\order(1)$.
Then we have
\begin{align*}
    &\E[\Reg]  \\
    &=\otil\left(\frac{H}{\eta} + \frac{\gamma d H^3 T}{\beta} +  \frac{\eta H^3}{\gamma^2T^2} + \E\left[\sum_{t=1}^T\sum_{h=0}^{H-1}\E_{(x_h,a)\sim\pi_t} \left[\bonus_t(x,a)\right]\right]\right) \\
    &= \otil\left(\frac{H}{\eta} + \frac{\gamma d H^3 T}{\beta} +  \frac{\eta H^3}{\gamma^2T^2} + \beta\E\left[\sum_{t=1}^T \sum_{h=0}^{H-1}\E_{(x_h,a)\sim\pi_t} \left[\|\phi(x,a)\|_{\hatcov_{t,h}}^2\right]\right]\right) \\
    &= \otil\left(\frac{H}{\eta} + \frac{\gamma d H^3 T}{\beta} +  \frac{\eta H^3}{\gamma^2T^2} + \beta\E\left[\sum_{t=1}^T \sum_{h=0}^{H-1}\E_{(x_h,a)\sim\pi_t} \left[\|\phi(x,a)\|_{(\gamma I + \cov_{t,h})^{-1}}^2\right]\right]\right)
    \tag{\pref{eq: GE 5} and $\beta \leq H$} \\
    & = \otil\left(\frac{H}{\eta} +  \frac{\gamma d H^3 T}{\beta} + \frac{\eta H^3}{\gamma^2T^2} + \beta dHT \right),
\end{align*}
where the last step uses the fact 
\begin{align}
\E_t\left[\sum_h \E_{(x_h,a)\sim\pi_t} \left[\|\phi(x,a)\|_{(\gamma I + \cov_{t,h})^{-1}}^2\right]\right]
&\leq \E_t\left[\sum_h \E_{(x_h,a)\sim\pi_t} \left[\|\phi(x,a)\|_{\cov_{t,h}^{-1}}^2\right]\right] \nonumber \\
&= \sum_h \inner{\cov_{t,h}, \cov_{t,h}^{-1}} = dH. \label{eq:LB_stability}
\end{align}

Finally, choosing the parameters under the specified constraints as: 
\begin{align*}
    \gamma &= (dT)^{-\frac{2}{3}}, \qquad
    \beta = H(dT)^{-\frac{1}{3}}, \qquad 
    \epsilon = \frac{1}{H^3 T}, \\
    \eta &= \min\left\{ \frac{\gamma}{2H}, \frac{3\beta}{8H^2}, \frac{\gamma}{12\beta H^2} \right\},
\end{align*}
we further bound the regret by  $\otil\left(H^2(dT)^{\frac{2}{3}} + H^4(dT)^{\frac{1}{3}}\right)$.  
\end{proof}

\section{Details Omitted in \pref{sec: linear MDP}}\label{app: linear MDP appendix}

In this section, we analyze our algorithm for linear MDPs.
First, we show the main benefit of exploring with the policy cover, 
that is, it ensures a small magnitude for $\bonus_t(x,a)$, as shown below.

\begin{lemma} \label{lem: bonus bounded by 1}
    If  $\gamma\geq \frac{36\beta^2}{\explore}$ and $\beta\epsilon \leq \frac{1}{8}$, then $\bonus_k(x,a) \leq 1$ for all $(x,a)$ and all $k$ (with high probability). 
\end{lemma}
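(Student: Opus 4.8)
The plan is to reduce to known states and then control a single quadratic form $\|\phi(x,a')\|_{\hatcov_{k,h}}^2$. Since $\bonus_k(x,a)$ carries the indicator $\ind[x\in\calK]$, it is exactly $0$ when $x\notin\calK$, so I only treat $x\in\calK$ (with $h$ the layer of $x$). For such $x$ both summands of $\bonus_k(x,a)$ have the form $\beta\|\phi(x,a')\|_{\hatcov_{k,h}}^2$ with $a'$ an action \emph{at the same state} $x$, and membership in $\calK$ guarantees $\|\phi(x,a')\|_{(\widehat{\cov}_h^{\textcov})^{-1}}^2\leq\alpha$ for \emph{every} action $a'$. Hence it suffices to prove the pointwise inequality $\|\phi(x,a')\|_{\hatcov_{k,h}}^2\leq\tfrac1{\explore}\|\phi(x,a')\|_{(\widehat{\cov}_h^{\textcov})^{-1}}^2+2\epsilon$, after which $\bonus_k(x,a)\leq 2\beta(\tfrac{\alpha}{\explore}+2\epsilon)=\tfrac{2\alpha\beta}{\explore}+4\beta\epsilon$. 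Substituting $\alpha=\tfrac{\explore}{6\beta}$ makes the first term $\tfrac13$, and $\beta\epsilon\leq\tfrac18$ makes the second $\leq\tfrac12$, giving $\bonus_k(x,a)\leq\tfrac56<1$.

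The heart of the argument is a chain of PSD comparisons linking $\hatcov_{k,h}$ to the regularized inverse of the policy-cover covariance. The episodes in $S$ are generated by the mixture that plays $\pcov$ with probability $\explore$ and $\pi_k$ with probability $1-\explore$, so the covariance of the generating policy is $\Sigma^S_h=\explore\cov_h^{\textcov}+(1-\explore)\E_{\pi_k}[\phi\phi^\top]\succeq\explore\cov_h^{\textcov}$, where $\cov_h^{\textcov}$ is the true covariance of $\pcov$. I would then invoke the \GR guarantee (\pref{lem: GR lemma contingency} with $\lambda=0$, whose requirements on $M,N$ are met by the algorithm's parameters) to get, with probability at least $1-1/T^3$ for each $(k,h)$, the bound $\hatcov_{k,h}\preceq(\gamma I+\Sigma^S_h)^{-1}+2\epsilon I$ from \pref{eq: GE 4}. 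Operator monotonicity of matrix inversion together with $\Sigma^S_h\succeq\explore\cov_h^{\textcov}$ gives $(\gamma I+\Sigma^S_h)^{-1}\preceq(\gamma I+\explore\cov_h^{\textcov})^{-1}$, and since $\|\phi\|\leq1$ we obtain $\|\phi\|_{\hatcov_{k,h}}^2\leq\phi^\top(\gamma I+\explore\cov_h^{\textcov})^{-1}\phi+2\epsilon$.

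The remaining and most delicate step is to show $\explore\widehat{\cov}_h^{\textcov}\preceq\gamma I+\explore\cov_h^{\textcov}$, which turns the previous display into $\phi^\top(\gamma I+\explore\cov_h^{\textcov})^{-1}\phi\leq\tfrac1{\explore}\|\phi\|_{(\widehat{\cov}_h^{\textcov})^{-1}}^2$ and closes the chain. Recalling from \findpcov that $\widehat{\cov}_h^{\textcov}=\tfrac1{M_0}\Gamma_{M_0+1,h}=\tfrac1{M_0}I+\tfrac1{M_0}\sum_{m}\tfrac1{N_0}\sum_{t}\phi(x_{t,h},a_{t,h})\phi(x_{t,h},a_{t,h})^\top$ while $\cov_h^{\textcov}=\tfrac1{M_0}\sum_m\E_{\pi_m}[\phi\phi^\top]$, I would apply matrix Azuma (\pref{lem: matrix azuma}) to the $N_0$ samples drawn per policy $\pi_m$ to get $\big\|\tfrac1{M_0}\sum_m\tfrac1{N_0}\sum_t\phi\phi^\top-\cov_h^{\textcov}\big\|_{\text{op}}\leq\rho$ with $\rho\leq\tfrac{\gamma}{2\explore}$ — this is exactly what the large choice $N_0=\tfrac{100M_0^4\log(T/\delta)}{\alpha^2}$ buys. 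Thus $\widehat{\cov}_h^{\textcov}\preceq(\tfrac1{M_0}+\rho)I+\cov_h^{\textcov}$, so $\explore\widehat{\cov}_h^{\textcov}\preceq(\tfrac{\explore}{M_0}+\explore\rho)I+\explore\cov_h^{\textcov}$, and it remains to verify $\tfrac{\explore}{M_0}+\explore\rho\leq\gamma$: the second summand is $\leq\tfrac{\gamma}{2}$ by the choice of $\rho$, while the first obeys $\tfrac{\explore}{M_0}\leq\tfrac{36\beta^2}{\explore dH^2}\leq\tfrac{\gamma}{dH^2}$ using $M_0=\lceil\alpha^2dH^2\rceil$ and $\alpha=\tfrac{\explore}{6\beta}$, where the final inequality is precisely the hypothesis $\gamma\geq\tfrac{36\beta^2}{\explore}$.

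The hard part is this alignment of regularizers: one must ensure that the built-in $\tfrac1{M_0}I$ regularization of $\widehat{\cov}_h^{\textcov}$ plus the sampling error $\rho$ is dominated by the $\gamma I$ regularization inside $\hatcov_{k,h}$ — this is exactly where the assumed lower bound on $\gamma$ is consumed — while simultaneously keeping the accumulated constants (the factor $2$ from \GR's error and the prefactor of $\alpha\beta/\explore$) small enough that the final sum lands below $1$. To conclude, I would take a union bound over all epochs $k$ and layers $h$ together with the high-probability event of \findpcov (\pref{lem: property of cover}), yielding $\bonus_k(x,a)\leq1$ uniformly with high probability as claimed.
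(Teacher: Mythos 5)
Your proof is correct and is essentially the paper's own argument: both reduce to known states and bound $\beta\|\phi(x,a)\|_{\hatcov_{k,h}}^2$ by chaining \pref{eq: GE 4} of \pref{lem: GR lemma contingency}, the domination $\cov^{\mix}_{k,h}\succeq\explore\cov^{\textcov}_h$ of the mixture covariance, a matrix-Azuma comparison (\pref{lem: matrix azuma}) between $\cov^{\textcov}_h$ and $\widehat{\cov}^{\textcov}_h$, and finally the known-state bound $\|\phi(x,a)\|^2_{(\widehat{\cov}^{\textcov}_h)^{-1}}\le\alpha$ together with $\alpha=\explore/(6\beta)$ and $\gamma\ge 36\beta^2/\explore$. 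The only divergence is the last link — you establish the PSD domination $\explore\widehat{\cov}^{\textcov}_h\preceq\gamma I+\explore\cov^{\textcov}_h$ \emph{before} inverting, whereas the paper bounds $\|(\tfrac{1}{M_0}I+\cov^{\textcov}_h)^{-1}-(\widehat{\cov}^{\textcov}_h)^{-1}\|_{\text{op}}\le\alpha/2$ \emph{after} inverting — and the one step you assert without computation, $\rho\le\gamma/(2\explore)$, does check out: the definition of $N_0$ gives $\rho\le 0.3\alpha/M_0^2$, while $\gamma\ge 36\beta^2/\explore=\explore/\alpha^2$ and $M_0\ge\max\{1,\alpha^2 dH^2\}$ yield $0.3\alpha/M_0^2\le 1/(2\alpha^2)\le\gamma/(2\explore)$.
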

\begin{proof}
   According to the definition of $\bonus_k(x,a)$ (in \pref{alg:linearMDP}), it suffices to show that for $x\in \calK$, $\beta\|\phi(x,a)\|_{\hatcov_{k,h}}^2 \leq \frac{1}{2}$ for any $a$.  
To do so, note that the \GR procedure ensures that $\hatcov_{k,h}$ is an estimation of the inverse of $\gamma I + \cov_{k,h}^{\mix}$, where 
\begin{equation}\label{eq:mix_covariance}
\cov_{k,h}^{\mix} = \explore \cov_h^{\textcov} + (1-\explore)\E_{(x_h,a)\sim \pi_k}[\phi(x_h,a)\phi(x_h,a)^\top]
\end{equation}
and $\cov_h^{\textcov} = \frac{1}{M_0}\sum_{m=1}^{M_0}  \E_{(x_h,a) \sim \pi_m} \left[\phi(x_h,a)\phi(x_h,a)^\top\right]$ is the covariance matrix of the policy cover $\pcov$.
By \pref{eq: GE 4}, we have with probability at least $1-1/T^3$,
\[
\beta \|\phi(x,a)\|_{\hatcov_{k,h}}^2 \leq \beta \|\phi(x,a)\|_{(\gamma I + \cov_{k,h}^{\mix})^{-1}}^2 + 2\beta \epsilon
\leq \beta \|\phi(x,a)\|_{(\gamma I + \cov_{k,h}^{\mix})^{-1}}^2 + \frac{1}{4}.
\]
The first term can be further bounded as
$
\frac{\beta}{\explore}\|\phi(x,a)\|_{(\frac{\gamma}{\explore}I + \cov^{\textcov}_h)^{-1}}^2  
\leq \frac{\beta}{\explore}\|\phi(x,a)\|_{(\frac{1}{M_0}I + \cov^{\textcov}_h)^{-1}}^2   
$,
where the last step is because $\frac{\gamma}{\explore}M_0 \geq  \frac{\gamma}{\explore} \times \frac{\explore^2}{36\beta^2} \geq 1$ by our condition.   
Finally, we show that $\frac{1}{M_0}I + \cov^{\textcov}_h$ and $\widehat{\cov}_h^{\textcov}$ are close.
Recall the definition of the latter:
    \begin{align*}
        \widehat{\cov}_h^{\textcov} = \frac{1}{M_0}I + \frac{1}{M_0N_0}\sum_{m=1}^{M_0} \sum_{t=(m-1)N_0+1}^{mN_0} \phi(x_{t,h},a_{t,h})\phi(x_{t,h},a_{t,h})^\top.  
    \end{align*}
    We now apply \pref{lem: matrix azuma} with $n=N_0$ and
    \begin{align*}
        X_k = \frac{1}{M_0}\sum_{m=1}^{M_0}\phi(x_{\tau(m,k), h}, a_{\tau(m,k), h})\phi(x_{\tau(m,k), h}, a_{\tau(m,k), h})^\top - \frac{1}{M_0}\sum_{m=1}^{M_0}\E_{(x_h,a) \sim \pi_m} \left[\phi(x_h,a)\phi(x_h,a)^\top\right], 
    \end{align*}
    for $k=1,\ldots, N_0$, where $\tau(m,k)\triangleq (m-1)N_0+k$. Note that $X_k^2 \preceq I$. Therefore, we can pick $A_k=I$ and $\sigma=1$. By \pref{lem: matrix azuma}, we have with probability at least $1-\delta$, 
    \begin{align*}
        \norm{\widehat{\cov}^{\textcov}_h - \frac{1}{M_0}I - \cov_h^{\textcov} }_{\text{op}} \leq \sqrt{\frac{8\log(d/\delta)}{N_0}}. 
    \end{align*} 
    Following the same proof as \citep[Theorem 2.1]{meng2010optimal}, we have 
    \begin{align*}
        \norm{\left(\frac{1}{M_0}I + \cov_h^{\textcov}\right)^{-1} - \left(\widehat{\cov}_h^{\textcov}\right)^{-1}}_{\text{op}} \leq M_0^2 \norm{\widehat{\cov}^{\textcov}_h - \frac{1}{M_0}I - \cov_h^{\textcov} }_{\text{op}} \leq M_0^2 \sqrt{\frac{8\log(d/\delta)}{N_0}} \leq \frac{\alpha}{2}. \tag{by our choice of $N_0$ and $M_0$}  
    \end{align*}
    Consequently, for any vector $\phi$ with $\|\phi\|\leq 1$, we have 
    \begin{equation*}
        \left|\norm{\phi}_{ \left(\frac{1}{M_0}I + \cov_h^{\textcov}\right)^{-1}}^2 - \norm{\phi}_{ \left( \widehat{\cov}_h^{\textcov}\right)^{-1}}^2 \right|
        \leq  \norm{\left(\frac{1}{M_0}I + \cov_h^{\textcov}\right)^{-1} - \left(  \widehat{\cov}_h^{\textcov}\right)^{-1}}_{\text{op}}  \leq \frac{\alpha}{2}. 
    \end{equation*}
    Therefore, combining everything we have
    \[
    \beta \|\phi(x,a)\|_{\hatcov_{k,h}}^2 
    \leq \frac{\beta}{\explore}\left(\|\phi(x,a)\|_{\left(  \widehat{\cov}_h^{\textcov}\right)^{-1}}^2+\frac{\alpha}{2}\right) + \frac{1}{4}
    \leq \frac{3\beta\alpha}{2\explore}+ \frac{1}{4} \leq \frac{1}{2},
    \]
where the last two steps use the fact $x \in \calK$ and the value of $\alpha$.
This finishes the proof.
\end{proof}


Next, we define the following notations for convenience due to the epoch schedule of our algorithm, and then proceed to prove the main theorem.

\begin{definition}
\begin{align*}
    \barell_{k}(x,a) &=  \frac{1}{\len}\sum_{t=T_0+(k-1)\len+1}^{T_0+k\len}\ell_t(x,a)\\
    \barQ_{k}^{\pi}(x,a) &=  Q^{\pi}(x,a;\barell_k)\\
    \bartheta_{k,h}^{\pi} 
    &~~\text{is such that}~~ \barQ_{k}^{\pi}(x,a) = \phi(x,a)^\top \bartheta_{k,h}^{\pi} \\ 
        \bonusQ_k(x,a) &= \bonus_k(x,a) + \left(1+\frac{1}{H}\right)\E_{x'\sim P(\cdot|x,a)}\E_{a'\sim \pi_k(\cdot|x')}[\bonusQ_k(x',a')] \\
        \hatbonusQ_k(x,a) &= \bonus_k(x,a) + \phi(x,a)^\top \bonusVec_{k,h}   \qquad \qquad \text{{\normalfont (for $x\in X_h$)}}
    \end{align*}
\end{definition}
 
\begin{proof}[Proof of \pref{thm: linear MDP theorem}] 
 
We first analyze the regret of policy optimization after the first $T_0$ rounds.
Our goal is again to prove \pref{eq:modified bound expected} which in this case bounds
\[
\sum_{k=1}^{(T-T_0)/\len}\sum_h\E_{X_h \ni x\sim\pistar}\left[ \sum_a\Big(\pi_k(a|x)-\pistar(a|x)\Big) \left(\barQ^{\pi_k}_k(x,a)-\bonusQ_k(x, a)\right)\right].
\]
The first step is to separate known states and unknown states.
For unknown states, we have
\begin{align*}
&\sum_{k=1}^{(T-T_0)/\len}\sum_h\E_{X_h \ni x\sim\pistar}\left[\one[x\notin \calK] \sum_a\Big(\pi_k(a|x)-\pistar(a|x)\Big) \left(\barQ^{\pi_k}_k(x,a)-\bonusQ_k(x, a)\right)\right] \\
&\leq \frac{(T-T_0)He}{\len} \sum_h\E_{X_h \ni x\sim\pistar}\left[\one[x\notin \calK]\right] 
= \otil\left(\frac{dH^3T}{\alpha\len}\right),
\end{align*}
where the first step is by the facts $0 \leq \barQ^{\pi_k}_k(x,a) \leq H$ and $0 \leq \bonusQ_k(x, a) \leq (1+\frac{1}{H})^H \times H \leq He$ (\pref{lem: bonus bounded by 1}),
and the second step applies \pref{lem: property of cover}.
For known states, we apply a similar decomposition as previous analysis,
but since we also use function approximation for bonus $\bonusQ_t(x,a)$, we need to account for its estimation error, which results in two extra bias terms:
\begin{align*}
	&\sum_{k=1}^{(T-T_0)/\len}\sum_h\E_{X_h \ni x\sim\pistar}\left[\one[x\in \calK] \sum_a\Big(\pi_k(a|x)-\pistar(a|x)\Big) \left(\barQ^{\pi_k}_k(x,a)-\bonusQ_k(x, a)\right)\right]  \\
	&= \underbrace{\sum_{k=1}^{(T-T_0)/\len}\sum_h\E_{X_h \ni x\sim\pistar}\left[\one[x\in \calK] \sum_a \pi_k(a|x)\Big( \barQ_k^{\pi_k}(x,a)-\Qht_k(x,a)\Big)\right]}_{\bias}  \\
	&\quad +\underbrace{\sum_{k=1}^{(T-T_0)/\len}\sum_h\E_{X_h \ni x\sim\pistar}\left[\one[x\in \calK] \sum_a \pistar(a|x) \Big(\Qht_k(x,a)-\barQ_k^{\pi_k}(x,a)\Big)\right]}_{\biastwo}\\
	&\quad +\underbrace{\sum_{k=1}^{(T-T_0)/\len}\sum_h\E_{X_h \ni x\sim\pistar}\left[\one[x\in \calK] \sum_a \pi_k(a|x)\Big( \hatbonusQ_k(x,\cdot)-\bonusQ_k(x,\cdot)\Big)\right]}_{\biasthree} \\
	&\quad+ \underbrace{\sum_{k=1}^{(T-T_0)/\len}\sum_h\E_{X_h \ni x\sim\pistar}\left[\one[x\in \calK] \sum_a \pistar(a|x)\Big( \bonusQ_k(x,a)-\hatbonusQ_k(x,a)\Big)\right]}_{\biasfour}\\ 
	&\quad 
	 + \underbrace{\sum_{k=1}^{(T-T_0)/\len}\sum_h\E_{X_h \ni x\sim\pistar}\left[\one[x\in \calK] \sum_a \Big(\pi_k(\cdot|x)-\pistar(\cdot|x)\Big)\Big( \Qht_k(x,\cdot)-\hatbonusQ_k(x,\cdot)\Big)\right]}_{\regterm}. 
\end{align*}

Now we combine the bounds in \pref{lem: linear MDP bias term}, \pref{lem: linear MDP bias term1}, and \pref{lem: linear MDP regret term} (included after this proof). Suppose that the conditions on the parameters specified in \pref{lem: linear MDP regret term} hold. We get  
\begin{align*}
    &\E[\bias + \biastwo + \biasthree + \biasfour + \regterm] \\
    &= \otil\left(\frac{H}{\eta} + \frac{\eta\epsilon H^4T}{\len} + \frac{\eta H^4}{\gamma^2 T^2\len} + \frac{\gamma dH^3T}{\beta\len} + \frac{\epsilon H^3T}{\len} \right)  \\
    &\qquad \qquad +  \left(\frac{\beta}{2} + 2\eta H^3\right)\sum_{k}\sum_h\E_{X_h \ni x\sim\pistar} \left[\one[x\in \calK]\sum_a (\pistar(a|x) + \pi_k(a|x))\|\phi(x,a)\|_{\hatcov_{k,h}}^2\right] \\  
    &\qquad \qquad + \frac{1}{H}\sum_{k}\sum_h\E_{X_h \ni x\sim\pistar} \left[\sum_a \pi_k(a|x)\bonusQ_k(x,a)\right] \\
    &\leq \otil\left(\frac{H}{\eta} + \frac{\eta\epsilon H^4T}{\len} + \frac{\eta H^4}{\gamma^2 T^2\len} + \frac{\gamma dH^3T}{\beta\len} + \frac{\epsilon H^3T}{\len} \right)  \\
    &\qquad \qquad +  \sum_{k}V^\pistar(x_0; b_k)
    + \frac{1}{H}\sum_{k}\sum_h\E_{X_h \ni x\sim\pistar} \left[\sum_a \pi_k(a|x)\bonusQ_k(x,a)\right]
\end{align*}
where the last inequality is because $\frac{\beta}{2} + 2\eta H^3 \leq \beta$ (implied by $\frac{\eta}{\beta}\leq \frac{1}{20H^4}$, a condition specified in \pref{lem: linear MDP regret term}). 

Combining two cases and applying \pref{lem: expected version}, we thus have 
\begin{align*}
    &\E\left[\sum_{k=1}^{(T-T_0)/\len} V^{\pi_k}(x_0; \barell_k)\right] - \sum_{k=1}^{(T-T_0)/\len} V^{\pistar}(x_0; \barell_k) \\
    &\leq  \otil\left(\frac{H}{\eta} + \frac{\eta\epsilon H^4T}{W} + \frac{\eta  H^4}{\gamma^2 T^2W} + \frac{\gamma dH^3 T}{\beta \len} +  \frac{\epsilon H^3T}{W} + \beta\E\left[\sum_{k=1}^{(T-T_0)/\len} \sum_{h}\E_{(x_h, a) \sim \pi_k}\left[ \|\phi(x_h,a)\|_{\hatcov_{k,h}}^2 \right]\right] + \frac{dH^3T}{\alpha\len}\right) \\
    &= \otil\left(\frac{H}{\eta} + \frac{\eta\epsilon H^4T}{W} + \frac{\eta  H^4}{\gamma^2 T^2W} + \frac{\gamma dH^3 T}{\beta \len} +  \frac{\epsilon H^3T}{W} + \frac{\beta dH T}{\len} + \frac{dH^3T}{\alpha\len}\right). \tag{by similar calculation as \pref{eq:LB_stability}}
\end{align*}

Finally, to get the overall regret, it remains to multiply the bound above by $\len$,
add the trivial bound $HT_0 =2HM_0 N_0 = \order\left( \frac{\explore^8d^{10}H^{11}}{\beta^8} \right)$ for the initial $T_0$ rounds,
and consider the exploration probability $\explore$, which leads to
\begin{align*}
    \E[\Reg]& = \otil\left(\frac{H\len}{\eta} + \eta\epsilon H^4T + \frac{\eta H^4 }{\gamma^2 T^2} + \frac{\gamma dH^3 T}{\beta} + \epsilon H^3T + \beta dH T + \frac{dH^3T }{\alpha} + \frac{d^{10}H^{11}\explore^8}{\beta^8} + \explore HT \right) \\
    &=\otil\left(\frac{H}{\eta\epsilon^2\gamma^3} + \frac{\eta H^4}{\gamma^2 T^2} + \frac{\gamma dH^3 T}{\beta} + \epsilon H^3T +  \beta dH T + \frac{dH^3T \beta}{\explore} + \frac{d^{10}H^{11}\explore^8}{\beta^8} +\explore HT\right)
\end{align*}
where we use the specified value of $M$, $N$, $\len=2MN$, $\alpha$, and that $\eta H\leq 1$ (so that the second term $\eta\epsilon H^4T$ is absorbed by the fifth term $\epsilon H^3T$). 

Considering the constraints in \pref{lem: linear MDP regret term} and \pref{lem: bonus bounded by 1}, we choose $\gamma = \max\left\{16\eta H^4, \frac{4\beta^2}{\explore}\right\}$. This gives the following simplified regret  
\begin{align*}
    &\otil\left(\frac{1}{ \epsilon^2 \eta^4  H^{11}} + \frac{1}{\eta H^4 T^2} + \frac{\eta dH^7 T}{\beta} + \epsilon H^3 T + \beta dHT +  \frac{dH^3T \beta}{\explore} + \frac{d^{10}H^{11}\explore^8}{\beta^8} +\explore HT\right).
\end{align*}
Choosing $\explore$ optimally, and supposing $\eta \geq \frac{1}{T}$, the above is simplified to 
\begin{align*}
    &\otil\left(\frac{1}{ \epsilon^2 \eta^4  H^{11}} + \frac{\eta dH^7 T}{\beta} + \epsilon H^3 T + \beta dHT + H^2\sqrt{d\beta}T +  d^2H^{\nicefrac{35}{9}}T^{\nicefrac{8}{9}} \right) \\
    &=\otil\left(\frac{1}{ \epsilon^2 \eta^4  H^{11}} + \frac{\eta dH^7 T}{\beta} + \epsilon H^3 T + H^2\sqrt{d\beta} T + d^2H^{\nicefrac{35}{9}}T^{\nicefrac{8}{9}} \right)  \tag{choosing $\beta\leq \frac{H^2}{d}$}.
\end{align*}
Picking optimal parameters in the last expression, we get $\otil\left( d^2H^4 T^{\nicefrac{14}{15}} \right)$. 
\end{proof}

\begin{lemma}
\label{lem: linear MDP bias term}
     \begin{align*}
         &\E[\bias + \biastwo] \\
         &\leq \frac{\beta}{4} \E\left[  \sum_{k=1}^{(T-T_0)/\len}\sum_h\E_{X_h \ni x\sim\pistar}\left[\one[x\in \calK]\sum_{a} \Big(\pistar(a|x) + \pi_k(a|x)\Big) \norm{\phi(x,a)}_{\hatcov_{k,h}}^2\right]\right]  + \order\left(\frac{\gamma dH^3T}{\beta\len} + \frac{\epsilon H^3T}{\len}\right). 
     \end{align*}
\end{lemma}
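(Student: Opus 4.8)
The plan is to treat this as the linear-MDP counterpart of \pref{lem: linear Q bias term}, with the epoch schedule and the two-mode $\pcov$-exploration grafted in. First I would push the inner conditional expectation $\E_k[\cdot]$ (conditioning on everything up to the start of epoch $k$, so that $\pi_k$, $\pcov$, $\widehat{\cov}^{\textcov}$, hence $\calK$, and the deterministic averaged loss $\barell_k$ are all fixed) through the sums, reducing $\E[\bias]$ and $\E[\biastwo]$ to the pointwise estimate of $\E_k[\barQ^{\pi_k}_k(x,a)-\Qht_k(x,a)] = \phi(x,a)^\top(\bartheta^{\pi_k}_{k,h}-\E_k[\hattheta_{k,h}])$ for each $x\in\calK$ in layer $h$. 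Since $S$ and $S'$ form an independent random partition of the epoch, I would factor $\E_k[\hattheta_{k,h}]=\E_k[\hatcov_{k,h}]\cdot\E_k[\frac1{|S'|}\sum_{t\in S'}((1-Y_t)+Y_tH\ind[h=h_t^*])\phi(x_{t,h},a_{t,h})L_{t,h}]$.

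The core computation is to show this weighted $S'$ average is (almost) unbiased for $\cov^{\mix}_{k,h}\bartheta^{\pi_k}_{k,h}$, where $\cov^{\mix}_{k,h}$ is the covariance of \pref{eq:mix_covariance} that $\hatcov_{k,h}$ estimates. Splitting on $Y_t$, the $Y_t=0$ branch contributes the $\pi_k$ covariance weighted by $1-\explore$, while the $Y_t=1$ branch, together with the weight $H\ind[h=h_t^*]$ and the uniform draw of $h_t^*\in\{0,\dots,H-1\}$, contributes the $\pcov$-exploration covariance weighted by $\explore$; in both branches $\E[L_{t,h}\mid x_{t,h},a_{t,h}]=Q^{\pi_k}_t(x_{t,h},a_{t,h})=\phi^\top\theta^{\pi_k}_{t,h}$, and averaging $\frac1{|S'|}\sum_{t\in S'}\theta^{\pi_k}_{t,h}$ over the random half $S'$ yields $\bartheta^{\pi_k}_{k,h}$ by linearity of $Q$ in the loss. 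Granting this, I would invoke \pref{eq: GE 5} of \pref{lem: GR lemma contingency} to replace $\E_k[\hatcov_{k,h}]$ by $(\gamma I+\cov^{\mix}_{k,h})^{-1}$ at the cost of $\order(\epsilon H)$, leaving the clean bias $\gamma\,\phi(x,a)^\top(\gamma I+\cov^{\mix}_{k,h})^{-1}\bartheta^{\pi_k}_{k,h}$. Exactly as in \pref{lem: linear Q bias term}, Cauchy--Schwarz followed by AM--GM bounds this by $\tfrac\beta4\norm{\phi(x,a)}^2_{(\gamma I+\cov^{\mix}_{k,h})^{-1}}+\tfrac{\gamma^2}\beta\norm{\bartheta^{\pi_k}_{k,h}}^2_{(\gamma I+\cov^{\mix}_{k,h})^{-1}}$, and the second summand is at most $\tfrac{\gamma dH^2}{\beta}$ using $\|\bartheta^{\pi_k}_{k,h}\|^2\le dH^2$ and $(\gamma I+\cov^{\mix}_{k,h})^{-1}\preceq\tfrac1\gamma I$. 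Applying \pref{eq: GE 5} once more converts $\norm{\phi}^2_{(\gamma I+\cov^{\mix}_{k,h})^{-1}}$ into $\norm{\phi}^2_{\hatcov_{k,h}}$ up to another $\order(\epsilon)$. The symmetric argument bounds $\E_k[\Qht_k(x,a)-\barQ^{\pi_k}_k(x,a)]$, handling $\biastwo$; summing over actions with weights $\pi_k(\cdot|x)$ and $\pistar(\cdot|x)$, over the $H$ layers, and over the $(T-T_0)/\len$ epochs (keeping the $\one[x\in\calK]$ indicator throughout) produces the claimed leading term together with the error $\order(\tfrac{\gamma dH^3T}{\beta\len}+\tfrac{\epsilon H^3T}{\len})$.

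The main obstacle is the unbiasedness step under the asymmetric exploration: the $S$ trajectories used by \GR run $\pcov$ for the whole episode, whereas the $S'$ trajectories used for $\hattheta_{k,h}$ run $\pcov$ only up to the random layer $h_t^*$ and then switch to $\pi_k$. I must verify carefully that the importance weight $(1-Y_t)+Y_tH\ind[h=h_t^*]$ is exactly what makes the effective layer-$h$ covariance of the weighted $S'$ data agree (up to negligible error absorbed into the $\epsilon$/$\gamma$ terms) with the covariance $\cov^{\mix}_{k,h}$ that $\hatcov_{k,h}$ inverts; in particular any residual discrepancy coming from the action at the switch layer being drawn from $\pi_k$ rather than $\pcov$ must be shown to be lower order. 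The remaining ingredients---boundedness of $\barQ^{\pi_k}_k$ and of $\bonusQ_k$, and the $\calK$-restriction---are routine once this matching is in place.
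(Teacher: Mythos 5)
Your proposal retraces the paper's own proof step for step: the same reduction to the pointwise bias $\phi(x,a)^\top(\bartheta^{\pi_k}_{k,h}-\E_k[\hattheta_{k,h}])$, the same factorization of $\E_k[\hattheta_{k,h}]$ through the independence of the random halves $S$ and $S'$, the same two applications of \pref{eq: GE 5} of \pref{lem: GR lemma contingency}, the same Cauchy--Schwarz/AM--GM step with $\|\bartheta^{\pi_k}_{k,h}\|^2\le dH^2$, the symmetric argument for $\biastwo$, and the same final summation over $a$, $h$, $k$ with the $\one[x\in\calK]$ indicator. (One bookkeeping remark: the first application of \pref{eq: GE 5} costs $\order(\epsilon H^2)$ rather than your $\order(\epsilon H)$, because the importance weight $H\ind[h=h_t^*]$ inflates the magnitude of the weighted sum; this is immaterial since the lemma's error term is $\order(\epsilon H^3T/\len)$.)

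The one step you leave open, however, is resolved differently from how you plan to resolve it, and the difference matters. The paper asserts the \emph{exact} identity $\E_k\left[((1-Y_t)+Y_tH\ind[h=h_t^*])\phi(x_{t,h},a_{t,h})L_{t,h}\right]=\cov^{\mix}_{k,h}\theta^{\pi_k}_{t,h}$, with no residual to absorb. Two facts are needed for it: (i) $\E[L_{t,h}\mid x_{t,h},a_{t,h}]=Q^{\pi_k}_t(x_{t,h},a_{t,h})$, which only requires layers $h+1,\dots,H-1$ to follow $\pi_k$ and so is unaffected by which policy chooses the action at the switch layer; and (ii) the $\explore$-weighted part of the effective covariance must equal $\cov_h^{\textcov}$, which requires the \emph{pair} $(x_{t,h_t^*},a_{t,h_t^*})$ --- state and action --- to be drawn from $\pcov$'s occupancy measure, i.e.\ the $\pcov$-prefix must include the action at layer $h_t^*$, with $\pi_k$ taking over only from layer $h_t^*+1$. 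Under that reading of Step~1 of \pref{alg:linearMDP} the identity is exact and there is nothing left to bound. Under the literal ``switch at $h_t^*$'' reading you describe, the $\explore$-weighted covariance is $\E_{x_h\sim \pcov}\E_{a\sim\pi_k(\cdot|x_h)}\left[\phi(x_h,a)\phi(x_h,a)^\top\right]$ instead of $\cov_h^{\textcov}$, and the resulting discrepancy term $\explore\,\phi(x,a)^\top(\gamma I+\cov^{\mix}_{k,h})^{-1}\left(\cov_h^{\textcov}-\E_{x_h\sim\pcov}\E_{a\sim\pi_k(\cdot|x_h)}[\phi(x_h,a)\phi(x_h,a)^\top]\right)\bartheta^{\pi_k}_{k,h}$ is \emph{not} lower order: the two action distributions are unrelated, the natural bound scales like $\sqrt{\explore/\gamma}\,\sqrt{d}H\le \frac{\explore\sqrt{d}H}{2\beta}$ per state-action (using the theorem's constraint $\gamma\ge 4\beta^2/\explore$), and carried into the regret this produces a term of order $\frac{\explore\sqrt{d}H^2T}{\beta}$, whose geometric mean with the term $\frac{dH^3T\beta}{\explore}$ already present in the final bound is $d^{3/4}H^{5/2}T$ --- linear in $T$ --- so no choice of $\beta,\explore$ keeps both sublinear. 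In short: do not plan on showing a residual is negligible; the correct resolution is to attribute the switch-layer action to $\pcov$, which makes the unbiasedness exact and is what the paper's proof implicitly uses.
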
 
\begin{proof}   
    The proof of this lemma is similar to that of \pref{lem: linear Q bias term}, except that we replace $T$ by $(T-T_0)/\len$, and consider the averaged loss $\barell_k$ in an epoch instead of the single episode loss $\ell_t$: 
    \begin{align}
        &\E_k\left[\barQ^{\pi_k}_k(x,a) - \Qht_k(x,a)\right]  \nonumber  \\
        &=\phi(x,a)^\top \left(\bartheta_{k,h}^{\pi_k} - \E_k\left[\hattheta_{k,h}\right]\right) \nonumber \\ 
        &= \phi(x,a)^\top \left(\bartheta_{k,h}^{\pi_k} - \E_k\left[\hatcov_{k,h}\right]\E_k\left[ \frac{1}{|S_k'|}\sum_{t\in S_k'} ((1-Y_t) + Y_t H\ind[h=h_t^*]) \phi(x_{t,h},a_{t,h})L_{t,h} \right]\right) \tag{$S_k'$ is the $S'$ in \pref{alg:linearMDP} within epoch $k$} \\
        &= \phi(x,a)^\top \left(\bartheta_{k,h}^{\pi_k} - \left( \gamma I + \cov_{k,h}^{\mix}\right)^{-1}\E_k\left[ \frac{1}{|S_k'|}\sum_{t\in S_k'} ((1-Y_t) + Y_t H\ind[h=h_t^*]) \phi(x_{t,h},a_{t,h})L_{t,h} \right]\right) + \order(\epsilon H^2) \tag{by \pref{lem: GR lemma contingency} and that $\|\phi(x,a)\|\leq 1$ for all $x,a$ and $L_{t,h}\leq H$; $\cov_{k,h}^{\mix}$ is defined in \pref{eq:mix_covariance}} \\
        &=  \phi(x,a)^\top \left(\bartheta_{k,h}^{\pi_k} - \left( \gamma I + \cov_{k,h}^{\mix}\right)^{-1}\E_k\left[  \frac{1}{|S_k'|}\sum_{t\in S_k'}\cov_{k,h}^{\mix} \theta^{\pi_k}_{t,h} \right]\right)  + \order(\epsilon H^2)  \nonumber \\
        &=  \phi(x,a)^\top \left(\bartheta_{k,h}^{\pi_k} - \left( \gamma I + \cov_{k,h}^{\mix}\right)^{-1}\E_k\left[  \frac{1}{\len}\sum_{t=(k-1)\len+1}^{k\len}\cov_{k,h}^{\mix} \theta^{\pi_k}_{t,h} \right]\right)  + \order(\epsilon H^2)  \tag{$S_k'$ is randomly chosen from epoch $k$}  \\  
        &= \phi(x,a)^\top \left(\bartheta_{k,h}^{\pi_k} - \left( \gamma I + \cov_{k,h}^{\mix}\right)^{-1}\cov_{k,h}^{\mix} \bartheta_{k,h}^{\pi_k}\right) + \order(\epsilon H^2) \nonumber \\
        &= \gamma \phi(x,a)^\top \left(\gamma I + \cov^{\mix}_{k,h}\right)^{-1} \bartheta^{\pi_k}_{k,h} + \order\left(\epsilon H^2\right) \nonumber  \\
        &\leq \frac{\beta}{4}  \|\phi(x,a)\|_{(\gamma I + \cov_{t,h}^{\mix})^{-1}}^2  + \frac{\gamma^2}{\beta}  \norm{\bartheta_{k,h}^{\pi_k}}_{(\gamma I + \cov_{k,h}^{\mix})^{-1}}^2 + \order(\epsilon H^2) \nonumber \tag{AM-GM inequality}  \\
        &\leq \frac{\beta}{4} \E_k\left[  \norm{\phi(x,a)}_{\hatcov_{k,h}}^2\right] + \frac{\gamma dH^2}{\beta} + \order\left(\epsilon H^2\right).   \nonumber
        \label{eq: long proff 4} 
    \end{align}
    The same bound also holds for $\E_k\left[\Qht_k(x,a) - \barQ^{\pi_k}_k(x,a) \right]$ by the same reasoning. 
    Taking expectation over $x$, summing over $k, h$ and $a$ (with weights $\pi_k(a|x)$ and $\pistar(a|x)$ respectively) finishes the proof. 
\end{proof}

\begin{lemma}
\label{lem: linear MDP bias term1}
     \begin{align*}
         &\E[\biasthree + \biasfour]\\
         &\leq \frac{\beta}{4} \E\left[  \sum_{k=1}^{(T-T_0)/\len}\sum_h\E_{X_h \ni x\sim\pistar}\left[\one[x\in \calK]\sum_{a} \Big(\pistar(a|x) + \pi_k(a|x)\Big) \norm{\phi(x,a)}_{\hatcov_{k,h}}^2\right]\right] + \order\left(\frac{\gamma dH^3 T}{\beta\len} + \frac{\epsilon  H^3 T}{ \len}\right).
     \end{align*}
\end{lemma}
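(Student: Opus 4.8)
The plan is to reproduce the structure of the proof of \pref{lem: linear MDP bias term}, viewing $\hatbonusQ_k$, $\bonusQ_k$, $\bonusVec_{k,h}$, $\Lambda_{k,h}^{\pi_k}$, and $D_{t,h}$ as the exact analogues of $\Qht_k$, $\barQ_k^{\pi_k}$, $\hattheta_{k,h}$, $\bartheta_{k,h}^{\pi_k}$, and $L_{t,h}$. First I would record the identity $\hatbonusQ_k(x,a)-\bonusQ_k(x,a)=\phi(x,a)^\top(\bonusVec_{k,h}-\Lambda_{k,h}^{\pi_k})$ for $x\in X_h$, which holds because the two quantities share the additive term $\bonus_k(x,a)$, while by \pref{assum: linear MDP assumption} together with the dilated Bellman equation \pref{eq: dilated} we have $\bonusQ_k(x,a)=\bonus_k(x,a)+\phi(x,a)^\top\Lambda_{k,h}^{\pi_k}$ with $\Lambda_{k,h}^{\pi_k}=(1+\frac{1}{H})\int_{X_{h+1}}\E_{a'\sim\pi_k}[\bonusQ_k(x',a')]\,\nu_h^{x'}\,\de x'$. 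Hence both $\biasthree$ and $\biasfour$ reduce to controlling the estimation error $\phi(x,a)^\top(\bonusVec_{k,h}-\Lambda_{k,h}^{\pi_k})$ of the bonus-weight vector, with $\biasthree$ weighting the error by $\pi_k$ and $\biasfour$ weighting $\bonusQ_k-\hatbonusQ_k$ by $\pistar$.

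Second, I would show $\bonusVec_{k,h}$ is an almost unbiased estimator of $\Lambda_{k,h}^{\pi_k}$. The one genuinely new computation is the conditional mean of $D_{t,h}$: unrolling the dilated recursion along a trajectory that follows $\pi_k$ from layer $h$ onward gives $\E[D_{t,h}\mid x_{t,h},a_{t,h}]=(1+\frac{1}{H})\E_{x'\sim P(\cdot|x_{t,h},a_{t,h})}\E_{a'\sim\pi_k}[\bonusQ_k(x',a')]=\phi(x_{t,h},a_{t,h})^\top\Lambda_{k,h}^{\pi_k}$, where the geometric weights $(1+\frac{1}{H})^{i-h}$ built into $D_{t,h}$ are exactly what makes this telescope match $\Lambda_{k,h}^{\pi_k}$. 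Combining this with the covariance identity $\E[((1-Y_t)+Y_tH\ind[h=h_t^*])\,\phi_{t,h}\phi_{t,h}^\top]=\cov_{k,h}^{\mix}$ already verified in \pref{lem: linear MDP bias term}, the $S'$-average has conditional mean $\cov_{k,h}^{\mix}\Lambda_{k,h}^{\pi_k}$. From here the algebra is identical to \pref{lem: linear MDP bias term}: the bias equals $-\gamma\,\phi(x,a)^\top(\gamma I+\cov_{k,h}^{\mix})^{-1}\Lambda_{k,h}^{\pi_k}$, which by Cauchy--Schwarz and AM--GM is at most $\frac{\beta}{4}\|\phi(x,a)\|_{(\gamma I+\cov_{k,h}^{\mix})^{-1}}^2+\frac{\gamma^2}{\beta}\|\Lambda_{k,h}^{\pi_k}\|_{(\gamma I+\cov_{k,h}^{\mix})^{-1}}^2+O(\epsilon H^2)$. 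Using $\|(\gamma I+\cov_{k,h}^{\mix})^{-1}\|_{\text{op}}\le 1/\gamma$ and the norm bound $\|\Lambda_{k,h}^{\pi_k}\|^2=O(dH^2)$ (which, like the bound in \pref{assum: linear Q assumption}, follows because $\bonusQ_k$ is a linear $Q$-function with values $O(H)$, a consequence of $\bonus_k\le 1$ from \pref{lem: bonus bounded by 1}), the second term is $O(\gamma dH^2/\beta)$; converting the first term from the $(\gamma I+\cov_{k,h}^{\mix})^{-1}$-norm to the $\hatcov_{k,h}$-norm via \pref{eq: GE 4} gives the per-step bound, and the $O(\epsilon H^2)$ error appears exactly as before since $D_{t,h}=O(H)$ parallels $L_{t,h}\le H$. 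Summing over $a$ (weights $\pi_k$ and $\pistar$), over $x\sim\pistar$ restricted to $\calK$, over the $H$ layers and the $(T-T_0)/\len$ epochs then yields the claimed main term together with the aggregated errors $O(\gamma dH^3T/(\beta\len))$ and $O(\epsilon H^3T/\len)$.

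The hard part will be a statistical-dependence issue that distinguishes this lemma from \pref{lem: linear MDP bias term}. For the loss estimator, $L_{t,h}$ is independent of the $S$-data, so the expectation factorizes as $\E[\hatcov_{k,h}]\cdot\E[\text{$S'$-sum}]$ and one may invoke the in-expectation guarantee \pref{eq: GE 5}. For the bonus, however, $D_{t,h}$ depends on $\bonus_k$, which is itself a function of $\hatcov_{k,h}$ constructed from the $S$-trajectories; thus $\hatcov_{k,h}$ and the $S'$-sum are \emph{not} independent and that factorization fails. I would resolve this by conditioning on the entire $S$-portion of epoch $k$, so that $\hatcov_{k,h}$, $\bonus_k$, and $\Lambda_{k,h}^{\pi_k}$ are all frozen as deterministic objects; the $S'$-trajectories remain conditionally independent of $S$ because their law is governed only by the fixed policies $\pi_k$ and $\pcov$, so the conditional mean of the $S'$-sum is still $\cov_{k,h}^{\mix}\Lambda_{k,h}^{\pi_k}$. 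The price is that I must use the realized-estimate accuracy bound \pref{eq: GE 4} in place of \pref{eq: GE 5}; since \pref{eq: GE 4} holds only with probability $1-1/T^3$, I would dispatch the complementary event with the crude deterministic bound $\|\hatcov_{k,h}\|_{\text{op}}\le 1/\gamma$ from \pref{eq: GE 3}, whose contribution is negligible after multiplication by $1/T^3$.
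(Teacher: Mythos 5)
Your proposal is correct and follows essentially the same route as the paper: the paper's own proof of this lemma is literally ``repeat the proof of \pref{lem: linear MDP bias term} with $L_{t,h}$ replaced by $D_{t,h}$ and $\bartheta^{\pi_k}_{k,h}$ replaced by $\Lambda^{\pi_k}_{k,h}$, noting that $D_{t,h}\in[0,He]$ by \pref{lem: bonus bounded by 1},'' which is exactly the content of your first two paragraphs (including the unrolling of the dilated Bellman recursion that gives $\E[D_{t,h}\mid x_{t,h},a_{t,h}]=\phi(x_{t,h},a_{t,h})^\top\Lambda^{\pi_k}_{k,h}$, and the bound $\|\Lambda^{\pi_k}_{k,h}\|^2=\order(dH^2)$ playing the role of $\|\bartheta^{\pi_k}_{k,h}\|^2\le dH^2$).

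Where you go beyond the paper is the dependence issue in your last paragraph, and you are right that it is genuine: a verbatim transcription of the previous lemma's proof would write $\E_k\big[\hatcov_{k,h}\cdot(\text{$S'$-sum})\big]=\E_k\big[\hatcov_{k,h}\big]\,\E_k\big[\text{$S'$-sum}\big]$ and then invoke \pref{eq: GE 5}, but this factorization requires the $S'$-sum to be conditionally independent of the $S$-data. That holds for $L_{t,h}$ (the losses are fixed by the adversary), yet fails for $D_{t,h}$, since $D_{t,h}$ is built from $\bonus_k$, which is itself a function of $\hatcov_{k,\cdot}$ and hence of the $S$-trajectories; likewise $\bonusQ_k$ and $\Lambda^{\pi_k}_{k,h}$ are $S$-measurable rather than $\calF_k$-measurable. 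The paper's one-line proof glosses over this. Your fix---condition on the $S$-portion so that $\hatcov_{k,h}$, $\bonus_k$, $\bonusQ_k$, $\Lambda^{\pi_k}_{k,h}$ are frozen, use the conditional unbiasedness of the $S'$-sum, replace the in-expectation guarantee \pref{eq: GE 5} by the pathwise guarantee \pref{eq: GE 4}, and absorb its $1/T^3$ failure event via the deterministic bound \pref{eq: GE 3}---is sound and leaves the final bound unchanged, since the failure event contributes only $\order\big(\sqrt{d}H/(\gamma T^3)\big)$. So your write-up is, if anything, a more rigorous rendering of the argument the paper intends.
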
 
\begin{proof}
    The proof is almost identical to that of the previous lemma. The only difference is that $L_{t,h}$ is replaced by $D_{t,h}$ and  $\bartheta^{\pi_k}_{t,h}$ is replaced by $\Lambda^{\pi_k}_{k,h}$ (recall the definition of $\Lambda_{k,h}^{\pi_k}$ in \pref{sec: linear MDP}). Note that $\bonus_t(x,a)\in [0,1]$ (\pref{lem: bonus bounded by 1}), so $D_{t,h}\in [0, He]$, which is also the same order for $L_{t,h}$. Therefore, we get the same bound as in the previous lemma. 
\end{proof}

\begin{lemma}\label{lem: linear MDP regret term}
    Let $\frac{\eta }{\gamma}\leq \frac{1}{16H^4}$  and  $\frac{\eta}{\beta}\leq \frac{1}{40H^4}$. Then
    \begin{align*}
        &\E[\regterm]=\otil\left(\frac{H}{\eta} + \frac{\eta\epsilon H^4 T}{\len} + \frac{\eta H^4}{\gamma^2 T^2\len}\right)  \\
        &  + 2\eta H^3\E\left[\sum_{k,h}\E_{X_h \ni  x\sim\pistar}\left[\one[x\in \calK]\sum_a\pi_k(x,a)\|\phi(x,a)\|_{\hatcov_{k,h}}^2 \right]\right]  + \frac{1}{H} \E\left[\sum_{k,h}\E_{X_h \ni  x\sim\pistar}\left[\sum_a\pi_k(x,a)\bonusQ_k(x,a) \right] \right]. 
    \end{align*}
\end{lemma}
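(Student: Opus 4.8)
The plan is to mirror the proof of the linear-$Q$ analogue \pref{lem: linear Q regret term}, with three adaptations: the update now runs over epochs rather than episodes, the estimators carry the exploration reweighting factor $g_t\defeq(1-Y_t)+Y_tH\ind[h=h_t^*]$, and the bonus is represented by the function approximation $\hatbonusQ_k(x,a)=\bonus_k(x,a)+\phi(x,a)^\top\bonusVec_{k,h}$ rather than a sampled recursion. By \pref{eq:linear mdp policy}, for each fixed state $x\in X_h$ the iterate $\pi_k(\cdot|x)$ is exactly exponential weights with loss vector $\Qht_k(x,\cdot)-\hatbonusQ_k(x,\cdot)$, so I would apply \pref{lem: exponential weight lemma} at each $x$ over the $(T-T_0)/\len$ epochs, sum the per-state bound against the $\pistar$-occupancy (i.e.\ apply $\sum_h\E_{X_h\ni x\sim\pistar}$, restricted to $x\in\calK$ as in the definition of \regterm), and then expand $(\Qht_k-\hatbonusQ_k)^2\le 2\Qht_k^2+2\hatbonusQ_k^2$. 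This reduces the claim to (i) the leading $\frac{H\ln|A|}{\eta}$ term, (ii) a bound on $2\eta\,\E\big[\sum_{k,h}\E_{X_h\ni x\sim\pistar}\sum_a\pi_k(a|x)\Qht_k(x,a)^2\big]$, and (iii) a bound on $2\eta\,\E\big[\sum_{k,h}\E_{X_h\ni x\sim\pistar}\sum_a\pi_k(a|x)\hatbonusQ_k(x,a)^2\big]$.

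Before applying the lemma I must verify its admissibility condition $\eta|\Qht_k(x,a)-\hatbonusQ_k(x,a)|\le 1$. Using $\|\hatcov_{k,h}\|_{\text{op}}\le 1/\gamma$ from \pref{eq: GE 3} together with $g_t\le H$ and $L_{t,h}\le H$ gives $\|\hattheta_{k,h}\|\le H^2/\gamma$, hence $\eta|\Qht_k(x,a)|\le \eta H^2/\gamma\le \tfrac{1}{16H^2}$ by the hypothesis $\eta/\gamma\le 1/(16H^4)$; likewise, since \pref{lem: bonus bounded by 1} yields $\bonus_k\le 1$ and thus $D_{t,h}\le eH$, I get $\|\bonusVec_{k,h}\|\le eH^2/\gamma$ and $\eta|\hatbonusQ_k(x,a)|\le\eta(1+eH^2/\gamma)\le \tfrac{1}{2H}$. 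The second estimate immediately handles term (iii): it gives $2\eta\hatbonusQ_k(x,a)^2\le\frac1H\hatbonusQ_k(x,a)$, and taking $\E_k[\cdot]$ replaces $\hatbonusQ_k$ by $\bonusQ_k$ up to estimation error, because $\bonusVec_{k,h}$ is an (almost) unbiased estimator of $\Lambda^{\pi_k}_{k,h}$ and $\bonusQ_k(x,a)=\bonus_k(x,a)+\phi(x,a)^\top\Lambda^{\pi_k}_{k,h}$ --- this is the same near-unbiasedness computation carried out in \pref{lem: linear MDP bias term1}, whose residual $\gamma$- and $\epsilon$-errors are absorbed into the $\otil(\cdot)$ terms, and where the averaging over $S'$ makes the fluctuation of $\hatbonusQ_k$ (and thus any sign issue) negligible.

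The main work, and the main obstacle, is term (ii). Writing $\Qht_k(x,a)=\frac1{|S'|}\sum_{t\in S'}g_t\,\phi(x,a)^\top\hatcov_{k,h}\phi_{t,h}L_{t,h}$ with $\phi_{t,h}=\phi(x_{t,h},a_{t,h})$, using convexity to pass to the per-sample second moment, and conditioning on $S$ so that $\hatcov_{k,h}$ is fixed and independent of the $S'$-trajectories, I would bound
\[
\E_k\!\big[\Qht_k(x,a)^2\big]\le H^2\,\phi(x,a)^\top\hatcov_{k,h}\,\E_k\!\big[g_t^2\,\phi_{t,h}\phi_{t,h}^\top\big]\,\hatcov_{k,h}\,\phi(x,a).
\]
The key algebraic observation is $g_t^2\le H g_t$ (since $g_t\in\{0,1,H\}$), which turns $\E_k[g_t^2\phi_{t,h}\phi_{t,h}^\top]$ into $H$ times the reweighted covariance $\E_k[g_t\phi_{t,h}\phi_{t,h}^\top]$ that $\hatcov_{k,h}$ is designed to invert (the mixture covariance $\cov^{\mix}_{k,h}$ of \pref{eq:mix_covariance}, exactly as in \pref{lem: linear MDP bias term}); then \pref{eq: GE 4} and \pref{eq: bounded norm prod} give $\hatcov_{k,h}\cov^{\mix}_{k,h}\hatcov_{k,h}\preceq\hatcov_{k,h}+O(\epsilon)$, yielding $\E_k[\Qht_k(x,a)^2]\le H^3\|\phi(x,a)\|^2_{\hatcov_{k,h}}+\order(\epsilon H^4)$, with the $1/T^3$ failure event of the $\GR$ bounds contributing the crude term $\eta H^4/(\gamma^2T^3)$ per epoch. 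Summing over the $(T-T_0)/\len$ epochs then produces the stated $2\eta H^3$ local-norm term and the error terms $\frac{\eta\epsilon H^4T}{\len}$ and $\frac{\eta H^4}{\gamma^2 T^2\len}$. The delicate point throughout is the extra factor of $H$ relative to the linear-$Q$ case and the decoupling of $\hatcov_{k,h}$ from the $S'$-data, which is precisely why the $S/S'$ split and the $g_t^2\le Hg_t$ identity are needed; once these are in place the remaining bookkeeping follows \pref{lem: linear Q regret term} line by line.
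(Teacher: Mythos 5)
Your setup, admissibility check, and treatment of terms (i) and (ii) do follow the paper's proof: the per-state application of \pref{lem: exponential weight lemma} restricted to $x\in\calK$, the $S/S'$ decoupling, the passage to per-sample second moments, the observation $g_t^2\le Hg_t$, and the \GR properties are exactly the paper's steps for the $\Qht_k^2$ term. The genuine gap is in your term (iii). The chain $2\eta\hatbonusQ_k(x,a)^2\le\frac{1}{H}|\hatbonusQ_k(x,a)|$ followed by ``taking $\E_k$ replaces $\hatbonusQ_k$ by $\bonusQ_k$ up to negligible error'' does not fit this lemma's error budget. First, passing from $\E_k\big[|\hatbonusQ_k|\big]$ to $\big|\E_k[\hatbonusQ_k]\big|$ costs the mean absolute deviation, which is of the order of the conditional standard deviation of $\phi(x,a)^\top\bonusVec_{k,h}$, roughly $H^2\epsilon\sqrt{\gamma}$; summed over the $T/\len$ epochs and $H$ layers this contributes a term of order $\epsilon\sqrt{\gamma}H^2T/\len$, which is \emph{not} dominated by the allowed $\eta\epsilon H^4T/\len$, because the hypothesis forces $\gamma\ge 16\eta H^4$ and hence $\sqrt{\gamma}\gg\eta H^2$. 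Second, and more fundamentally, $\E_k[\hatbonusQ_k]$ differs from $\bonusQ_k$ not merely by ``$\epsilon$-errors'': the estimator carries the $\gamma$-regularization bias $-\gamma\,\phi(x,a)^\top\big(\gamma I+\cov^{\mix}_{k,h}\big)^{-1}\Lambda^{\pi_k}_{k,h}$, which in \pref{lem: linear MDP bias term1} is handled by AM-GM and produces contributions of order $\frac{\beta}{4}\|\phi(x,a)\|_{\hatcov_{k,h}}^2$ and $\frac{\gamma dH^2}{\beta}$ per pair, i.e.\ a $\frac{\gamma dH^3T}{\beta\len}$-type term after summing. That term appears in the budgets of the two bias lemmas but not in this lemma's statement, so it cannot be ``absorbed into the $\otil(\cdot)$ terms'' here; your citation of \pref{lem: linear MDP bias term1} actually undercuts the claim.

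The paper's proof avoids comparing $\hatbonusQ_k$ with $\bonusQ_k$ altogether, which is why none of these bias or fluctuation terms arise. It bounds $\E_k[\hatbonusQ_k^2]\le 2\E_k[\bonus_k^2]+2\E_k\big[(\phi(x,a)^\top\bonusVec_{k,h})^2\big]$, controls the second moment of $\phi(x,a)^\top\bonusVec_{k,h}$ by the very covariance computation you already carried out for term (ii) (with $D_{t,h}\le 3H$ in place of $L_{t,h}\le H$), obtaining roughly $9H^3\|\phi(x,a)\|_{\hatcov_{k,h}}^2$, and then invokes the purely structural, pointwise inequality $\beta\|\phi(x,a)\|_{\hatcov_{k,h}}^2\le\bonus_k(x,a)$, valid for $x\in\calK$ by the definition of $\bonus_k$, together with $\bonus_k\le 1$, to get $\E_k[\hatbonusQ_k^2]\le\frac{20H^3}{\beta}\bonus_k(x,a)+\mathrm{err}$. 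The second hypothesis $\frac{\eta}{\beta}\le\frac{1}{40H^4}$ is then exactly what turns $2\eta\cdot\frac{20H^3}{\beta}\bonus_k$ into $\frac{1}{H}\bonus_k\le\frac{1}{H}\bonusQ_k$. The fact that your argument never invokes this hypothesis is the telltale sign of the gap: any proof of the stated bound needs it (or an equivalent) to relate the bonus-squared term to $\frac{1}{H}\bonusQ_k$.
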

\begin{proof}
    We first check the condition for \pref{lem: exponential weight lemma}: $\eta \left|\Qht_k(x,a) - \hatbonusQ_t(x,a)\right|\leq 1$. 
    In our case, 
    \begin{align*}
        \eta \left|\Qht_k(x,a)\right| 
        &= \eta \left| \phi(x,a)^\top \hatcov_{k,h}\left(\frac{1}{|S'|}\sum_{t\in S'} ((1-Y_t) + Y_t H\ind[h=h_t^*]) \phi(x_{t,h},a_{t,h})L_{t,h} \right) \right|  \\
        &\leq \eta \times \|\hatcov_{k,h}\|_{\text{op}}\times H \times \sup_{t\in S'} L_{t,h} \\
        &\leq \eta \times \frac{1}{\gamma} \times H^2 \tag{by \pref{lem: GR lemma contingency}} \\ 
        &\leq \frac{1}{2} \tag{by the condition specified in the lemma}
    \end{align*}
    and 
    \begin{align*}
        \eta\left|\hatbonusQ_k(x,a)\right| &\leq \eta \left|\bonus_k(x,a)\right| + \eta \left| \phi(x,a)^\top  \hatcov_{k,h}\left(\frac{1}{|S'|}\sum_{t\in S'} ((1-Y_t) + Y_t H\ind[h=h_t^*]) \phi(x_{t,h},a_{t,h})D_{t,h} \right) \right| \\
        &\leq \eta + \eta \times \|\hatcov_{k,h}\|_{\text{op}} \times  H \times \sup_{t\in S'} D_{t,h} \\
        &\leq \eta + \eta \times \|\hatcov_{k,h}\|_{\text{op}} \times  H \times (H-1)\left(1+\frac{1}{H}\right)^H \tag{\pref{lem: bonus bounded by 1}}\\
        &\leq \eta + \frac{3\eta H^2}{\gamma} \leq \frac{4\eta H^2}{\gamma} \\
        &\leq \frac{1}{2H}.  \tag{by the condition specified in the lemma}
    \end{align*}
    
    Now we derive an upper bound for $\E_k\left[\Qht_k(x,a)^2\right]$: 
    \begin{align}
        &\E_k\left[\Qht_k(x,a)^2\right]  \nonumber \\
        &\leq \E_k\left[\frac{1}{|S_k'|}\sum_{t\in S_k'}  H^2\phi(x,a)^\top \hatcov_{k,h}\Big(((1-Y_t)+Y_t H \ind[h=h_t^*])^2  \phi(x_{t,h}, a_{t,h})\phi(x_{t,h}, a_{t,h})^\top\Big) \hatcov_{k,h} \phi(x,a)\right]  \tag{$*$}\\ 
        &= \E_k\left[H^2\phi(x,a)^\top \hatcov_{k,h}\Big((1-\explore)  \cov_{k,h} + \explore H \cov_{h}^{\textcov} \Big) \hatcov_{k,h} \phi(x,a)\right]  \nonumber \\
        &\leq H^3\E_k\left[\phi(x,a)^\top \hatcov_{k,h}\cov_{k,h}^{\mix} \hatcov_{k,h} \phi(x,a)\right] \nonumber \\ 
        &\leq H^3\E_k\left[\phi(x,a)^\top \hatcov_{k,h}\cov_{k,h}^{\mix} (\gamma I+\cov_{k,h}^{\mix})^{-1}\phi(x,a)\right] + \otil\left(\epsilon H^3 + \frac{H^3}{\gamma^2 T^3}\right)  \tag{\pref{lem: GR lemma contingency}} \\ 
        &\leq H^3\phi(x,a)^\top (\gamma I+\cov_{k,h}^{\mix})^{-1}\cov_{k,h}^{\mix} (\gamma I+\cov_{k,h}^{\mix})^{-1}\phi(x,a) + \otil\left(\epsilon H^3 + \frac{H^3}{\gamma^2 T^3}\right)  \tag{\pref{lem: GR lemma contingency}} \\
        &\leq H^3\phi(x,a)^\top (\gamma I+\cov_{k,h}^{\mix})^{-1}\phi(x,a) + \otil\left(\epsilon H^3 + \frac{H^3}{\gamma^2 T^3}\right)  \nonumber \\ 
        &= H^3\E_k\left[\|\phi(x,a)\|_{\hatcov_{k,h}}^2 \right] + \otil\left(\epsilon H^3 + \frac{H^3}{\gamma^2 T^3}\right),   \label{eq: follow 2}
    \end{align}
    where in $(*)$ we use
    $\left(\frac{1}{|S_k'|}\sum_{t\in S_k'}v_t\right)^2\leq \frac{1}{|S_k'|}\sum_{t\in S_k'}v_t^2$ with $v_t = \phi(x,a)^\top \hatcov_{k,h}\left((1-Y_t) + Y_t H \ind[h=h_t^*]\right)\phi(x_{t,h}, a_{t,h})L_{t,h}$.

    Next, we bound $\E_t\left[\hatbonusQ_t(x,a)^2\right]$: 
    \begin{align*}
        &\E_k\left[\hatbonusQ_k(x,a)^2\right] \\
        &\leq 2\E_k\left[\bonus_k(x,a)^2\right] + 2\E_k\left[(\phi(x,a)^\top \bonusVec_{k,h})^2\right]\\
        &\leq 2\E_k[\bonus_k(x,a)]  + 18H^3 \E_k\left[\|\phi(x,a)\|_{\hatcov_{k,h}}^2 \right] + \otil\left(\epsilon H^3 + \frac{H^3}{\gamma^2 T^3}\right)\\
        &\leq  \frac{20H^3}{\beta}  \bonus_k(x,a) + \otil\left(\epsilon H^3 + \frac{H^3}{\gamma^2 T^3}\right),
    \end{align*}
    where in the second inequality we bound $\E_k\left[(\phi(x,a)^\top \bonusVec_{k,h})^2\right]$ similarly as we bound $\E_k\left[\Qht_k(x,a)^2\right]$ in \pref{eq: follow 2}, except that we replace the upper bound $H$ for $L_{t,h}$ by the upper bound for $D_{t,h}$: $H\left(1+\frac{1}{H}\right)^H  \sup_{t,x,a}\bonus_t(x,a) \leq 3H$ (since $\bonus_t(x,a)\leq 1$ by \pref{lem: bonus bounded by 1}). 
    
    Thus, by \pref{lem: exponential weight lemma}, we have 
    \begin{align*}
        &\E[\regterm] \\
        &\leq \otil\left(\frac{H}{\eta}\right) + 2\eta\sum_{k,h}\E_{X_h \ni x\sim \pistar}\left[\one[x \in \calK] \sum_{a} \pi_k(a|x) (\Qht_k(x,a)^2 + \hatbonusQ_k(x,a)^2)\right]  \\
        &\leq \otil\left(\frac{H}{\eta} + \frac{\eta\epsilon H^4T}{\len} + \frac{\eta H^4}{\gamma^2 T^2\len}\right) \\
        & + 2\eta H^3\E\left[\sum_{k, h}\E_{X_h \ni x\sim \pistar}\left[\one[x \in \calK] \sum_{a} \pi_k(x,a)\|\phi(x,a)\|_{\hatcov_{k,h}}^2 \right]\right] + \frac{40\eta H^3}{\beta} \E\left[\sum_{k,h}\E_{X_h \ni x\sim \pistar}\left[ \sum_{a}\pi_k(a|x)\bonus_k(x,a) \right]\right] \\
        &\leq \otil\left(\frac{H}{\eta} + \frac{\eta\epsilon H^4T}{\len} + \frac{\eta H^4}{\gamma^2 T^2\len}\right) \\
        &+ 2\eta H^3\E\left[\sum_{k, h}\E_{X_h \ni x\sim \pistar}\left[\one[x \in \calK] \sum_{a} \pi_k(x,a)\|\phi(x,a)\|_{\hatcov_{k,h}}^2 \right]\right] + \frac{1}{H} \E\left[\sum_{k,h}\E_{X_h \ni x\sim \pistar}\left[ \sum_{a}\pi_k(a|x)\Bonus_k(x,a) \right]\right] 
    \end{align*}
    where in the last inequality we use the conditions specified in the lemma and that $\bonusQ_k(x,a)\geq \bonus_k(x,a)$. 
\end{proof}

\end{document}